\newcommand{\cond}[2]{#1}
\newcommand{\cond}[2]{#2}
\renewcommand{\geq}{\geqslant}
\renewcommand{\leq}{\leqslant}
\renewcommand{\preceq}{\preccurlyeq}
\renewcommand{\succeq}{\succcurlyeq}
\newtheorem{theorem}{Theorem}[section]
\newtheorem{assumption}[theorem]{Assumption}
\newtheorem{proposition}[theorem]{Proposition}
\newtheorem{lemma}[theorem]{Lemma}
\newtheorem{definition}[theorem]{Definition}
\newtheorem{corollary}[theorem]{Corollary}
\DeclareMathOperator*{\Tr}{\mathrm{tr}}
\newcommand{\R}{\ensuremath{\mathbb{R}}}
\newcommand{\N}{\ensuremath{\mathbb{N}}}
\newcommand{\norm}[1]{\lVert #1 \rVert}
\newcommand{\bignorm}[1]{\left\lVert #1 \right\rVert}
\newcommand{\ip}[2]{\ensuremath{\langle #1, #2 \rangle}}
\newcommand{\E}{\mathbb{E}}
\newcommand{\abs}[1]{\ensuremath{| #1 |}}
\newcommand{\bigabs}[1]{\ensuremath{\left| #1 \right|}}
\newcommand{\floor}[1]{\lfloor #1 \rfloor}
\newcommand{\T}{\mathsf{T}}
\newcommand{\calC}{\mathcal{C}}
\newcommand{\calD}{\mathcal{D}}
\newcommand{\calF}{\mathcal{F}}
\numberwithin{equation}{section}
\title{Regret Bounds for Adaptive Nonlinear Control}
\author[1]{Nicholas M.\ Boffi\thanks{
Both authors contributed equally.}}
\author[2]{Stephen Tu$^*$}
\author[3,2]{Jean-Jacques E.\ Slotine}
\affil[1]{John A.\ Paulson School of Engineering and Applied Sciences, Harvard University}
\affil[2]{Google Brain Robotics}
\affil[3]{Nonlinear Systems Laboratory, Massachusetts Institute of Technology}
\date{\today}
\begin{document}

\maketitle


\begin{abstract}%
We study the problem of adaptively controlling a known discrete-time nonlinear system
subject to unmodeled disturbances. 
We prove the first finite-time regret
bounds for adaptive nonlinear control with matched uncertainty in the stochastic setting, 
showing that the regret
suffered by certainty equivalence adaptive control, compared to an oracle controller with perfect knowledge
of the unmodeled disturbances, is upper bounded by $\widetilde{O}(\sqrt{T})$ in expectation.
Furthermore, we show that when the input is subject to a $k$ timestep delay, the regret
degrades to $\widetilde{O}(k \sqrt{T})$.
Our analysis draws connections between classical stability notions in nonlinear control theory (Lyapunov stability and contraction theory) and modern regret analysis from online
convex optimization.
The use of stability theory allows us to analyze the challenging
infinite-horizon single trajectory setting.
\end{abstract}

\section{Introduction}
\label{sec:intro}

The goal of adaptive nonlinear control
\citep{slotine91book,ioannou96book,fradkov99book}
is to control a continuous-time dynamical system in the presence of unknown dynamics; it is the study of concurrent learning and control of dynamical systems. There is a rich body of literature analyzing the stability and convergence
properties of classical adaptive control algorithms. Under suitable assumptions (e.g.,\ Lyapunov stability of the known part of the system), typical results guarantee asymptotic convergence of the unknown system to a fixed point or desired trajectory.

On the other hand, due to recent successes of reinforcement learning (RL) in
the control of physical systems~\citep{yang19legged,openai2019rubiks,hwangbo19legged,williams17mpc,levine16endtoend}, there has been a flurry of research in online RL algorithms for continuous control. In contrast
to the classical setting of adaptive nonlinear control, online RL algorithms operate in discrete-time, and often
come with finite-time regret bounds~\citep{wang19optimism,kakade20nonlinear,jin20linear,cao20adaptive,cai20oppo,agarwal20boosting}. 
These bounds provide a quantitative rate at which the control performance of the online
algorithm approaches the performance of an oracle equipped with hindsight knowledge of the uncertainty.

In this work, we revisit the analysis of adaptive nonlinear control algorithms through the lens of modern reinforcement learning. 
Specifically, we show how to systematically port 
matched uncertainty adaptive control algorithms to discrete-time, and we use
the machinery of online convex optimization~\citep{hazan16oco} to prove finite-time
regret bounds. Our analysis uses the notions of contraction and
incremental stability~\citep{lohmiller98contraction,angeli02incrementalstability}
to draw a connection between control regret, the quantity we are interested in, and function prediction regret, the quantity online convex optimization enables us to bound.

We present two main sets of results.
First, we provide a discrete-time analysis of \emph{velocity gradient}
adaptation~\citep{fradkov99book}, a broad framework which encompasses e.g.,\ classic adaptive sliding control~\citep{slotine86sliding}.
We prove that in the deterministic setting, if a Lyapunov function describing the nominal system is strongly convex in the state, then the corresponding velocity gradient algorithm achieves constant regret with respect to a baseline controller having full knowledge of the system.
Our second line of results considers the use of online least-squares \emph{gradient based optimization} for the parameters. Under an incremental input-to-state stability assumption,
we prove $\widetilde{O}(\sqrt{T})$ regret bounds in the presence of stochastic process noise.
We further show that when the input is delayed by $k$ timesteps, the regret degrades to $\widetilde{O}(k\sqrt{T})$.
Importantly, our bounds hold for the challenging single trajectory infinite horizon setting, 
rather than the finite-horizon episodic setting more frequently studied in reinforcement learning.
We conclude with simulations showing the efficacy of
our proposed discrete-time algorithms 
in quickly adapting to unmodeled disturbances.
\cond{Proofs and more details can be found in the full version of the paper~\citep{boffi20fullpaper}.}{}
\section{Related Work}
\label{sec:related}

There has been a renewed focus on the continuous state and action space
setting in the reinforcement learning (RL) literature.
The most well-studied problem for continuous control in RL
is the 
Linear Quadratic Regulator (LQR) problem with unknown dynamics.
For LQR, both upper and lower bounds achieving $\sqrt{T}$ regret
are available~\citep{abbasiyadkori11regret,agarwal19onlinecontrol,mania19CE,cohen19sqrtT,simchowitz20naiveexploration,hazan20nonstochastic}, for stochastic and adversarial noise processes.
Furthermore, in certain settings it is even possible to obtain
logarithmic regret~\citep{agarwal19logarithmic,cassel20logarithmic,foster20logarithmic}.

Results that extend beyond the classic LQR problem are
less complete, but are rapidly growing.
Recently, \cite{kakade20nonlinear}
showed $\sqrt{T}$ regret bounds in the finite horizon episodic setting
for dynamics of the form $x_{t+1} = A \phi(x_t, u_t) + w_t$
where $A$ is an unknown operator and $\phi$ is a known feature map, though their algorithm
is generally not tractable to implement.
\cite{mania20nonlinear} show how to actively recover the parameter matrix $A$
using trajectory optimization.
\cite{azizzadenesheli18exploration,jin20linear,yang20linearmdp,zanette20frequentist} show $\sqrt{T}$ regret bounds for \emph{linear MDPs},
which implies that the associated $Q$-function is linear after a known feature transformation.
\cite{wang19optimism} extend this model to allow for generalized linear model $Q$-functions.
Unlike the stability notions considered in this work, we are unaware of any algorithmic method of verifying the
linear MDP assumption.
Furthermore, the aforementioned regret bounds
are for the finite-horizon episodic setting; we study the infinite-horizon single trajectory setting 
without resets.

Very few results categorizing regret bounds for adaptive nonlinear control exist; one recent example is~\citet{gaudio19connections},
who highlight that simple model reference adaptive controllers obtain constant regret in the continuous-time deterministic setting. 
In contrast, our work simultaneously tackles the
issues of more general models, discrete-time systems, and stochastic noise.
We note that several authors have ported various adaptive controllers
into discrete-time~\citep{pieper96discrete,bartolini95discrete,loukianov18discrete,munoz00discrete,kanellakopoulos94discrete,ordonez06discrete}.
These results, however, are mostly concerned with asymptotic stability of the closed-loop system,
as opposed to finite-time regret bounds.

\section{Problem Statement}
\label{sec:setup}

In this work, we focus on the following discrete-time\footnote{
Discrete-time systems may arise as a modeling decision, or due to finite sampling rates for the input, e.g.,\ a continuous-time controller implemented on a computer. In \cond{the full paper}{Appendix~\ref{sec:app:c2d}}, we study the latter situation,
giving bounds on the rate for which a continuous-time controller must be sampled
such that discrete-time closed-loop stability holds.}, time-varying, and nonlinear dynamical system with linearly parameterized unknown in the matched uncertainty setting:
\begin{align}
    x_{t+1} = f(x_t, t) + B(x_t, t)(u_t - Y(x_t, t)\alpha) + w_t \:. \label{eq:system}
\end{align}
Here $x_t \in \R^n$, $u_t \in \R^d$, $f:\R^n\times\N\rightarrow \R^n$ is a known nominal dynamics model, $B:\R^n\times\N \rightarrow \R^{n\times d}$ is a known input matrix, $Y: \R^n\times\N \rightarrow \R^{d\times p}$ is a matrix of known basis functions, and $\alpha \in \R^p$ is a vector of unknown parameters. The sequence of noise vectors $\{w_t\} \subseteq \R^n$ is assumed to satisfy the distributional requirements $\E[w_t] = 0$, $\norm{w_t} \leq W$ almost surely,
and that $w_s$ is independent of $w_t$ for all $s \neq t$.
We further assume that $\alpha \in \calC := \{ \alpha \in \R^p : \norm{\alpha} \leq D \}$,
and that an upper bound for $D$ is known.
Without loss of generality, we set the origin to be a fixed-point of the nominal
dynamics, so that $f(0, t) = 0$ for all $t$. Because the nominal dynamics is time-varying, this formalism captures the classic setting of nonlinear adaptive control, which considers the problem of tracking a time-varying desired trajectory $x^d_t$\footnote{To see this, consider a system $y_{t+1} = g(y_t, t) + B(y_t, t)\left(u_t - Y(y_t, t)\alpha\right)$
and a desired trajectory $y_t^d$ satisfying $y_{t+1}^d = g(y_t^d, t)$.
Define the new variable $x_t := y_t - y_t^d$. Then $x_{t+1} = g(x_t + y_t^d, t) - g(y_t^d, t) + B(x_t + y_t^d, t)\left(u_t - Y(x_t + y_t^d, t)\alpha\right)$, so that the nominal dynamics $f(x_t, t) = g(x_t + y_t^d, t) - g(y_t^d, t)$ satisfies $f(0, t) = 0$ for all $t$. If the original $y_t$ system is non-autonomous, the time-dependent desired trajectory will introduce a time-dependent nominal dynamics in the $x_t$ system.}. 

We study \emph{certainty equivalence} controllers.
In particular, we maintain a parameter estimate $\hat{\alpha}_t \in \calC$
and play the input $u_t = Y(x_t, t) \hat{\alpha}_t$. Our goal is to design a learning algorithm that updates $\hat{\alpha}_t$ to cancel the unknown and which provides a guarantee of fast convergence to the performance of an ideal \emph{comparator}. The comparator that we will study is an oracle that plays the ideal control $u_t = Y(x_t, t) \alpha$ at every timestep, leading to the dynamics $x_{t+1} = f(x_t, t) + w_t$.
To measure the rate of convergence to this comparator,
we study the following notion of \emph{control regret}:
\begin{align}
    \mathsf{Regret}(T) := \E_{\{w_t\}}\left[ \sum_{t=0}^{T-1} \norm{x_t^a}^2 - \norm{x_t^c}^2 \right] \:.
\end{align}
Here, the trajectory $\{x_t^a\}$ is generated by an adaptive control algorithm, while the trajectory $\{x_t^c\}$ is generated by the oracle with access to the true parameters $\alpha$. Our notation for $x_t^a$ and $x_t^c$ suppresses the dependence of the trajectory
on the noise sequence $\{w_t\}$. 
Our goal will be to design algorithms that exhibit \emph{sub-linear} regret,
i.e.,\ $\mathsf{Regret}(T) = o(T)$, which ensures that the time-averaged regret asymptotically converges to zero.
For ease of exposition,
in the sequel we define $Y_t := Y(x_t^a, t)$ and $B_t := B(x_t^a, t)$, and we use the symbol $\tilde{\alpha}_t$ to denote the parameter estimation error $\ \hat{\alpha}_t - \alpha$.

\subsection{Parameter Update Algorithms}

We study two primary classes of parameter update algorithms inspired by online convex optimization~\citep{hazan16oco}.
The first is the family of \emph{velocity gradient algorithms}~\citep{fradkov99book},
which perform online gradient-based optimization on a Lyapunov function for the nominal system.
The second obviates the need for a known Lyapunov function, and directly performs online optimization on the least-squares prediction error. Here we discuss the discrete-time formulation, but a self-contained introduction to these algorithms in continuous-time can be found in \cond{the full paper}{Appendix~\ref{sec:app:vel_grad_cont}}.

\subsubsection{Velocity gradient algorithms}
\label{ssec:vg}
Velocity gradient algorithms exploit access to a known Lyapunov function for the nominal dynamics. Specifically, assume the existence of a non-negative function $Q(x, t):\R^n\times\N \rightarrow \R_{\geq 0}$, which is differentiable in its first argument, and a constant $\rho \in (0, 1)$
such that for all $x, t$:
\begin{align}
    Q(f(x, t), t+1) \leq Q(x, t) - \rho \norm{x}^2 \:. \label{eq:lyapunov_stability}
\end{align}
Given such a $Q(x, t)$, velocity gradient methods update the parameters according to the iteration
\begin{align}
    \hat{\alpha}_{t+1} = \Pi_{\calC}[\hat{\alpha}_t - \eta_t Y(x_t, t)^\T B(x_t, t)^\T \nabla Q(x_{t+1}, t+1)] \:, \:\: \Pi_{\calC}[x] := \arg\min_{y \in \calC} \norm{x-y} \:, \label{eq:speed_gradient}
\end{align}
which can alternatively be viewed as projected gradient descent with respect to the parameters after noting that $Y(x_t, t)^\T B(x_t, t)^\T \nabla Q(x_{t+1}, t+1) = \nabla_{\hat{\alpha}_t}Q(x_{t+1}, t+1)$.
As we will demonstrate, the use of $\nabla Q(x_{t+1}, t+1)$ instead of
$\nabla Q(x_t, t)$ in \eqref{eq:speed_gradient} is key to unlocking a sublinear regret bound.

\subsubsection{Online least-squares}
\label{ssec:ols}

Online least-squares algorithms are motivated by minimizing the approximation error directly rather than through stability considerations. For each time $t$, define the prediction error loss function 
%
\begin{align}
    f_t(\hat{\alpha}) := \frac{1}{2} \norm{B(x_t, t) Y(x_t, t) (\hat{\alpha} - \alpha) + w_t}^2 \:. \label{eq:prediction_loss}
\end{align}
Unlike in the usual optimization setting, the loss at time $t$ is unknown to the controller, due to its dependence on the unknown parameters $\alpha$. However, its gradient $\nabla f_t(\hat{\alpha}_t)$ can be implemented
after observing $x_{t+1}$ through a discrete-time analogue of Luenberger's well-known approach for reduced-order observer design~\citep{luenberger79book}\footnote{We note that implementing this gradient update rule in continuous-time is substantially more involved; see Appendix~\ref{sec:app:vel_grad_cont} for a discussion.}:
\begin{align}
    \nabla f_t(\hat{\alpha}_t) = Y(x_t, t)^\T B(x_t, t)^\T (x_{t+1} - f(x_t, t)) \:.
\end{align}
The simplest update rule that uses the gradient $\nabla f_t(\hat{\alpha}_t)$ is online gradient descent:
\begin{align}
    \hat{\alpha}_{t+1} = \Pi_{\calC}[ \hat{\alpha}_t - \eta_t \nabla f_t(\hat{\alpha}_t) ] \:, \label{eq:online_gd}
\end{align}
while a more sophisticated update rule is the online Newton method:
\begin{align}
    \hat{\alpha}_{t+1} = \Pi_{\calC,t}[ \hat{\alpha}_t - \eta A_t^{-1} \nabla f_t(\hat{\alpha}_t) ] \:, \:\: A_t = \lambda I + \sum_{s=0}^{t} M_s^\T M_s \:, \:\: M_s = B(x_s, s) Y(x_s, s) \:. \label{eq:online_newton}
\end{align}
Above, the operator $\Pi_{\calC,t}[\cdot]$
denotes projection w.r.t.\ the $A_t$-norm: $\Pi_{\calC,t}[x] := \arg\min_{y \in \calC} \norm{x - y}_{A_t}$.
\section{Regret Bounds for Velocity Gradient Algorithms}
\label{sec:speed_gradient}

In this section, we provide a regret analysis for the velocity gradient algorithm. Here, we will assume a deterministic system, so that $w_t \equiv 0$.
Unrolling the Lyapunov stability assumption \eqref{eq:lyapunov_stability}
and using the non-negativity of $Q(x, t)$ yields
$\sum_{t=0}^{T-1} \norm{x_t^c}^2 \leq \frac{Q(x_0, 0)}{\rho}$,
which shows that the contribution of
$\sum_{t=0}^{T-1} \norm{x_t^c}^2$ to the regret is $O(1)$.
Therefore, it suffices to bound $\sum_{t=0}^{T-1} \norm{x_t^a}^2$ directly. The key assumption that enables application of the velocity gradient method in discrete-time is strong convexity of the Lyapunov function $Q(x, t)$ with respect to $x$.
Recall that a $C^1$ function $h(x)$ is $\mu$-strongly convex
if for all $x$ and $y$,
$h(y) \geq h(x) + \ip{\nabla h(x)}{y-x} + \frac{\mu}{2}\norm{y-x}^2$.
Our first result is a data-dependent regret bound for the velocity gradient algorithm.
\begin{restatable}{theorem}{speedgradientmain}
\label{thm:speed_grad_data_dependent_regret}
Fix a $\lambda > 0$.
Consider the velocity gradient update \eqref{eq:speed_gradient}
with $\hat{\alpha}_0 \in \calC$ and learning rate
$\eta_t = \frac{D}{\sqrt{\lambda + \sum_{i=0}^{t} \norm{Y_i^\T B_i^\T \nabla Q (x_{i+1}^a, i+1)}^2}}$.
Assume that the Lyapunov stability condition \eqref{eq:lyapunov_stability} is verified,
and that for every $t$, the map $x \mapsto Q(x, t)$ is $\mu$-strongly convex.
Then
for any $T \geq 1$:
\begin{align*}
    \sum_{t=0}^{T-1} \norm{x_t^a}^2 + \frac{\mu}{2\rho} \sum_{t=0}^{T-1} \norm{B_t Y_t \tilde{\alpha}_t}^2 \leq \frac{Q(x_0, 0)}{\rho} + \frac{5\sqrt{\lambda}D}{\rho} + \frac{3D}{\rho} \sqrt{ \sum_{t=0}^{T-1} \norm{Y_t^\T B_t^\T \nabla Q(x_{t+1}^a, t+1)}^2 } \:.
\end{align*}
\end{restatable}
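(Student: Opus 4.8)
The plan is to telescope the Lyapunov inequality \eqref{eq:lyapunov_stability} along the \emph{actual} closed-loop trajectory and to reduce the residual terms to the regret of projected online gradient descent. Under the certainty-equivalence input $u_t = Y_t\hat{\alpha}_t$ the closed loop reads $x_{t+1}^a = f(x_t^a,t) + B_tY_t\tilde{\alpha}_t$. The first key step is to invoke $\mu$-strong convexity of $x\mapsto Q(x,t+1)$ \emph{at the point} $x_{t+1}^a$ with the nominal state $f(x_t^a,t)$ playing the role of the second argument; since $f(x_t^a,t) - x_{t+1}^a = -B_tY_t\tilde{\alpha}_t$, this gives
\begin{align*}
    Q(x_{t+1}^a,t+1) \le Q(f(x_t^a,t),t+1) + \ip{Y_t^\T B_t^\T \nabla Q(x_{t+1}^a,t+1)}{\tilde{\alpha}_t} - \frac{\mu}{2}\norm{B_tY_t\tilde{\alpha}_t}^2 \:.
\end{align*}
Combining this with the nominal decrease $Q(f(x_t^a,t),t+1)\le Q(x_t^a,t)-\rho\norm{x_t^a}^2$, summing over $t=0,\dots,T-1$ so that the left side telescopes, and dropping the nonnegative term $Q(x_T^a,T)$, yields
\begin{align*}
    \rho\sum_{t=0}^{T-1}\norm{x_t^a}^2 + \frac{\mu}{2}\sum_{t=0}^{T-1}\norm{B_tY_t\tilde{\alpha}_t}^2 \le Q(x_0,0) + \sum_{t=0}^{T-1}\ip{g_t}{\tilde{\alpha}_t} \:, \qquad g_t := Y_t^\T B_t^\T \nabla Q(x_{t+1}^a,t+1) \:.
\end{align*}

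The second step is to recognize that $\sum_t \ip{g_t}{\tilde{\alpha}_t} = \sum_t \ip{g_t}{\hat{\alpha}_t-\alpha}$ is exactly the regret, against the admissible fixed comparator $\alpha\in\calC$, of the projected online gradient descent update \eqref{eq:speed_gradient} run on the linear losses $\hat{\alpha}\mapsto\ip{g_t}{\hat{\alpha}}$ with the adaptive step size $\eta_t = D/\sqrt{\lambda+\sum_{i\le t}\norm{g_i}^2}$. I would bound it by the standard adaptive-OGD argument: nonexpansiveness of $\Pi_{\calC}$ gives $\ip{g_t}{\tilde{\alpha}_t}\le \frac{1}{2\eta_t}\big(\norm{\hat{\alpha}_t-\alpha}^2-\norm{\hat{\alpha}_{t+1}-\alpha}^2\big) + \frac{\eta_t}{2}\norm{g_t}^2$; summing, the first part telescopes (using that $\eta_t$ is nonincreasing and that $\calC$ has diameter $2D$) to at most $2D^2/\eta_{T-1} = 2D\sqrt{\lambda+\sum_t\norm{g_t}^2}$, while $\sum_t \eta_t\norm{g_t}^2 \le 2D\sqrt{\sum_t\norm{g_t}^2}$ by the elementary inequality $\sum_t a_t/\sqrt{\lambda+\sum_{i\le t}a_i}\le 2\sqrt{\sum_t a_t}$ for $a_t\ge 0$. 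Using $\sqrt{\lambda+S}\le\sqrt{\lambda}+\sqrt{S}$ collects these into $\sum_t\ip{g_t}{\tilde{\alpha}_t}\le 5\sqrt{\lambda}D + 3D\sqrt{\sum_t\norm{g_t}^2}$. Substituting into the previous display and dividing through by $\rho$ produces the claimed bound.

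I expect the only genuinely delicate point to be the first step: using strong convexity "in reverse" so that the error from evaluating $Q$ at the true next state $x_{t+1}^a$ instead of the nominal $f(x_t^a,t)$ is controlled by (i) a linear term that is \emph{precisely} the implementable gradient $g_t=\nabla_{\hat{\alpha}}Q(x_{t+1}^a,t+1)$ appearing in \eqref{eq:speed_gradient}, and (ii) a favorable negative quadratic $-\frac{\mu}{2}\norm{B_tY_t\tilde{\alpha}_t}^2$ that survives the telescoping and becomes the extra $\frac{\mu}{2\rho}\sum\norm{B_tY_t\tilde{\alpha}_t}^2$ on the left. This is exactly why the update evaluates $\nabla Q$ at $x_{t+1}$ rather than $x_t$; everything afterwards is bookkeeping plus off-the-shelf online convex optimization, and the precise OCO constants (e.g.\ $5$ versus the $2$ one obtains from the bound above) are a minor accounting matter.
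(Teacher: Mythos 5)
Your proposal is correct and is essentially the paper's proof: the same reverse use of $\mu$-strong convexity at $x_{t+1}^a$ (yielding the implementable gradient $g_t$ and the negative quadratic $-\tfrac{\mu}{2}\norm{B_tY_t\tilde{\alpha}_t}^2$), the same Lyapunov telescoping, and the same adaptive-step-size OGD accounting via the inequality $\sum_t a_t/\sqrt{\sum_{i\le t}a_i}\le 2\sqrt{\sum_t a_t}$. The only difference is organizational --- the paper folds $Q(x_t^a,t)$ and $\tfrac{1}{2\eta_{t-1}}\norm{\tilde{\alpha}_t}^2$ into a single potential $V_t$ and telescopes once, whereas you telescope $Q$ first and then invoke the OGD regret bound for $\sum_t\ip{g_t}{\tilde{\alpha}_t}$ separately --- which is the same computation and in fact yields a slightly sharper constant than the stated $5\sqrt{\lambda}D/\rho$.
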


By Theorem~\ref{thm:speed_grad_data_dependent_regret}, a
bound on $\sum_{t=0}^{T-1} \norm{Y_t^\T B_t^\T \nabla Q(x_{t+1}^a, t+1)}^2$ ensures a bound on the control regret.
One way to obtain a bound is to assume that 
$\norm{Y_t^\T B_t^\T \nabla Q(x_{t+1}^a, t+1)} \leq G$ for all $t$,
in which case Theorem~\ref{thm:speed_grad_data_dependent_regret}
yields the sublinear guarantee $\mathsf{Regret}(T) \leq O(\sqrt{T})$.
However, this can be strengthened by assuming that both $\nabla Q(x, t)$ and $f(x, t)$ are Lipschitz
continuous.

\begin{restatable}{theorem}{speedgradlipschitz}
\label{thm:speed_grad_lipschitz}
Suppose that for every $x$ and $t$, $\norm{\nabla Q(x, t)} \leq L_Q \norm{x}$ and $\norm{f(x, t)} \leq L_f \norm{x}$. Further assume that $\sup_{x,t} \norm{B(x, t)} \leq M$ and
$\sup_{x,t} \norm{Y(x, t)} \leq M$.
Then, under the hypotheses of Theorem~\ref{thm:speed_grad_data_dependent_regret},
for any $T \geq 1$:
\begin{align*}
    \sum_{t=0}^{T-1} \norm{x_t^a}^2 + \frac{\mu}{2\rho} \sum_{t=0}^{T-1} \norm{B_t Y_t \tilde{\alpha}_t}^2 \leq \frac{3}{2}\left( \frac{Q(x_0, 0)}{\rho} + \frac{5\sqrt{\lambda}D}{\rho}  \right) + \frac{27 D^2}{\rho^2} M^4 L_Q^2 \max\left\{L_f^2, \frac{2\rho}{\mu}\right\} \:.
\end{align*}
\end{restatable}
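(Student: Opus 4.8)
The plan is to feed the Lipschitz hypotheses into the data-dependent bound of Theorem~\ref{thm:speed_grad_data_dependent_regret} and then close the resulting self-referential scalar inequality. Throughout, recall that in this section $w_t \equiv 0$, so the adaptive trajectory satisfies $x_{t+1}^a = f(x_t^a, t) + B_t(Y_t \hat\alpha_t - Y_t\alpha) = f(x_t^a, t) + B_t Y_t \tilde\alpha_t$.

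\textbf{Step 1: bound the gradient energy.} Using submultiplicativity of the operator norm together with $\sup_{x,t}\norm{B(x,t)} \le M$, $\sup_{x,t}\norm{Y(x,t)} \le M$, and $\norm{\nabla Q(x,t)} \le L_Q\norm{x}$, I would first get $\norm{Y_t^\T B_t^\T \nabla Q(x_{t+1}^a, t+1)}^2 \le M^4 L_Q^2 \norm{x_{t+1}^a}^2$. Then, re-expressing $x_{t+1}^a$ via the collapsed adaptive dynamics above and applying $\norm{f(x,t)} \le L_f\norm{x}$ together with $(a+b)^2 \le 2a^2 + 2b^2$ yields $\norm{x_{t+1}^a}^2 \le 2L_f^2\norm{x_t^a}^2 + 2\norm{B_t Y_t \tilde\alpha_t}^2$. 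The crucial structural point is that the two terms on the right are, up to constants, exactly the two sums appearing on the left-hand side of Theorem~\ref{thm:speed_grad_data_dependent_regret}.

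\textbf{Step 2: assemble the self-referential inequality.} Write $Z := \sum_{t=0}^{T-1}\norm{x_t^a}^2 + \frac{\mu}{2\rho}\sum_{t=0}^{T-1}\norm{B_t Y_t \tilde\alpha_t}^2$ and $\gamma := \max\{L_f^2, 2\rho/\mu\}$. Summing Step 1 over $t$ and using $L_f^2 \le \gamma$ and $1 \le \gamma\cdot\frac{\mu}{2\rho}$ gives $\sum_{t=0}^{T-1}\norm{Y_t^\T B_t^\T \nabla Q(x_{t+1}^a, t+1)}^2 \le 2M^4 L_Q^2\bigl(L_f^2\sum\norm{x_t^a}^2 + \sum\norm{B_tY_t\tilde\alpha_t}^2\bigr) \le 2M^4 L_Q^2\gamma Z$. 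Plugging this into Theorem~\ref{thm:speed_grad_data_dependent_regret} produces an inequality of the form $Z \le C + b\sqrt{Z}$, where $C = \frac{Q(x_0,0)}{\rho} + \frac{5\sqrt\lambda D}{\rho}$ and $b = \frac{3D}{\rho}\sqrt{2M^4 L_Q^2\gamma}$.

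\textbf{Step 3: solve.} Treating this as a quadratic in $\sqrt Z$ gives $\sqrt Z \le \tfrac12\bigl(b + \sqrt{b^2+4C}\bigr)$; squaring and applying the AM--GM estimate $b\sqrt{b^2+4C} \le b(b + 2\sqrt C) \le 2b^2 + C$ yields $Z \le \tfrac32(C + b^2)$. Since $\tfrac32 b^2 = \frac{27 D^2}{\rho^2} M^4 L_Q^2\gamma$ and $\gamma = \max\{L_f^2, 2\rho/\mu\}$, this is precisely the claimed bound. The only genuine subtlety is the circular structure — the data-dependent term on the right of Theorem~\ref{thm:speed_grad_data_dependent_regret} is itself governed by the regret we wish to bound — and it is resolved by the elementary fact $x \le a + b\sqrt x \Rightarrow x \le \tfrac32(a + b^2)$; the remaining steps are bookkeeping, and note in particular that the index shift from $x_{t+1}^a$ to $x_t^a$ causes no boundary terms because the dynamics identity is used pointwise rather than by re-indexing a sum.
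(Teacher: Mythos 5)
Your proposal is correct and follows essentially the same route as the paper's proof: bound the gradient-energy sum by $2M^4L_Q^2\max\{L_f^2,2\rho/\mu\}$ times the regret quantity itself using the collapsed dynamics $x_{t+1}^a = f(x_t^a,t)+B_tY_t\tilde\alpha_t$, then close the resulting inequality $R \le A + B\sqrt{R}$ via $R \le \tfrac32(A+B^2)$. The only difference is that you spell out the derivation of that last elementary implication, which the paper merely asserts.
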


Theorem~\ref{thm:speed_grad_lipschitz} yields the constant bound 
$\mathsf{Regret}(T) \leq O(1)$, which mirrors an earlier result
in the continuous-time deterministic setting due to~\citet{gaudio19connections}.
\section{Regret Bounds for Online Least-Squares Algorithms}
\label{sec:online_ls}

In this section we study the use of online least-squares algorithms for
adaptive control in the stochastic setting.
A core challenge in this setting is that neither $\E \sum_{t=0}^{T-1} \norm{x_t^a}^2$ nor $\E \sum_{t=0}^{T-1} \norm{x_t^c}^2$ converges to a constant, but rather each grows as $\Omega(T)$. Any analysis yielding a sublinear regret bound must therefore consider the behavior of the trajectory $x_t^a$ \emph{together with} the trajectory $x_t^c$, and cannot bound the two terms independently.
Our approach couples the trajectories together with the same noise realization $\{w_t\}$, and then utilizes incremental stability to compare trajectories of the comparator and the adaptation algorithm. We first provide a brief introduction to contraction and incremental stability, and then we discuss our results.

\subsection{Contraction and Incremental Stability}
\label{sec:online_ls:incr_stability}
To prove regret bounds for our least-squares algorithms, we use the following generalization of input-to-state stability, which allows for a direct comparison between two trajectories of the system in terms of the strength of past inputs.
\begin{definition}[cf.\ \cite{angeli02incrementalstability}]
\label{def:e_delta_iss}
Let constants $\beta, \gamma$ be positive and $\rho \in (0, 1)$.
The discrete-time dynamical system $f(x, t)$ is
called $(\beta, \rho, \gamma)$-\emph{exponentially-incrementally-input-to-state-stable} (E-$\delta$ISS) for a pair of initial conditions $(x_0, y_0)$ and signal $u_t$ (which is possibly adapted to the history $\{x_s\}_{s \leq t}$)
if the trajectories
$x_{t+1} = f(x_t, t) + u_t$ and $y_{t+1} = f(y_t, t)$
satisfy for all $t \geq 0$:
\begin{align}
    \norm{x_t - y_t} \leq \beta \rho^{t} \norm{x_0 - y_0} + \gamma \sum_{k=0}^{t-1} \rho^{t-1-k} \norm{u_k} \:. \label{eq:e_delta_iss_ineq}
\end{align}
A system is $(\beta,\rho,\gamma)$-E-$\delta$ISS if it
is $(\beta,\rho,\gamma)$-E-$\delta$ISS for all initial conditions
$(x_0, y_0)$ and signals $u_t$.
\end{definition}

Definition~\ref{def:e_delta_iss} can be verified by 
checking if the system $f(x, t)$ is \emph{contracting}.
\begin{definition}[cf.\ \cite{lohmiller98contraction}]
The discrete-time dynamical system $f(x, t)$
is contracting with rate $\gamma \in (0, 1)$
in the metric $M(x, t)$
if
for all $x$ and $t$:
\begin{align*}
    \frac{\partial f}{\partial x}(x, t)^\T M(f(x, t), t+1) \frac{\partial f}{\partial x}(x, t) \preceq \gamma M(x, t) \:.
\end{align*}
\end{definition}

\begin{restatable}{proposition}{contractionincrstability}
\label{prop:contraction_implies_e_delta_iss}
Let $f(x, t)$ be contracting with rate $\gamma \in (0, 1)$ in the
metric $M(x, t)$.
Assume that for all $x, t$
we have $0 \prec \mu I \preceq M(x, t) \preceq L I$.
Then $f(x, t)$ is
$(\sqrt{L/\mu}, \sqrt{\gamma}, \sqrt{L/\mu})$-E-$\delta$ISS.
\end{restatable}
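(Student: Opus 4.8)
The plan is to bound the distance $\norm{x_t - y_t}$ between two trajectories by relating it to a weighted sum of the past inputs $\norm{u_k}$, using the contraction inequality as a discrete one-step contraction estimate along a path connecting the two states. First I would introduce, for each fixed $t$, the straight-line path $z(s) = y_t + s(x_t - y_t)$ for $s \in [0,1]$ and define the displacement in the metric by $V_t := (x_t - y_t)^\T M(\cdot) (x_t - y_t)$ — but since $M$ varies over the path, the cleaner route is to work with the Riemannian length $\ell_t := \int_0^1 \sqrt{\dot z(s)^\T M(z(s), t)\, \dot z(s)}\, ds$ induced by the metric $M(\cdot, t)$ at time $t$, or, more elementarily for an $\R^n$ setting with bounds $\mu I \preceq M \preceq L I$, to track the Euclidean distance directly and pay a factor $\sqrt{L/\mu}$ at the end.

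The key computational step is the one-step estimate. Writing $x_{t+1} - y_{t+1} = f(x_t, t) - f(y_t, t) + u_t = \int_0^1 \frac{\partial f}{\partial x}(z(s), t)(x_t - y_t)\, ds + u_t$, I would bound $\norm{x_{t+1} - y_{t+1}}_{M(\cdot, t+1)}$ by $\int_0^1 \norm{\frac{\partial f}{\partial x}(z(s),t)(x_t-y_t)}_{M(f(z(s),t), t+1)}\, ds + \norm{u_t}_{M(\cdot,t+1)}$ (after a suitable choice of base point for the first metric — this is the step that motivates the length/geodesic formulation, so that the metric is evaluated at $f(z(s),t)$ consistently). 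The contraction hypothesis $\frac{\partial f}{\partial x}^\T M(f(x,t),t+1)\frac{\partial f}{\partial x} \preceq \gamma M(x,t)$ then gives $\norm{\frac{\partial f}{\partial x}(z(s),t)v}_{M(f(z(s),t),t+1)} \le \sqrt{\gamma}\, \norm{v}_{M(z(s),t)}$, so the integral term contracts the length at time $t$ by $\sqrt{\gamma}$; combined with $\mu I \preceq M \preceq LI$ to convert the input term $\norm{u_t}_{M(\cdot,t+1)} \le \sqrt{L}\,\norm{u_t}$ and to pass between length-at-time-$t$ and Euclidean distance, I get a recursion of the shape $\ell_{t+1} \le \sqrt{\gamma}\, \ell_t + \sqrt{L}\, \norm{u_t}$ (with $\ell_t$ comparable to $\sqrt{\mu}\,\norm{x_t - y_t}$ from below and $\sqrt{L}\,\norm{x_t-y_t}$ from above).

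Finally I would unroll this linear recursion: $\ell_t \le \gamma^{t/2} \ell_0 + \sqrt{L} \sum_{k=0}^{t-1} \gamma^{(t-1-k)/2} \norm{u_k}$, then convert back using $\sqrt{\mu}\,\norm{x_t - y_t} \le \ell_t$ and $\ell_0 \le \sqrt{L}\,\norm{x_0 - y_0}$, yielding
\begin{align*}
    \norm{x_t - y_t} \le \sqrt{L/\mu}\, (\sqrt{\gamma})^t \norm{x_0 - y_0} + \sqrt{L/\mu} \sum_{k=0}^{t-1} (\sqrt{\gamma})^{t-1-k} \norm{u_k} \:,
\end{align*}
which is exactly the claimed $(\sqrt{L/\mu}, \sqrt{\gamma}, \sqrt{L/\mu})$-E-$\delta$ISS bound. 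The main obstacle is the first step of the one-step estimate: handling the fact that $M$ is state- and time-dependent, so that the naive "differential" argument needs care about which point the metric is evaluated at along the connecting path; the standard fix (working with the path integral / geodesic distance in the metric $M(\cdot,t)$ rather than a single quadratic form, and applying the contraction inequality pointwise along the path) is what makes the telescoping clean, and getting that bookkeeping right — rather than any hard inequality — is where the work lies. A secondary minor point is justifying the interchange of integration and norm (triangle inequality for the $M$-seminorm under the integral), which is routine.
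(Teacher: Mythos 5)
Your proposal is correct and follows essentially the same route as the paper: the paper works with the geodesic distance $d_{M_t}$, shows it contracts by $\sqrt{\gamma}$ under $f(\cdot,t)$ by pushing a curve through the dynamics and applying the contraction inequality pointwise (exactly your path-integral step), handles $u_t$ via the triangle inequality with $d_{M_{t+1}}(f(x_t,t), f(x_t,t)+u_t) \leq \sqrt{L}\norm{u_t}$, unrolls the recursion, and converts with $\sqrt{\mu}\norm{x-y} \leq d_M(x,y) \leq \sqrt{L}\norm{x-y}$. The one bookkeeping point you flag is real: the recursion must be run on the geodesic (infimal) distance rather than the fixed straight-line length, since the image of a straight line under $f$ is only \emph{one} admissible curve at time $t+1$ — and you correctly identify that switch as the fix.
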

Furthermore, contraction is robust to small perturbations -- if the dynamics
$f(x, t)$ are contracting, so are the dynamics
$f(x, t) + w_t$ for small enough $w_t$.
\begin{restatable}{proposition}{contractionwithnoise}
\label{prop:contraction_with_noise}
Let $\{w_t\}$ be a fixed sequence
satisfying $\sup_{t \geq 0} \norm{w_t} \leq W$.
Suppose that $f(x, t)$ is contracting with rate $\gamma$
in the metric $M(x, t)$ with $M(x, t) \succeq \mu I$.
Define the perturbed dynamics $g(x, t) := f(x, t) + w_t$.
Suppose that for all $t$, the function $x \mapsto M(x, t)$ is $L_M$-Lipschitz.
Furthermore, suppose that $\sup_{x, t} \norm{\frac{\partial f}{\partial x}(x, t)} \leq L_f$.
Then as long as
$W \leq \frac{\mu(1-\gamma)}{L_f^2 L_M}$,
we have that $g(x, t)$ is contracting with rate $\gamma + \frac{L_f^2 L_M W}{\mu}$
in the metric $M(x, t)$.
\end{restatable}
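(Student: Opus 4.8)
The plan is to directly verify the contraction inequality for $g$ by perturbing the known contraction inequality for $f$. The key structural observation is that $w_t$ does not depend on $x$, so $\frac{\partial g}{\partial x}(x, t) = \frac{\partial f}{\partial x}(x, t)$; hence the only change in the defining inequality is that the metric is now evaluated at $g(x, t) = f(x, t) + w_t$ rather than at $f(x, t)$. Concretely, I would write
\begin{align*}
    \frac{\partial f}{\partial x}(x, t)^\T M(g(x, t), t+1) \frac{\partial f}{\partial x}(x, t)
    = \frac{\partial f}{\partial x}(x, t)^\T M(f(x, t), t+1) \frac{\partial f}{\partial x}(x, t)
    + \frac{\partial f}{\partial x}(x, t)^\T \Delta_t \frac{\partial f}{\partial x}(x, t),
\end{align*}
where $\Delta_t := M(f(x,t) + w_t, t+1) - M(f(x,t), t+1)$, and then control each piece.

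The first term is $\preceq \gamma M(x, t)$ by the contraction hypothesis on $f$. For the second term, I would first record the elementary fact that for a symmetric matrix $S$ one has $S \preceq \norm{S} I$ (with $\norm{\cdot}$ the operator norm), so the $L_M$-Lipschitz assumption on $x \mapsto M(x, t+1)$ gives $\Delta_t \preceq \norm{\Delta_t} I \preceq L_M \norm{w_t} I \preceq L_M W\, I$. Combined with $\sup_{x,t}\norm{\frac{\partial f}{\partial x}(x,t)} \le L_f$, this yields
\begin{align*}
    \frac{\partial f}{\partial x}(x, t)^\T \Delta_t \frac{\partial f}{\partial x}(x, t) \preceq L_M W\, \frac{\partial f}{\partial x}(x, t)^\T \frac{\partial f}{\partial x}(x, t) \preceq L_f^2 L_M W\, I.
\end{align*}
Finally I would use $M(x, t) \succeq \mu I$, equivalently $I \preceq \frac{1}{\mu} M(x, t)$, to convert the last bound into a multiple of $M(x, t)$: the second term is $\preceq \frac{L_f^2 L_M W}{\mu} M(x, t)$.

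Adding the two bounds gives $\frac{\partial f}{\partial x}(x, t)^\T M(g(x, t), t+1) \frac{\partial f}{\partial x}(x, t) \preceq \left(\gamma + \frac{L_f^2 L_M W}{\mu}\right) M(x, t)$, which is exactly the contraction condition for $g$ with the claimed rate, and the hypothesis $W \le \frac{\mu(1-\gamma)}{L_f^2 L_M}$ ensures this rate does not exceed $1$. I expect no genuine obstacle here: the only points requiring care are (i) the observation that the Jacobian is unchanged by the additive disturbance, and (ii) the passage from an operator-norm Lipschitz bound on the (symmetric) matrix-valued map $M$ to the semidefinite bound $\Delta_t \preceq L_M W\, I$ — both are routine once stated explicitly.
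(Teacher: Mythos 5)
Your proposal is correct and follows essentially the same route as the paper: note that $\frac{\partial g}{\partial x} = \frac{\partial f}{\partial x}$, split off the metric perturbation $M(f(x,t)+w_t, t+1) - M(f(x,t), t+1)$, bound it in operator norm by $L_M W$ via the Lipschitz assumption, conjugate by the Jacobian to pick up $L_f^2$, and convert the resulting multiple of $I$ into a multiple of $M(x,t)$ using $M(x,t) \succeq \mu I$. No gaps.
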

Note that if the metric is state independent (i.e.,\ $M(x, t) = M(t)$), then we can
take $L_M = 0$ and hence the perturbed system $g(x, t)$ is contracting at
rate $\gamma$ for all realizations $\{w_t\}$.

\subsection{Main Results}
\label{sec:online_ls:results}

Our analysis proceeds by assuming that for almost all noise realizations $\{w_t\}$,
the perturbed nominal system $f(x, t) + w_t$ is incrementally stable
(E-$\delta$ISS). 
We apply incremental stability to bound
the control regret directly in terms of the prediction regret, $\mathsf{Regret}(T) \leq O(\sqrt{T} \sqrt{\sum_{t=0}^{T-1} \E \norm{B_t Y_t \tilde{\alpha}_t}^2} )$.
Because online convex optimization methods provide explicit guarantees on the prediction regret, we can apply existing results from the online optimization literature to generate a bound on the control regret.
To see this, recall that the sequence of prediction error functions
$\{f_t\}$ from \eqref{eq:prediction_loss} has the form $f_t(\hat{\alpha}) = \frac{1}{2} \norm{B_t Y_t(\hat{\alpha} - \alpha) + w_t}^2$.
Hence:
\begin{align*}
    \frac{1}{2}\E \sum_{t=0}^{T-1} \norm{B_t Y_t \tilde{\alpha}_t}^2 = \E\left[\sum_{t=0}^{T-1} f_t(\hat{\alpha}_t) - f_t(\alpha)\right] \leq \E\left[\sup_{\alpha \in \calC} \sum_{t=0}^{T-1} f_t(\hat{\alpha}_t) - f_t(\alpha)\right] \:.
\end{align*}

In this section, we make the following assumption regarding the
dynamics.
\begin{assumption}
\label{assumption:online_ls}
The perturbed system $g(x_t, t) := f(x_t, t) + w_t$
is $(\beta, \rho, \gamma)$-E-$\delta$ISS for all realizations
$\{w_t\}$ satisfying $\sup_{t} \norm{w_t} \leq W$.
Also
$\sup_{x, t} \norm{B(x, t)} \leq M$ and
$\sup_{x, t} \norm{Y(x, t)} \leq M$.
\end{assumption}

We define the constant $B_x := \beta \norm{x_0} + \frac{\gamma(2 D M^2 + W)}{1-\rho}$
and $G := M^2 (2DM^2 + W)$.
A key result, which relates control regret to prediction regret,
is given in the following theorem.
\begin{restatable}{theorem}{regretlsreduction}
\label{thm:regret_ls_reduction}
Consider any adaptive update rule $\{\hat{\alpha}_t\}$.
Under Assumption~\ref{assumption:online_ls}, for all $T \geq 1$:
\begin{align*}
    \E\left[\sum_{t=0}^{T-1} \norm{x_t^a}^2 - \norm{x_t^c}^2 \right] \leq \frac{2 B_x \gamma}{1-\rho} \sqrt{T} \sqrt{\sum_{t=0}^{T-1} \E\norm{B_t Y_t \tilde{\alpha}_t}^2} \:.
\end{align*}
\end{restatable}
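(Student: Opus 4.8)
The plan is to exhibit both trajectories as solutions of the \emph{same} perturbed nominal system $g(x,t) = f(x,t) + w_t$ driven by different inputs, and then to invoke the E-$\delta$ISS guarantee of Assumption~\ref{assumption:online_ls} twice: once to control the gap $\norm{x_t^a - x_t^c}$ by the strength of the ``adaptation input,'' and once (together with the fixed-point convention $f(0,t) = 0$) to obtain a uniform-in-$t$ bound on the trajectories themselves. Concretely, since $u_t = Y_t\hat{\alpha}_t$ the adaptive dynamics unroll as $x_{t+1}^a = f(x_t^a,t) + B_t Y_t\tilde{\alpha}_t + w_t = g(x_t^a,t) + B_t Y_t\tilde{\alpha}_t$, while the comparator satisfies $x_{t+1}^c = f(x_t^c,t) + w_t = g(x_t^c,t)$. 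These two runs use the same noise realization and, crucially, the same initial condition $x_0^a = x_0^c = x_0$.

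First I would fix a noise path $\{w_t\}$ with $\sup_t\norm{w_t}\le W$ and apply Definition~\ref{def:e_delta_iss} to $g$ with the history-adapted input $v_k := B_k Y_k\tilde{\alpha}_k$; this gives $\norm{x_t^a - x_t^c} \le \gamma\sum_{k=0}^{t-1}\rho^{t-1-k}\norm{B_k Y_k\tilde{\alpha}_k}$, the boundary term vanishing because the initial conditions coincide. Next, taking $w\equiv 0$ in Assumption~\ref{assumption:online_ls} shows that $f$ itself is E-$\delta$ISS, so I can compare $x_t^a$ — now viewed as $f$ driven by the input $B_k Y_k\tilde{\alpha}_k + w_k$ — against the identically-zero trajectory, which is legitimate precisely because $f(0,t)=0$. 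This yields $\norm{x_t^a} \le \beta\norm{x_0} + \frac{\gamma}{1-\rho}\sup_k\norm{B_k Y_k\tilde{\alpha}_k + w_k}$, and bounding $\norm{B_k Y_k\tilde{\alpha}_k} \le \norm{B_k}\norm{Y_k}\norm{\hat{\alpha}_k - \alpha} \le 2DM^2$ (both $\hat{\alpha}_k,\alpha\in\calC$) together with $\norm{w_k}\le W$ gives $\norm{x_t^a}\le B_x$; the same argument with input $w_k$ alone gives $\norm{x_t^c}\le B_x$.

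Combining these, I would write $\norm{x_t^a}^2 - \norm{x_t^c}^2 = (\norm{x_t^a} + \norm{x_t^c})(\norm{x_t^a} - \norm{x_t^c}) \le 2B_x\norm{x_t^a - x_t^c} \le 2B_x\gamma\sum_{k=0}^{t-1}\rho^{t-1-k}\norm{B_k Y_k\tilde{\alpha}_k}$, using the reverse triangle inequality. Summing over $t \in \{0,\dots,T-1\}$, swapping the order of summation, and bounding the inner geometric sum $\sum_{t>k}\rho^{t-1-k}$ by $\tfrac{1}{1-\rho}$ collapses the right-hand side to $\frac{2B_x\gamma}{1-\rho}\sum_{k=0}^{T-1}\norm{B_k Y_k\tilde{\alpha}_k}$. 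Taking expectations over $\{w_t\}$ and then applying Cauchy--Schwarz over the time index followed by Jensen's inequality ($(\E\norm{B_k Y_k\tilde{\alpha}_k})^2 \le \E\norm{B_k Y_k\tilde{\alpha}_k}^2$) turns $\E\sum_k\norm{B_k Y_k\tilde{\alpha}_k}$ into $\sqrt{T}\sqrt{\sum_k\E\norm{B_k Y_k\tilde{\alpha}_k}^2}$, which is exactly the claimed bound.

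The computation is largely mechanical once the reduction is set up; the one step that requires care — and the main obstacle — is the uniform trajectory bound. It is tempting to apply E-$\delta$ISS to the perturbed system $g$ directly against a zero trajectory, but $x_t\equiv 0$ is \emph{not} a trajectory of $g$ once $w_t\ne 0$. The fix is to observe that $w\equiv 0$ is an admissible realization, so the unperturbed $f$ is itself E-$\delta$ISS, and then to fold $w_k$ into the input while comparing against the genuine fixed point $0$ of $f$. This is where the $2DM^2 + W$ appearing in $B_x$ (and in $G$) originates, and it is the only place the normalizations $\norm{\alpha},\norm{\hat{\alpha}_t}\le D$ and the convention $f(0,t)=0$ are used.
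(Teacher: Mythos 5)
Your proposal is correct and follows essentially the same route as the paper's proof: couple the two trajectories on a fixed noise realization, apply E-$\delta$ISS of $g$ with the adapted input $B_kY_k\tilde\alpha_k$ to control $\norm{x_t^a - x_t^c}$, derive the uniform bound $B_x$ from E-$\delta$ISS against the zero trajectory of $f$, factor the difference of squares, and finish with Cauchy--Schwarz and Jensen. The only (immaterial) differences are that you bound $\norm{x_t^a}$ directly against the zero trajectory rather than via the triangle inequality through $x_t^c$, and you interchange the order of expectation and Cauchy--Schwarz at the end.
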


We can immediately specialize Theorem~\ref{thm:regret_ls_reduction}
to both online gradient descent and online Newton.
Both corollaries are a direct consequence of applying well-known regret bounds
in online convex optimization to Theorem~\ref{thm:regret_ls_reduction} \cond{ (cf.\ \cite{hazan16oco})}{ (cf.\ Proposition~\ref{prop:online_gd_bounded_case}
and Proposition~\ref{prop:online_newton_bounded_case} in Appendix~\ref{sec:app:online_ls})}.
Our first corollary shows that online gradient descent
achieves a $O(T^{3/4})$ control regret bound.
\begin{corollary}
\label{cor:online_GD}
Suppose we use
online gradient descent \eqref{eq:online_gd} to update the parameters, setting the learning rate $\eta_t = \frac{D}{G\sqrt{t+1}}$. Under Assumption~\ref{assumption:online_ls},
for all $T \geq 1$:
\begin{align*}
    \E\left[\sum_{t=0}^{T-1} \norm{x_t^a}^2 - \norm{x_t^c}^2 \right] \leq 2\sqrt{6} B_x \frac{\gamma}{1-\rho} \sqrt{GD} T^{3/4} \:.
\end{align*}
\end{corollary}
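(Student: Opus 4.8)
The plan is to obtain Corollary~\ref{cor:online_GD} by composing Theorem~\ref{thm:regret_ls_reduction} with the classical regret guarantee for online gradient descent applied to the prediction losses $\{f_t\}$ of \eqref{eq:prediction_loss}. Theorem~\ref{thm:regret_ls_reduction} already reduces the control regret to the quantity $\sum_{t=0}^{T-1}\E\norm{B_tY_t\tilde{\alpha}_t}^2$, so the whole task is to bound this sum; and the elementary identity $\tfrac12\E\sum_{t=0}^{T-1}\norm{B_tY_t\tilde{\alpha}_t}^2 = \E\sum_{t=0}^{T-1}\bigl(f_t(\hat{\alpha}_t) - f_t(\alpha)\bigr)$ recorded in Section~\ref{sec:online_ls:results} further turns this into a standard online-convex-optimization prediction-regret bound.

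First I would verify the two ingredients that OGD needs. Each $f_t$ is the composition of $z\mapsto\tfrac12\norm{z}^2$ with an affine function of $\hat{\alpha}$, hence convex; and its gradient at the iterate, $\nabla f_t(\hat{\alpha}_t) = Y_t^\T B_t^\T(B_tY_t\tilde{\alpha}_t + w_t)$, obeys $\norm{\nabla f_t(\hat{\alpha}_t)}\le M^2(2DM^2 + W) = G$, using $\norm{\tilde{\alpha}_t}\le 2D$ (both $\hat{\alpha}_t$ and $\alpha$ lie in the radius-$D$ ball $\calC$), the bounds $\sup_{x,t}\norm{B(x,t)}\le M$ and $\sup_{x,t}\norm{Y(x,t)}\le M$ from Assumption~\ref{assumption:online_ls}, and $\norm{w_t}\le W$ almost surely. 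With the feasible set $\calC$ of Euclidean diameter $2D$ and step sizes $\eta_t = D/(G\sqrt{t+1})$, the textbook OGD analysis (cf.\ \cite{hazan16oco}) then gives, against the fixed comparator $\alpha\in\calC$,
\begin{align*}
    \sum_{t=0}^{T-1}\bigl(f_t(\hat{\alpha}_t) - f_t(\alpha)\bigr) \le 3GD\sqrt{T} \:.
\end{align*}

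Next I would pass from prediction regret to $\sum\E\norm{B_tY_t\tilde{\alpha}_t}^2$. Expanding the square gives $f_t(\hat{\alpha}_t) - f_t(\alpha) = \tfrac12\norm{B_tY_t\tilde{\alpha}_t}^2 + \ip{B_tY_t\tilde{\alpha}_t}{w_t}$, and in expectation the cross term drops: $B_tY_t\tilde{\alpha}_t$ is a deterministic function of $w_0,\dots,w_{t-1}$ (through $x_t^a$ and $\hat{\alpha}_t$), hence independent of $w_t$, which has zero mean. Therefore $\tfrac12\E\sum_{t=0}^{T-1}\norm{B_tY_t\tilde{\alpha}_t}^2 \le 3GD\sqrt{T}$, i.e.\ $\sum_{t=0}^{T-1}\E\norm{B_tY_t\tilde{\alpha}_t}^2 \le 6GD\sqrt{T}$. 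Substituting into Theorem~\ref{thm:regret_ls_reduction} and simplifying $\sqrt{T}\cdot\sqrt{6GD\sqrt{T}} = \sqrt{6GD}\,T^{3/4}$ yields
\begin{align*}
    \E\left[\sum_{t=0}^{T-1}\norm{x_t^a}^2 - \norm{x_t^c}^2\right] \le \frac{2B_x\gamma}{1-\rho}\sqrt{6GD}\,T^{3/4} = 2\sqrt{6}\,B_x\frac{\gamma}{1-\rho}\sqrt{GD}\,T^{3/4} \:,
\end{align*}
which is the claimed bound.

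I do not expect any genuine obstacle here: the corollary is essentially bookkeeping on top of Theorem~\ref{thm:regret_ls_reduction}. The only two points requiring a moment of care are (i) pinning down the gradient bound $G$ with the correct powers of $M$ and the factor $2D$ on $\norm{\tilde{\alpha}_t}$, and (ii) the measurability/independence argument that kills the cross term $\ip{B_tY_t\tilde{\alpha}_t}{w_t}$ in expectation, which relies on $\hat{\alpha}_t$ being computed only from data observed strictly before $w_t$ is realized. Getting the leading constant exactly $2\sqrt{6}$ then just amounts to matching the $3GD\sqrt{T}$ bound from the OGD step with the prescribed step-size choice $\eta_t = D/(G\sqrt{t+1})$.
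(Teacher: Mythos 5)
Your proposal is correct and follows essentially the same route as the paper: Theorem~\ref{thm:regret_ls_reduction} plus the standard OGD prediction-regret bound $3GD\sqrt{T}$ (Proposition~\ref{prop:online_gd_bounded_case}), converted to $\sum_{t=0}^{T-1}\E\norm{B_tY_t\tilde{\alpha}_t}^2 \leq 6GD\sqrt{T}$ via the identity in Section~\ref{sec:online_ls:results} where the cross term vanishes in expectation. The gradient bound $G = M^2(2DM^2+W)$ and the final arithmetic all match the paper's argument.
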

This result immediately generalizes to the case of mirror descent, where dimension-dependence implicit in $G$ and $D$ can be reduced, and where recent implicit regularization results apply~\citep{boffi2020implicit}. Next, the regret can be improved to $O(\sqrt{T \log{T}})$ by using online Newton.
\begin{corollary}
\label{cor:online_newton}
Suppose we use the online Newton method \eqref{eq:online_newton} to update the parameters, setting $\eta = 1$. Suppose furthermore that $M \geq 1$. Under Assumption~\ref{assumption:online_ls},
for all $T \geq 1$: 
\begin{align*}
    \E\left[\sum_{t=0}^{T-1} \norm{x_t^a}^2 - \norm{x_t^c}^2 \right] \leq \frac{2B_x \gamma}{1-\rho} \sqrt{T} \sqrt{ 4D^2(\lambda + M^4) + p G^2 \log(1 + M^4 T/\lambda) } \:.
\end{align*}
\end{corollary}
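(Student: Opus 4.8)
The plan is to use Theorem~\ref{thm:regret_ls_reduction} to pass from control regret to prediction regret, and then to control the prediction regret of the online Newton iterates by the standard online Newton step / elliptical potential argument, specialized to the quadratic losses $f_t$. By Theorem~\ref{thm:regret_ls_reduction}, it suffices to prove the bound
\[
\sum_{t=0}^{T-1} \E\norm{B_t Y_t \tilde{\alpha}_t}^2 \;\le\; 4D^2(\lambda + M^4) + pG^2 \log\!\big(1 + M^4 T/\lambda\big),
\]
since substituting this into Theorem~\ref{thm:regret_ls_reduction} reproduces the stated RHS. As recalled in the discussion preceding Assumption~\ref{assumption:online_ls}, because $w_t$ has mean zero and is independent of $(\hat\alpha_t, x_t^a)$ the cross term vanishes in expectation, so $\tfrac12\E\sum_t \norm{B_t Y_t \tilde{\alpha}_t}^2 = \E\sum_t\big(f_t(\hat\alpha_t) - f_t(\alpha)\big)$; hence it is enough to bound the \emph{pathwise} prediction regret of $\{\hat\alpha_t\}$ against the comparator $\alpha \in \calC$.

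For this, I would invoke the standard online Newton step regret bound (cf.\ \citet{hazan16oco}): since each $f_t$ is a quadratic with Hessian exactly $M_t^\T M_t$, and $A_t = \lambda I + \sum_{s\le t} M_s^\T M_s$, the update \eqref{eq:online_newton} with $\eta = 1$ satisfies, for any $\alpha' \in \calC$,
\[
\sum_{t=0}^{T-1}\big(f_t(\hat\alpha_t) - f_t(\alpha')\big) \;\le\; \tfrac12\norm{\hat\alpha_0 - \alpha'}_{A_0}^2 + \tfrac12\sum_{t=0}^{T-1}\norm{\nabla f_t(\hat\alpha_t)}_{A_t^{-1}}^2.
\]
This follows from the usual telescoping of $\norm{\hat\alpha_t - \alpha'}_{A_t}^2$: the $A_t$-norm Pythagorean inequality for the projection $\Pi_{\calC,t}$, the exact second-order expansion of the quadratic $f_t$ around $\hat\alpha_t$, and the identity $A_t - M_t^\T M_t = A_{t-1}$ combine to make consecutive terms cancel. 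The first term is bounded using $\norm{\hat\alpha_0 - \alpha'} \le 2D$ (both points lie in the radius-$D$ ball $\calC$) and $\norm{A_0} \le \lambda + M^4$ (from $\norm{M_0} \le M^2$), giving $\tfrac12\norm{\hat\alpha_0 - \alpha'}_{A_0}^2 \le 2D^2(\lambda + M^4)$.

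It remains to bound the potential term, which is the main obstacle. Writing $\nabla f_t(\hat\alpha_t) = M_t^\T r_t$ with $r_t := x_{t+1}^a - f(x_t^a, t) = B_t Y_t \tilde{\alpha}_t + w_t$, Assumption~\ref{assumption:online_ls} and $\norm{\tilde\alpha_t} \le 2D$ give $\norm{r_t} \le 2DM^2 + W$. Using $r_t r_t^\T \preceq \norm{r_t}^2 I$, we get $\norm{\nabla f_t(\hat\alpha_t)}_{A_t^{-1}}^2 = r_t^\T M_t A_t^{-1} M_t^\T r_t \le \norm{r_t}^2\,\Tr(A_t^{-1} M_t^\T M_t)$, and summing together with the trace/log-determinant inequality $\Tr\!\big(A_t^{-1}(A_t - A_{t-1})\big) \le \log\det A_t - \log\det A_{t-1}$ (with $A_{-1} := \lambda I$, so $A_t - A_{t-1} = M_t^\T M_t$) telescopes to $\log\det A_{T-1} - p\log\lambda$. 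Since $A_{T-1} \preceq (\lambda + TM^4)I$, this is at most $p\log(1 + M^4 T/\lambda)$, so $\sum_t \norm{\nabla f_t(\hat\alpha_t)}_{A_t^{-1}}^2 \le p(2DM^2 + W)^2 \log(1 + M^4 T/\lambda)$. Finally, since $G = M^2(2DM^2 + W)$ and $M \ge 1$, we have $(2DM^2 + W)^2 = G^2/M^4 \le G^2$, so the potential term is at most $pG^2 \log(1 + M^4 T/\lambda)$. Combining everything gives $\E\sum_t \norm{B_t Y_t \tilde{\alpha}_t}^2 = 2\E\sum_t(f_t(\hat\alpha_t) - f_t(\alpha)) \le 4D^2(\lambda + M^4) + pG^2\log(1 + M^4 T/\lambda)$, and Theorem~\ref{thm:regret_ls_reduction} finishes the proof. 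The delicate points are the correct application of the elliptical-potential/log-det lemma to the non–rank-one updates $M_t^\T M_t$ so that it collapses to the clean dimension-$p$ logarithm, and the constant bookkeeping through $G$ (which is where the hypothesis $M \ge 1$ enters); everything else is routine substitution.
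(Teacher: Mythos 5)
Your proof is correct and follows essentially the same route as the paper: reduce control regret to prediction regret via Theorem~\ref{thm:regret_ls_reduction}, use the zero-mean/independence of $w_t$ to identify $\tfrac12\E\sum_t\norm{B_tY_t\tilde\alpha_t}^2$ with the expected prediction regret, and then bound the latter by the standard online Newton telescoping-plus-log-determinant argument, which is exactly the content of the paper's Proposition~\ref{prop:online_newton_bounded_case} specialized to $M_t = B_tY_t$ (so $\norm{M_t}\le M^2$, $\norm{y_t}\le DM^2+W$) with $\eta=1$. Your constant bookkeeping, including the use of $M\ge 1$ to absorb $(2DM^2+W)^2 = G^2/M^4$ into $G^2$, matches the paper's.
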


We also note that in the deterministic setting,
online gradient descent to update the parameters achieves $O(1)$ prediction and control regret,
which is consistent with the results in Section~\ref{sec:speed_gradient}
and with the results in \cite{gaudio19connections}.
We give a self-contained proof of this in \cond{the full paper}{Appendix~\ref{sec:app:vel_grad_cont}}.

\subsection{Input Delay Results}
Motivated by \emph{extended matching} conditions commonly considered in  continuous-time adaptive control~\citep{krstic95adaptivebook}, we now extend our previous results to a setting where the input is time-delayed
by $k$ steps.
Specifically, we consider the modified system:
\begin{align}
    x_{t+1} = f(x_t, t) + B(x_t, t) (\xi_t - Y(t) \alpha) + w_t \:, \:\:
    \xi_{t} = u_{t-k} \:. \label{eq:delayed_system}
\end{align}
Here, we simplify part of the model~\eqref{eq:system} by assuming that
the matrix $Y(t)$ is state-independent. With this simplification,
the certainty equivalence controller is given by $u_t = Y(t+k) \hat{\alpha}_t$. The baseline we compare to in the definition of regret is 
the nominal system $x_{t+1}^c = f(x_t^c, t) + w_t$,
which is equivalent to playing the input $u_t = Y(t+k)\alpha$.
Note that the gradient $\nabla f_t(\hat{\alpha}_t)$ can be implemented by the controller as
$\nabla f_t(\hat{\alpha}_t) = Y_t^\T B_t^\T ( x_{t+1} - f(x_t, t) - B_t (\xi_t - Y_t \hat{\alpha}_t) )$.

Folk wisdom and basic intuition suggest that nonlinear adaptive control algorithms for the extended matching setting will perform worse than their matched counterparts; however, standard asymptotic guarantees do not distinguish between the performance of these two classes of algorithms. Here we show that the control regret rigorously captures this gap in performance. We begin with online gradient descent, which provides a regret bound of $O(T^{3/4} + k\sqrt{T})$.
\begin{restatable}{theorem}{delayresult}
\label{thm:regret_ls_delay}
Consider the online gradient descent update~\eqref{eq:online_gd}
for the $k$-step delayed system~\eqref{eq:delayed_system}
with step size $\eta_t = \frac{D}{G\sqrt{t+1}}$.
Under Assumption~\ref{assumption:online_ls} and with
state-independent $Y_t$, for all $T \geq k$:
\begin{align*}
    \E\left[\sum_{t=0}^{T-1} \norm{x_t^a}^2 - \norm{x_t^c}^2\right] \leq k B_x^2 + \frac{2B_x M^2 D \gamma}{(1-\rho)^2} + \frac{2 \sqrt{6} B_x \gamma \sqrt{GD}}{1-\rho} T^{3/4} + \frac{4 B_x \gamma M^2 D}{1-\rho} k \sqrt{T}  \:.
\end{align*}
\end{restatable}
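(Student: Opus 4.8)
The plan is to transcribe the proof of Theorem~\ref{thm:regret_ls_reduction}, inserting one new ingredient that accounts for the delay. Write $e_t := x_t^a - x_t^c$. Under the certainty-equivalence law $u_t = Y(t+k)\hat{\alpha}_t$ (together with some fixed, bounded choice of the initial inputs $u_{-k},\dots,u_{-1}$), the adaptive trajectory obeys $x_{t+1}^a = f(x_t^a, t) + w_t + v_t$ with $v_t := B_t(\xi_t - Y_t \alpha)$, where $v_t = B_t Y_t \tilde{\alpha}_{t-k}$ for $t \geq k$ and $\norm{v_t} \leq 2DM^2$ for all $t$ (since $\hat{\alpha}_s, \alpha \in \calC$ and $\norm{B_t}, \norm{Y_t} \leq M$). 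I would first record that $\norm{x_t^a}, \norm{x_t^c} \leq B_x$ for all $t$: the realization $w \equiv 0$ is admissible in Assumption~\ref{assumption:online_ls}, so $f$ itself is $(\beta,\rho,\gamma)$-E-$\delta$ISS; treating $w_t + v_t$ (resp.\ $w_t$) as the input and comparing to the zero trajectory (using $f(0,t)=0$) gives $\norm{x_t^a} \leq \beta\norm{x_0} + \gamma(W + 2DM^2)/(1-\rho) = B_x$, and likewise $\norm{x_t^c} \leq B_x$ with the weaker input bound $W$.

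Next, split the regret sum at $t=k$. For $t < k$, $\norm{x_t^a}^2 - \norm{x_t^c}^2 \leq \norm{x_t^a}^2 \leq B_x^2$, which accounts for the $kB_x^2$ term. For $t \geq k$, the reverse triangle inequality gives $\norm{x_t^a}^2 - \norm{x_t^c}^2 = (\norm{x_t^a} - \norm{x_t^c})(\norm{x_t^a}+\norm{x_t^c}) \leq 2B_x\norm{e_t}$. Since $x_0^a = x_0^c$, Assumption~\ref{assumption:online_ls} yields $\norm{e_t} \leq \gamma\sum_{j=0}^{t-1}\rho^{t-1-j}\norm{v_j}$; splitting this into $j < k$ and $j \geq k$, the first block is a geometric tail bounded by $\frac{2DM^2\gamma}{1-\rho}\rho^{t-k}$, so summing over $t \geq k$ and multiplying by $2B_x$ produces a term of order $\frac{B_x M^2 D \gamma}{(1-\rho)^2}$ (the second term of the bound), while the $j \geq k$ block, after swapping summation order and using $\sum_{t > j}\rho^{t-1-j} \leq 1/(1-\rho)$, contributes $\frac{2B_x\gamma}{1-\rho}\sum_{j=k}^{T-1}\norm{B_j Y_j \tilde{\alpha}_{j-k}}$.

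The crux is to relate $\norm{B_j Y_j \tilde{\alpha}_{j-k}}$ to $\norm{B_j Y_j \tilde{\alpha}_j}$, i.e.\ to control the parameter drift across the delay window. Since $\tilde{\alpha}_{j-k} - \tilde{\alpha}_j = \hat{\alpha}_{j-k} - \hat{\alpha}_j = -\sum_{i=j-k}^{j-1}(\hat{\alpha}_{i+1} - \hat{\alpha}_i)$, nonexpansiveness of $\Pi_{\calC}$, the step size $\eta_i = D/(G\sqrt{i+1})$, and the bound $\norm{\nabla f_i(\hat{\alpha}_i)} = \norm{Y_i^\T B_i^\T(B_i Y_i \tilde{\alpha}_i + w_i)} \leq M^2(2DM^2 + W) = G$ together give $\norm{\hat{\alpha}_{i+1} - \hat{\alpha}_i} \leq \eta_i G = D/\sqrt{i+1}$, whence $\norm{B_j Y_j \tilde{\alpha}_{j-k}} \leq \norm{B_j Y_j \tilde{\alpha}_j} + M^2 D\sum_{i=j-k}^{j-1}\frac{1}{\sqrt{i+1}}$. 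Summing over $j$: by Cauchy–Schwarz $\sum_j \norm{B_j Y_j \tilde{\alpha}_j} \leq \sqrt{T}\sqrt{\sum_j \norm{B_j Y_j \tilde{\alpha}_j}^2}$, and in the double sum each index $i$ is hit by at most $k$ values of $j$, so $\sum_j \sum_{i=j-k}^{j-1}\frac{1}{\sqrt{i+1}} \leq k\sum_i \frac{1}{\sqrt{i+1}} \leq 2k\sqrt{T}$ — producing the $\frac{4 B_x \gamma M^2 D}{1-\rho}k\sqrt{T}$ term. Finally I take expectations, apply Jensen, and invoke the standard online gradient descent prediction-regret bound: since $\tfrac12 \E\sum_t \norm{B_t Y_t \tilde{\alpha}_t}^2 = \E\sum_t(f_t(\hat{\alpha}_t) - f_t(\alpha)) \leq 3DG\sqrt{T}$ for this step size, $\sqrt{T}\sqrt{\E\sum_t \norm{B_t Y_t \tilde{\alpha}_t}^2} \leq \sqrt{6DG}\,T^{3/4}$, giving the $\frac{2\sqrt{6}B_x\gamma\sqrt{GD}}{1-\rho}T^{3/4}$ term. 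The main obstacle is this parameter-drift step: getting the telescoping over $i = j-k, \dots, j-1$ and the "each update feeds $k$ consecutive timesteps" double-sum bookkeeping right is exactly what quantifies the $O(k\sqrt{T})$ price of the delay; the remainder is a direct adaptation of the matched-uncertainty argument behind Theorem~\ref{thm:regret_ls_reduction} and Corollary~\ref{cor:online_GD}.
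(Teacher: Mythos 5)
Your proposal is correct and follows essentially the same route as the paper: an E-$\delta$ISS comparison reducing the control regret to $\sum_t \norm{B_t Y_t \tilde{\alpha}_{t-k}}$, a parameter-drift bound $\norm{\hat{\alpha}_{j} - \hat{\alpha}_{j-k}} \leq Dk/\sqrt{j-k+1}$ coming from the step sizes and the gradient bound $G$, and the standard online gradient descent prediction-regret bound — the paper simply packages the first two ingredients as Lemma~\ref{lemma:delayed_decomposition} and Proposition~\ref{prop:iterate_shift}. The one detail to pin down is the pre-delay inputs: to recover the stated constant $\frac{2B_x M^2 D \gamma}{(1-\rho)^2}$ you should take $u_{-k},\dots,u_{-1}=0$ (equivalently $\hat{\alpha}_t = 0$ for $t<0$, as the paper does), so that the $j<k$ disturbances are bounded by $DM^2$ rather than your $2DM^2$, which would otherwise double that term.
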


Furthermore, the regret improves to $O(k \sqrt{T \log{T}})$ when we use
the online Newton method.
\begin{restatable}{theorem}{delayresultnewton}
\label{thm:regret_ls_delay_newton}
Consider the online Newton update~\eqref{eq:online_newton}
for the $k$-step delayed system~\eqref{eq:delayed_system} with $\eta = 1$.
Suppose $M \geq 1$.
Under Assumption~\ref{assumption:online_ls} and with state-independent $Y_t$, for all $T \geq k$:
\begin{align*}
\E\left[\sum_{t=0}^{T-1} \norm{x_t^a}^2 - \norm{x_t^c}^2\right] &\leq k B_x^2 + \frac{2B_x M^2 D \gamma}{(1-\rho)^2} + \frac{2 B_x \gamma G k}{1-\rho} \sqrt{\frac{p T}{\lambda} \log(1 + M^2 T/\lambda)}  \\
    &\qquad +\frac{2B_x\gamma}{1-\rho} \sqrt{T} \sqrt{4D^2(\lambda + M^4) + p G^2 \log(1 + M^4 T/\lambda) } \:.
\end{align*}
\end{restatable}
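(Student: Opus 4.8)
The plan is to reduce the control regret to a prediction-regret quantity via incremental stability exactly as in Theorem~\ref{thm:regret_ls_reduction}, and then to handle the delay with two new ingredients: a bookkeeping step that peels off the first $k$ (transient) steps, and a bound on how far the online Newton iterate can move over any window of $k$ consecutive steps.

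First I would set up the delayed closed loop. Since $u_t = Y(t+k)\hat\alpha_t$, the applied input is $\xi_t = u_{t-k} = Y(t)\hat\alpha_{t-k}$ for $t \ge k$ (with $\xi_0,\dots,\xi_{k-1}$ the prescribed initial inputs), so, writing $g(x,t) = f(x,t) + w_t$, we have $x_{t+1}^a = g(x_t^a,t) + B_t Y_t \tilde\alpha_{t-k}$ while $x_{t+1}^c = g(x_t^c,t)$. As in the proof of Theorem~\ref{thm:regret_ls_reduction}, Assumption~\ref{assumption:online_ls} yields the uniform bounds $\norm{x_t^a} \le B_x$ and $\norm{x_t^c} \le B_x$, and E-$\delta$ISS applied to the two trajectories (which share $x_0$) gives
\[
\norm{x_t^a - x_t^c} \le \gamma \sum_{s=0}^{t-1} \rho^{t-1-s} \norm{B_s Y_s \tilde\alpha_{s-k}} \:.
\]
I would then bound the $t < k$ terms of the regret by $k B_x^2$, and for $t \ge k$ use $\norm{x_t^a}^2 - \norm{x_t^c}^2 \le 2 B_x \norm{x_t^a - x_t^c}$, interchange the order of summation, and split the inner sum at $s = k$. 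The $s < k$ part involves only the fixed transient inputs times a double geometric sum bounded by $(1-\rho)^{-2}$, and contributes the constant $\tfrac{2 B_x M^2 D \gamma}{(1-\rho)^2}$; the $s \ge k$ part, after re-indexing $s = j+k$ and summing the geometric tail, is at most $\tfrac{2 B_x \gamma}{1-\rho} \sum_j \norm{B_{j+k} Y_{j+k} \tilde\alpha_j}$.

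The crux is that $\norm{B_{j+k} Y_{j+k} \tilde\alpha_j}$ pairs the dynamics matrices at time $j+k$ with the parameter error at time $j$, so it is not directly controlled by the prediction loss $f_t$, which pairs $M_t := B_t Y_t$ with $\tilde\alpha_t$. I would bridge this with the triangle inequality, $\norm{M_{j+k} \tilde\alpha_j} \le \norm{M_{j+k} \tilde\alpha_{j+k}} + M^2 \norm{\hat\alpha_{j+k} - \hat\alpha_j}$ (using $\tilde\alpha_j - \tilde\alpha_{j+k} = \hat\alpha_j - \hat\alpha_{j+k}$ and $\norm{M_{j+k}} \le M^2$). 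Summing the first term over $j$ and applying Cauchy--Schwarz reproduces the prediction-regret quantity $\sqrt{T}\,\sqrt{\sum_t \norm{M_t \tilde\alpha_t}^2}$; using $\E\langle w_t, M_t \tilde\alpha_t\rangle = 0$ (as $\hat\alpha_t$ and $x_t^a$ depend only on $w_{0:t-1}$) to pass to $\E\sum_t f_t(\hat\alpha_t) - f_t(\alpha)$, then the online Newton regret bound and Jensen exactly as for Corollary~\ref{cor:online_newton}, yields the last line of the claimed inequality. For the second term, I bound the $k$-window drift by $\sum_j \norm{\hat\alpha_{j+k} - \hat\alpha_j} \le k \sum_i \norm{\hat\alpha_{i+1} - \hat\alpha_i}$, and for one online Newton step I use non-expansiveness of $\Pi_{\calC,i}[\cdot]$ in the $A_i$-norm together with $A_i \succeq \lambda I$ to get $\norm{\hat\alpha_{i+1} - \hat\alpha_i} \le \lambda^{-1/2} \norm{\nabla f_i(\hat\alpha_i)}_{A_i^{-1}}$; Cauchy--Schwarz and the elliptical-potential (log-determinant) lemma, with $\norm{\nabla f_i(\hat\alpha_i)} \le G$ and $\norm{M_i} \le M^2$, then give $\sum_i \norm{\hat\alpha_{i+1} - \hat\alpha_i} \le (2DM^2 + W)\sqrt{\tfrac{pT}{\lambda}\log(1 + M^4 T/\lambda)}$. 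Multiplying by $M^2 k$ and the factor $\tfrac{2 B_x \gamma}{1-\rho}$ and recalling $G = M^2(2DM^2+W)$ produces the $k\sqrt{T}$ term.

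Assembling the four pieces ($k B_x^2$, the transient constant, the prediction-regret term, and the $k$-window drift term) gives the stated bound. I expect the time-shift mismatch to be the main obstacle: everything else is a fairly mechanical adaptation of the undelayed argument, but estimating $\norm{B_{j+k} Y_{j+k} \tilde\alpha_j}$ forces the bound on $\sum_i \norm{\hat\alpha_{i+1} - \hat\alpha_i}$, and it is precisely this quantity --- the distance the adaptive law drifts before its correction is applied --- that accounts for the extra $\widetilde{O}(k\sqrt{T})$ penalty relative to Corollary~\ref{cor:online_newton}.
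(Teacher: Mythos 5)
Your proposal is correct and follows essentially the same route as the paper: the same E-$\delta$ISS decomposition peeling off the first $k$ steps and the transient $(1-\rho)^{-2}$ constant (the paper's Lemma~\ref{lemma:delayed_decomposition}), the same triangle-inequality split of $\tilde{\alpha}_{s-k}$ into $\tilde{\alpha}_s$ plus a $k$-window drift, and the same drift bound via non-expansiveness of $\Pi_{\calC,t}$ in the $A_t$-norm, $A_t \succeq \lambda I$, Cauchy--Schwarz, and the log-determinant potential (Propositions~\ref{prop:online_newton_one_step_diff} and~\ref{prop:online_newton_k_step_diff}). The only discrepancy is your $\log(1+M^4T/\lambda)$ versus the stated $\log(1+M^2T/\lambda)$ in the drift term, which traces to the paper's own substitution of $\norm{B_tY_t}\leq M^2$ into the potential lemma and is immaterial.
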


\subsection{Is Incremental Stability Necessary?}

The results in this section have crucially relied on 
incremental input-to-state stability 
(Definition~\ref{def:e_delta_iss}). A natural question to ask is
if it possible to relax this assumption to
input-to-state stability~\citep{sontag08iss}, while still retaining regret guarantees.
In \cond{the full paper}{the appendix}, we provide a partial answer to this question, which we outline here.
We build on the observation of \cite{ruffer13convergent},
who show that a convergent system is incrementally stable
over a compact set
(cf.\ Theorem 8 of \cite{ruffer13convergent}).
However, their analysis does not preserve rates of convergence, e.g.,\
it does not show that an exponentially convergent system is also
exponentially incrementally stable on a compact set.

In \cond{the full paper}{Appendix~\ref{sec:appendix:stability}}, we
show\cond{}{ in Lemma~\ref{lemma:stability_to_incremental_stability}}
that if a system is exponentially input-to-state stable \cond{(which we define similarly to Definition~\ref{def:e_delta_iss}, but in reference to a single trajectory)}{(cf.\ Definition~\ref{def:e_iss})}, 
then it is E-$\delta$ISS
on a compact set of initial conditions, but only
for certain \emph{admissible} inputs.
Next, we prove that
under a persistence of excitation condition, 
the disturbances $\{B_t Y_t \tilde{\alpha}_t\}$
due to parameter mismatch yield an admissible sequence of inputs with
high probability.
Combining these results, we show a $\sqrt{T} \log{T}$ regret bound that holds with \emph{constant} probability\cond{}{ (cf.\ Theorem~\ref{thm:stability_regret_bound})}.
We are currently unable to recover a high probability regret bound since
the $(\beta,\rho,\gamma)$ constants for our E-$\delta$ISS reduction
depend \emph{exponentially} on the original problem constants
and the size of the compact set.
We leave resolving this issue, in addition to removing the persistence of excitation
condition, to future work.
\section{Simulations}
\label{sec:simulations}

\subsection{Velocity Gradient Adaptation}
We consider the cartpole stabilization problem,
where we assume the true parameters are unknown. 
Let $q$ be the cart position, $\theta$ the pole angle, and $u$ the force applied to the cart.
The dynamics are:
\begin{align*}
    \ddot{q} &= \frac{1}{m_c + m_p \sin^2{\theta}} \left( u + m_p \sin{\theta}( \ell \dot{\theta}^2 + g \cos{\theta})\right) \:, \\
    \ddot{\theta} &= \frac{1}{\ell(m_c + m_p \sin^2{\theta})} \left( - u \cos{\theta} - m_p \ell \dot{\theta}^2 \cos{\theta}\sin{\theta} - (m_c + m_p) g \sin{\theta} \right) \:.
\end{align*}
We discretize the dynamics via the Runge-Kutta method with timestep $\Delta t = .01$. The true (unknown) parameters
are the cart mass $m_c = 1$g, the pole mass $m_p = 1$g, and pole length $\ell = 1$m. 
Let the state $x = (q, \dot{q}, \theta, \dot{\theta})$.
We solve a discrete-time infinite-horizon LQR problem (with $Q=I_{4}$ and $R=.5$) for
the linearization at $x_{\mathrm{eq}} := (0, 0, \pi, 0)$, 
using the \textit{wrong parameters} $m_c = .45$g, $m_p = .45$g, $\ell = .8$m.
This represents a simplified model of uncertainty in the system or a simulation-to-reality gap.
The solution to the discrete-time LQR problem yields a Lyapunov function $Q(x) = \frac{1}{2} (x-x_{\mathrm{eq}})^\T P (x-x_{\mathrm{eq}})$, and a control law $u_t = -K(x_t - x_{\mathrm{eq}})$ that would locally stabilize the system around $x_{\mathrm{eq}}$ if the parameters were correct. 

We use adaptive control to bootstrap our control policy computed with incorrect parameters to a stabilizing law for the true system. 
Specifically, we run the velocity gradient adaptive law \eqref{eq:speed_gradient} on the LQR Lyapunov function $Q(x)$ with basis functions $Y(x, t) \in \R^{1\times 400}$ given by random Gaussian features $\cos(\omega^\T x + b)$ with $\omega \sim N(0, 1)$ and $b \sim \mathsf{Unif}(0, 2\pi)$~(cf.~\cite{rahimi07randomfeatures}). 
We rollout $500$ trajectories initialized 
uniformly at random in an $\ell_\infty$ ball of radius $\frac{1}{2}$ around $x_{\mathrm{eq}}$, and measure the performance of the system both with and without adaptation through the average control regret $\frac{1}{T}\sum_{t=1}^T \norm{x_t - x_{\mathrm{eq}}}^2$.
The results are shown in the bottom-right pane of Figure~\ref{fig:results}.
Without adaptation, every trajectory diverges, and an example is shown in the left inset. 
On the other hand, adaptation is often able to
successfully stabilize the system.
One example trajectory with adaptation is shown in the body of the pane. The right inset shows the empirical CDF of the average control cost with adaptation, indicating that $\sim 60\%$ of trajectories with adaptation have an average control regret less than $0.1$, and $\sim 80\%$ less than $1$. 
More generally, our approach of improving the quality of a controller through online adaptation with expressive, unstructured basis functions could be used as an additional layer on top of existing adaptive control algorithms to correct for errors in the structured, physical basis functions originating from the dynamics model.

\begin{figure}[t]
{\begin{tabular}{cc}
    \includegraphics[width=.48\textwidth]{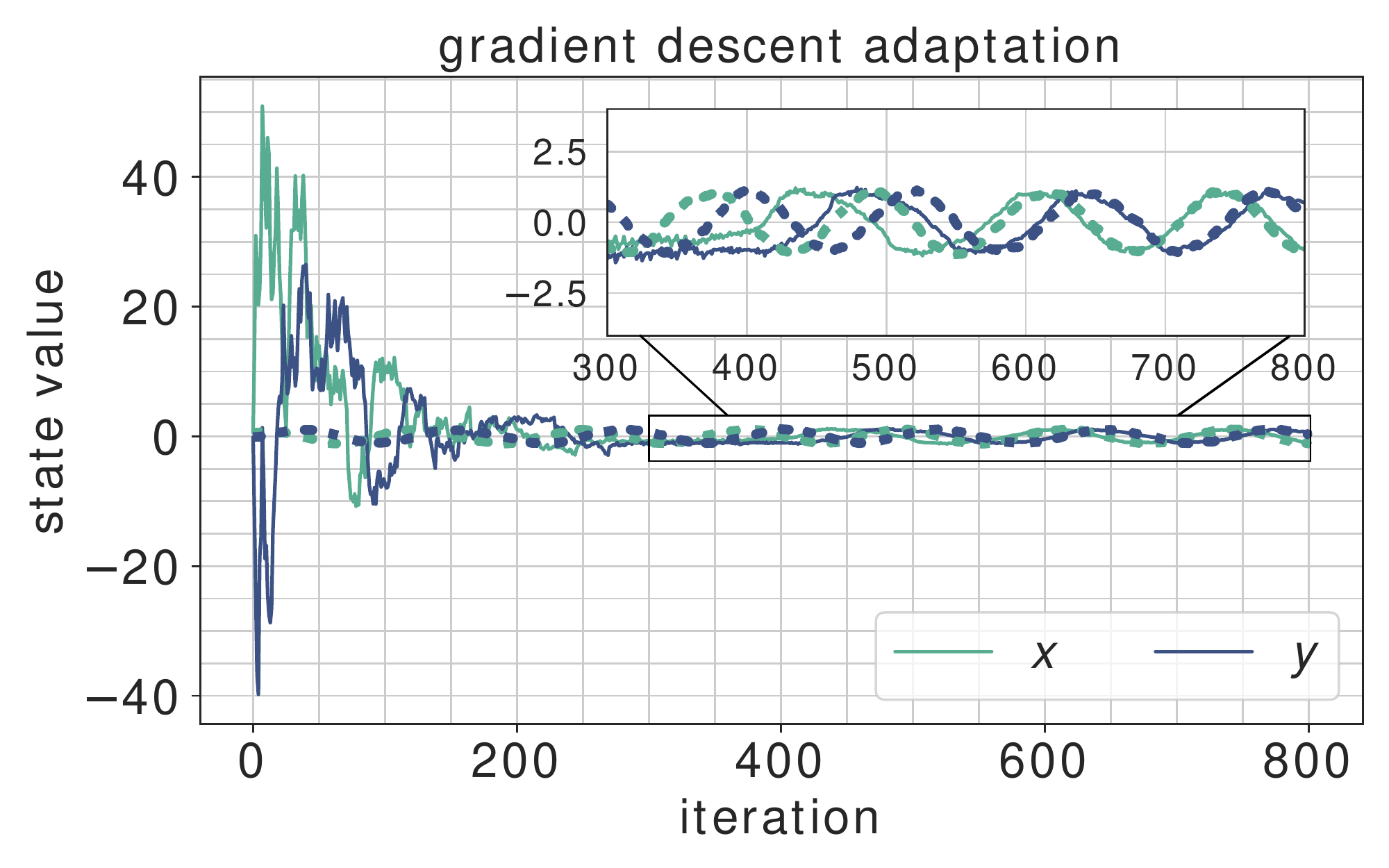} & 
    \includegraphics[width=.48\textwidth]{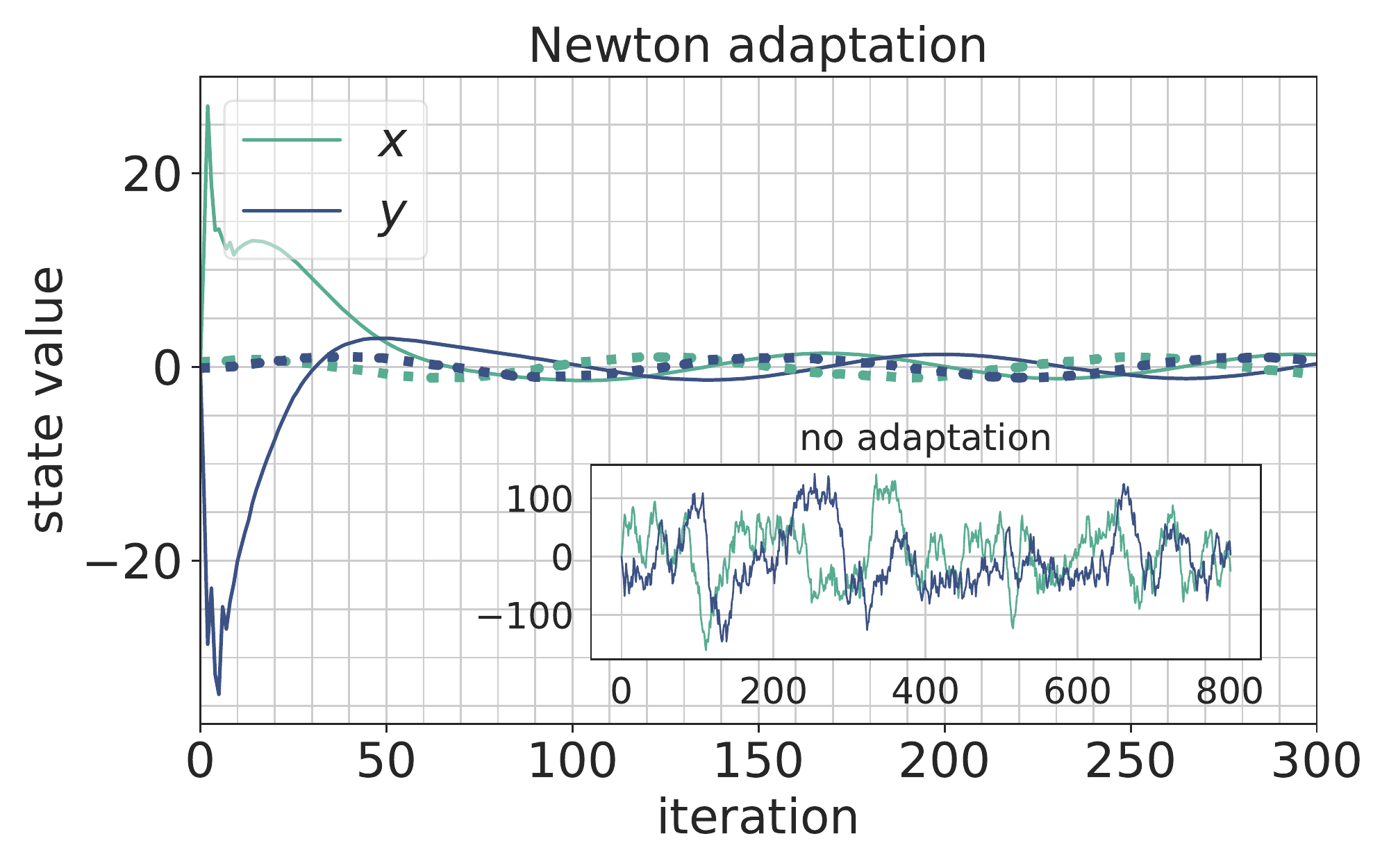}\\
    \includegraphics[width=.48\textwidth]{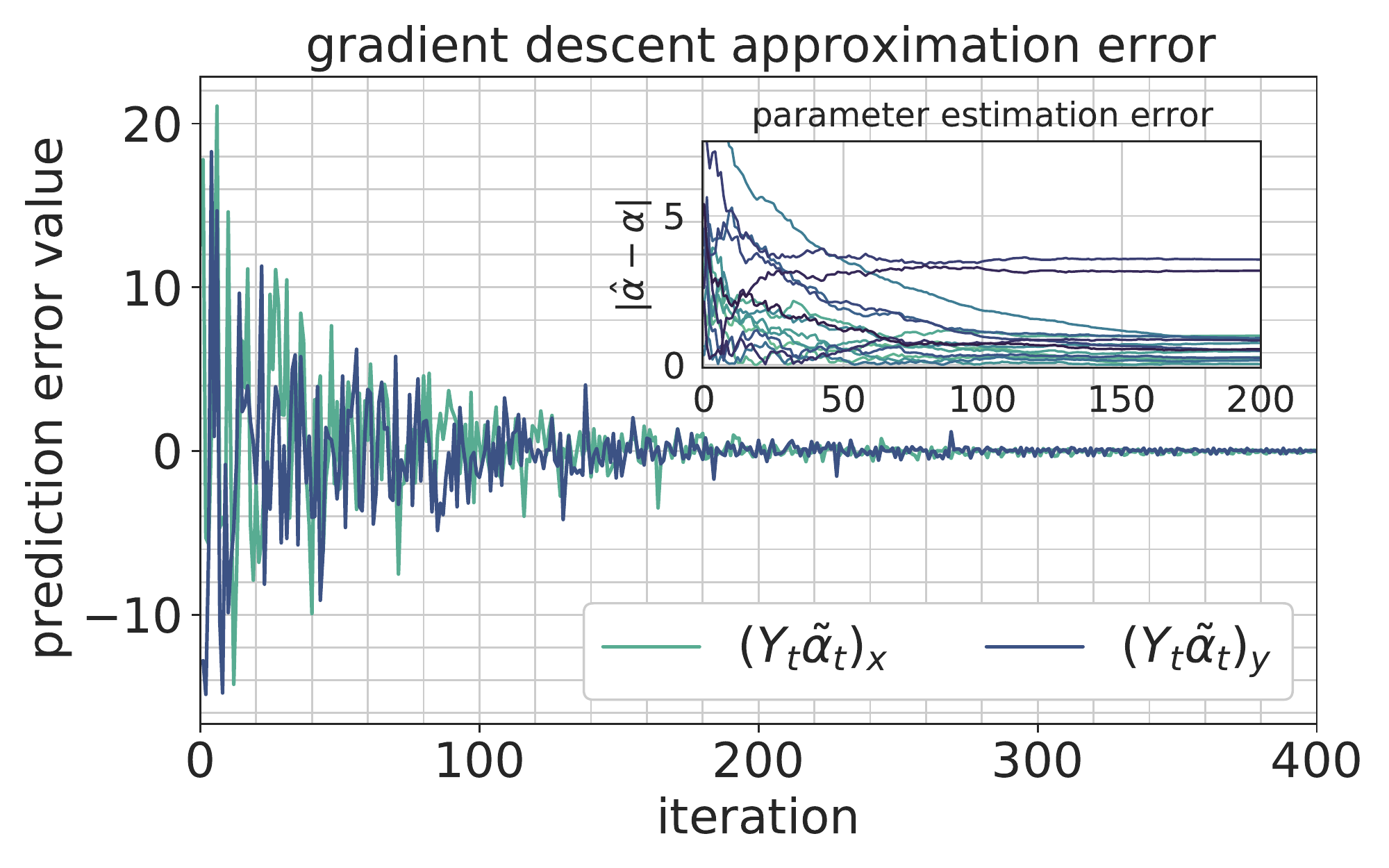} &
    \includegraphics[width=.48\textwidth]{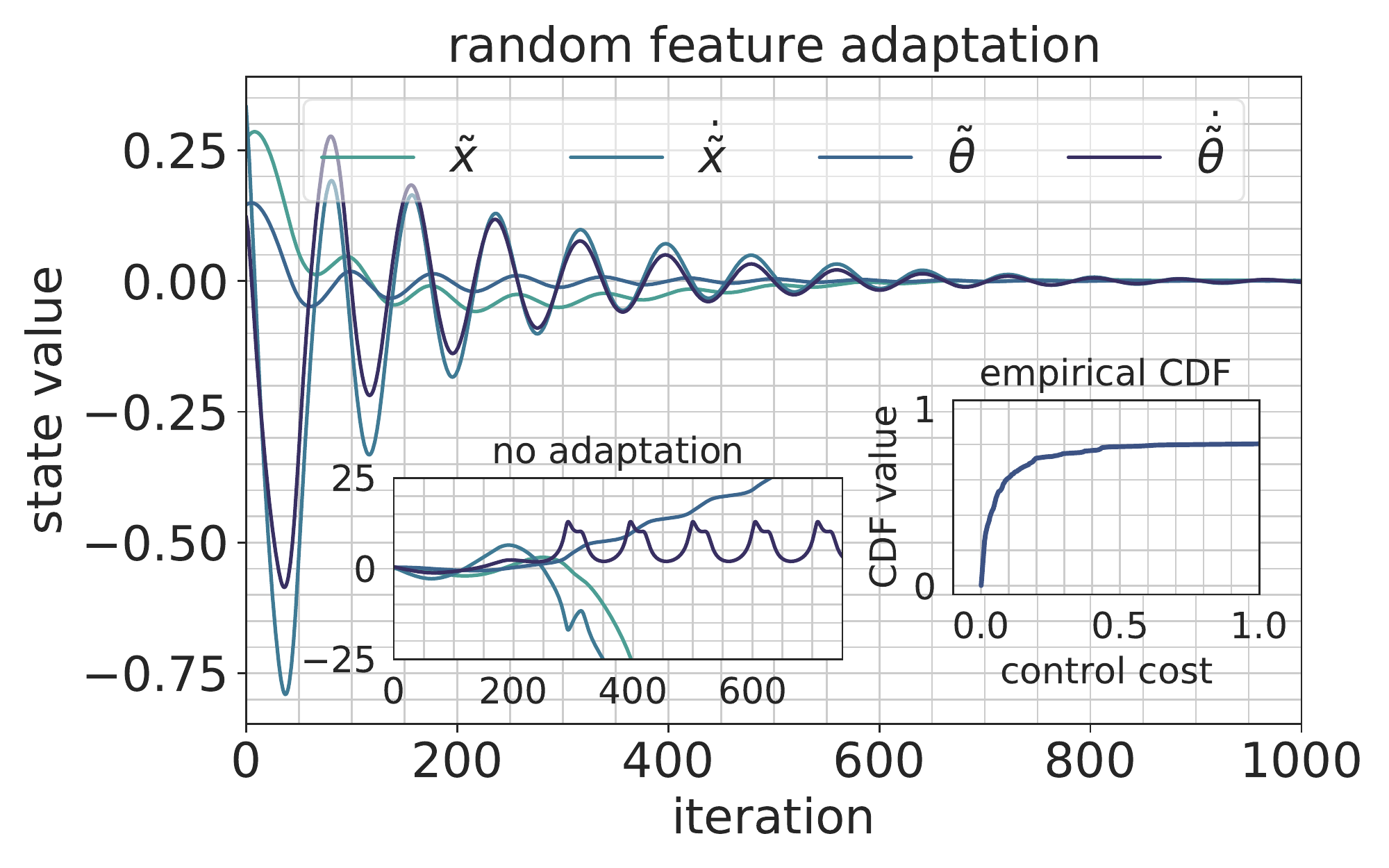} 
\end{tabular}
}
\caption{\textbf{(Top left)} Sample trajectory for online gradient descent (solid) and the comparator (dotted). Inset shows a close-up view near convergence. \textbf{(Top right)} Sample trajectory for online gradient descent (solid) and the comparator (dotted). Inset shows poor performance of the system \textit{without} adaptation.  \textbf{(Bottom left)} Prediction error for gradient descent (main figure) and parameter estimation error (inset). The parameters do not converge due to a lack of persistent excitation, but the prediction error still tends to zero. \textbf{(Bottom right)} LQR experiment with random features. Main figure shows the performance of one trajectory with adaptation. The right inset shows the empirical CDF of average control performance with random feature adaptation. The left inset shows divergent behavior of one trajectory without adaptation.}
\label{fig:results}
\end{figure}

\subsection{Online Convex Optimization Adaptation}
To demonstrate the applicability of our OCO-inspired discrete-time adaptation laws, we study the following discrete-time nonlinear system
\begin{equation}
    \label{eqn:sim_sys}
    \begin{aligned}
    x_{t+1} &= x_t + \tau\left(-y_t + \frac{x_t}{\sqrt{x_t^2+y_t^2}} - x_t + Y_x(x_t, t)^\T\tilde{\alpha}_t\right) + \sqrt{\tau}\sigma w_{t,1} \:, \\
    y_{t+1} &= y_t + \tau\left(x_t + \frac{y_t}{\sqrt{x_t^2 + y_t^2}} - y_t + Y_y(y_t, t)^\T\tilde{\alpha}_t\right) + \sqrt{\tau}\sigma w_{t,2}
    \end{aligned}
\end{equation}
for $\tau = 0.05$, $\sigma = 0.1$, and $w_{t,i} \sim N(0, 1)$. The nominal system for \eqref{eqn:sim_sys} is a forward-Euler discretization of the continuous-time system $\dot{x} = -y + \frac{x}{\sqrt{x^2 + y^2}} - x$, $\dot{y} = x + \frac{y}{\sqrt{x^2 + y^2}} - y$. In polar coordinates, the nominal system reads $\dot{r} = -(r-1)$, $\dot{\theta} = 1$, which is contracting in the Euclidean metric towards the limit cycle $\dot{\theta} = 1$ on the unit circle. This shows that the system in Euclidean coordinates is contracting in the radial direction in the metric $M(x, y) = \frac{\partial g}{\partial x}(x, y)^\T\frac{\partial g}{\partial x}(x, y)$, where $g$ is the nonlinear mapping $(x, y) \mapsto (r, \theta)$. The basis functions are taken to be $Y_z(z_t, t)^\T = \sin(\omega (z_t + \sin(t)))$ where $z \in \{x,y\}$, the outer $\sin$ is taken element-wise, and $\omega \in \R^p$ is a vector of frequencies sampled uniformly between $0$ and $2\pi$. The estimated parameters $\hat{\alpha}_t$ are updated according to the OCO-inspired adaptive laws \eqref{eq:online_gd} or \eqref{eq:online_newton} analyzed in Section~\ref{sec:online_ls:results}.

Results are shown in Figure~\ref{fig:results}. In the top-left pane, convergence of a sample trajectory towards the limit cycle is shown for gradient descent in solid, with the limit cycle itself plotted in dots. The inset displays a close-up view of convergence. In the top-right pane, convergence is shown for the online Newton method, which converges significantly faster and has a smoother trajectory than gradient descent. The inset displays a failure to converge without adaptation, demonstrating improved performance of the two adaptation algorithms in comparison to the system without adaptation. The bottom-left pane shows convergence of the two components of the prediction error $Y_t\tilde{\alpha}_t$ for gradient descent in the main figure, and shows parameter error trajectories in the inset. Note that the parameters do not converge to the true values due to a lack of persistent excitation.

\section{Conclusion and Future Work}
\label{sec:conclusion}

We present the first finite-time regret bounds for nonlinear adaptive control in discrete-time. Our work opens up many future directions of research. 
One direction is the possibility of logarithmic regret in our setting, given that it is achievable in various LQR problems~\citep{agarwal19logarithmic,cassel20logarithmic,foster20logarithmic}.
A second question is
handling state-dependent $Y(x, t)$ matrices in the $k$ timestep delay setting,
or more broadly, studying
the extended matching conditions of \cite{kanellakopoulos89extendedmatching,krstic95adaptivebook}
for which timestep delays are a special case.
Another direction concerns proving regret bounds for the velocity gradient algorithm in a stochastic setting. 
Furthermore,
in the spirit of \cite{agarwal19onlinecontrol,hazan20nonstochastic}, 
an extension of our analysis to handle more general cost functions and adversarial
noise sequences would be quite impactful.
Finally, understanding if sublinear regret guarantees are possible for a non-exponentially incrementally stable system would be interesting.

\section*{Acknowledgements}
The authors thank Naman Agarwal, Vikas Sindhwani, and Sumeet Singh for helpful feedback. 

\bibliography{paper}

\newpage
\tableofcontents
\newpage

\clearpage
\appendix

\section{Velocity Gradient Algorithms in Continuous-Time}
\label{sec:app:vel_grad_cont}
In this section, we provide a brief introduction to the continuous-time formulation of velocity gradient algorithms, and show how the continuum limit of the online convex optimization-inspired algorithms from Section~\ref{ssec:ols} can be seen as a particular case. 
A comprehensive treatment of velocity gradient algorithms
in continuous-time can be found in~\citet{fradkov99book}, Chapter 3.

In this section, we study the nonlinear dynamics with matched uncertainty
\begin{equation}
    \label{eqn:gen_dyn_cont}
    \dot{x} = f(x, t) + B(x, t)\left(u - Y(x, t)\alpha\right),
\end{equation}
with $f(x, t)$ a known nominal dynamics satisfying $f(0, t) = 0$ for all $t$, $B(x, t)$ and $Y(x, t)$ known matrix-valued functions, and $\alpha$ an unknown vector of parameters. As in the main text, we consider the certainty equivalence control input $u = Y(x, t)\hat{\alpha}$. We assume that $f(x, t)$, $B(x, t)$, and $Y(x, t)$ are continuous in $x$ and $t$.

The first result from \citet{fradkov99book} we describe
concerns the class of ``local'' velocity gradient algorithms, which use a Lyapunov function for the nominal system to adapt to unknown disturbances.
\begin{theorem}
Consider the system dynamics \eqref{eqn:gen_dyn_cont}. Suppose $f(x, t)$ admits a twice continuously differentiable Lyapunov function $Q(x, t)$ satisfying for some positive $\rho, \mu$:
\begin{enumerate}
    \item $Q(0, t) = 0$ and $Q(x, t) \geq \mu \norm{x}^2$ for all $x, t$.
    \item For all $x, t$,
        $\ip{\nabla_x Q(x, t)}{f(x, t)} + \frac{\partial Q}{\partial t}(x, t)\leq -\rho Q(x, t)$. 
\end{enumerate}
Define 
\begin{align*}
    \omega(x, \hat{\alpha}, t) := \ip{\nabla_x Q(x, t)}{f(x, t) + B(x, t)Y(x, t)(\hat{\alpha} - \alpha)} + \frac{\partial Q}{\partial t}(x, t) \:.
\end{align*}
Then the adaptation law
\begin{equation}
    \label{eqn:vg_update_cont}
    \dot{\hat{\alpha}} = -\nabla_{\hat{\alpha}}\omega(x(t), \hat{\alpha}(t), t)
\end{equation}
ensures that:
\begin{enumerate}
    \item The solution $(x(t), \hat{\alpha}(t))$ exists and is unique for all $t \geq 0$.
    \item The solution $(x(t), \hat{\alpha}(t))$ satisfies
    \begin{align*}
        \int_0^\infty \norm{x(t)}^2 \: dt \leq \frac{1}{\rho\mu}\left(Q(x(0), 0) + \frac{1}{2}\norm{\hat{\alpha}(0) - \alpha}^2\right) \:.
    \end{align*}
    \item The solution $(x(t), \hat{\alpha}(t))$ satisfies $x(t) \rightarrow 0$.
\end{enumerate}
\label{thm:vg}
\end{theorem}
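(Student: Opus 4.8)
The plan is to run the standard velocity gradient (Lyapunov-based) argument in continuous-time, using the candidate function $V(x,\hat\alpha,t) := Q(x,t) + \tfrac{1}{2}\norm{\hat\alpha - \alpha}^2$ and showing it is non-increasing along the closed-loop trajectory, then extracting the three conclusions from this. First I would establish local existence and uniqueness: the right-hand side of the $x$-dynamics \eqref{eqn:gen_dyn_cont} with $u = Y(x,t)\hat\alpha$ and the adaptation law \eqref{eqn:vg_update_cont} together form an autonomous-in-$(x,\hat\alpha)$ (modulo the explicit $t$) system whose vector field is continuous; combined with the a priori bound from the Lyapunov estimate below (which confines the trajectory to a sublevel set of $V$, hence to a bounded region on any finite interval) one gets global existence for $t \ge 0$ via the standard continuation argument. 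The smoothness hypotheses on $Q$ (twice continuously differentiable) and continuity of $f, B, Y$ make the vector field locally Lipschitz where needed, or at least enough structure to invoke a Carathéodory/Peano-plus-uniqueness argument; I would lean on the $C^2$ regularity of $Q$ here.

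Next I would compute $\frac{d}{dt} V(x(t),\hat\alpha(t),t)$ along solutions. Writing $\dot x = f(x,t) + B(x,t)Y(x,t)(\hat\alpha - \alpha)$ (after substituting the certainty-equivalence input), the chain rule gives
\begin{align*}
\frac{d}{dt}V = \ip{\nabla_x Q(x,t)}{\dot x} + \frac{\partial Q}{\partial t}(x,t) + \ip{\hat\alpha - \alpha}{\dot{\hat\alpha}} = \omega(x,\hat\alpha,t) + \ip{\hat\alpha - \alpha}{\dot{\hat\alpha}} \:.
\end{align*}
The key observation is that $\omega$ is affine in $\hat\alpha$, so $\nabla_{\hat\alpha}\omega(x,\hat\alpha,t) = Y(x,t)^\T B(x,t)^\T \nabla_x Q(x,t)$ is independent of $\hat\alpha$, and moreover $\omega(x,\hat\alpha,t) = \omega(x,\alpha,t) + \ip{\nabla_{\hat\alpha}\omega}{\hat\alpha - \alpha}$. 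Using the adaptation law $\dot{\hat\alpha} = -\nabla_{\hat\alpha}\omega$, the cross terms cancel exactly: $\ip{\hat\alpha - \alpha}{\dot{\hat\alpha}} = -\ip{\hat\alpha-\alpha}{\nabla_{\hat\alpha}\omega}$, so $\frac{d}{dt}V = \omega(x,\alpha,t) = \ip{\nabla_x Q(x,t)}{f(x,t)} + \frac{\partial Q}{\partial t}(x,t) \le -\rho Q(x,t) \le 0$ by hypothesis~2. This is the heart of the argument — the deliberate choice of the adaptation law as the negative $\hat\alpha$-gradient of $\omega$ is precisely what makes the unknown-parameter contribution telescope away.

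From $\dot V \le -\rho Q(x,t) \le -\rho\mu\norm{x}^2$ (using hypothesis~1) the conclusions follow. Integrating from $0$ to $T$ and using $V \ge 0$: $\rho\mu\int_0^T \norm{x(t)}^2\,dt \le V(x(0),\hat\alpha(0),0) - V(x(T),\hat\alpha(T),T) \le V(x(0),\hat\alpha(0),0) = Q(x(0),0) + \tfrac12\norm{\hat\alpha(0)-\alpha}^2$; letting $T \to \infty$ gives conclusion~2. For conclusion~3 ($x(t)\to 0$), I would use a Barbalat-type argument: $\int_0^\infty \norm{x(t)}^2\,dt < \infty$, and $\frac{d}{dt}\norm{x(t)}^2 = 2\ip{x}{\dot x}$ is bounded on $[0,\infty)$ — boundedness of $x$ follows from $V$ being non-increasing (so $Q(x(t),t)$, hence $\norm{x(t)}^2 \le Q(x(t),t)/\mu$, is bounded) and boundedness of $\hat\alpha$ likewise; then $\dot x$ is bounded since $f, B, Y$ are continuous on the relevant compact set — so $\norm{x(t)}^2$ is uniformly continuous and its integral converges, whence $\norm{x(t)}^2 \to 0$.

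The main obstacle I expect is the existence/uniqueness step and the Barbalat argument's regularity bookkeeping rather than any deep difficulty: one must confirm the vector field is regular enough for a unique global solution (the $C^2$ assumption on $Q$ and continuity of $f,B,Y$ should suffice, with the Lyapunov a priori bound precluding finite-time blowup), and for Barbalat one must check that $\dot x$ stays bounded, which in turn needs $\hat\alpha(t)$ bounded — this is where one must be a little careful, since $\hat\alpha$ is only controlled through $\tfrac12\norm{\hat\alpha-\alpha}^2 \le V(x(0),\hat\alpha(0),0)$, giving a uniform bound, so the relevant state stays in a fixed compact set and continuity of the data upgrades to the uniform bounds needed. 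The algebraic core — the cancellation $\frac{d}{dt}V = \omega(x,\alpha,t)$ — is short and is the only genuinely clever point.
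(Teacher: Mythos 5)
Your proposal is correct and follows essentially the same route as the paper: the Lyapunov-like function $V = Q + \tfrac{1}{2}\norm{\tilde{\alpha}}^2$, the cancellation $\dot{V} = \omega(x,\alpha,t) \leq -\rho\mu\norm{x}^2$ via the affine/convex structure of $\omega$ in $\hat{\alpha}$, integration for conclusion~2, and a Barbalat argument for conclusion~3. The only cosmetic difference is that you apply Barbalat to $\norm{x(t)}^2$ (checking $\dot{x}$ bounded) whereas the paper applies it to $\dot{V}$ (checking $\ddot{V}$ bounded); your variant is, if anything, the cleaner of the two.
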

\begin{proof}
By our continuity assumptions, we have that the 
closed-loop dynamics
\begin{align*}
    \dot{x} &= f(x, t) + B(x, t) Y(x, t) (\hat{\alpha} - \alpha) \:, \\
    \dot{\hat{\alpha}} &= - \nabla_{\hat{\alpha}} \omega(x, \hat{\alpha}, t) \:,
\end{align*}
is continuous in $t$ and locally Lipschitz in $(x, \alpha)$. Therefore, there exists a maximal interval $I(x(0), \hat{\alpha}(0)) \subseteq \R_{\geq 0}$
for which the solution $(x(t), \hat{\alpha}(t))$ exists and is unique.
Consider the Lyapunov-like function
\begin{equation*}
    V(x(t), \hat{\alpha}(t), t) = Q(x(t), t) + \frac{1}{2}\norm{\tilde{\alpha}(t)}^2.
\end{equation*}
It is simple to show that $V$ has time derivative
\begin{equation*}
    \dot{V}(x(t), \hat{\alpha}(t), t) = \omega(x(t), \hat{\alpha}(t), t) - \ip{\tilde{\alpha}(t)}{\nabla_{\hat{\alpha}}\omega(x(t), \hat{\alpha}(t), t)}.
\end{equation*}
Hence,
\begin{equation*}
    \dot{V}(x(t), \hat{\alpha}(t), t) = \omega(x(t), \alpha, t) = \ip{\nabla_{x}Q(x(t), t)}{f(x, t)} + \frac{\partial Q}{\partial t}(x, t) \leq -\rho Q(x(t), t) \leq -\rho\mu \norm{x(t)}^2,
\end{equation*}
which shows that $x(t)$ and $\hat{\alpha}(t)$ remain uniformly bounded
for all $t \in I(x(0), \hat{\alpha}(0))$.
This in turn implies that the solution $(x(t), \hat{\alpha}(t))$ exists
and is unique for all $t \geq 0$ (see e.g., Theorem 3.3 of \cite{khalilbook}).

Integrating both sides of the above differential inequality shows that
\begin{equation*}
    \int_0^\infty \norm{x(t)}^2 \: dt \leq \frac{1}{\rho\mu}\left(Q(x(0), 0) + \frac{1}{2}\norm{\hat{\alpha}(0) - \alpha}^2\right).
\end{equation*} 
By the assumption that $f$, $B$, and $Y$ are continuous
and that $Q$ is twice continuously differentiable, it is straightforward to check that
$\sup_{t \geq 0} \abs{\ddot{V}(x(t), \hat{\alpha}(t), t)} < \infty$.
We have therefore shown that
$\lim_{t \to \infty} V(x(t), \hat{\alpha}(t), t)$ exists and is finite,
and also that $\dot{V}(x(t), \hat{\alpha}(t), t)$ is uniformly continuous in $t$.
Applying Barbalat's lemma (see e.g.,\ Section 4.5.2 of~\cite{slotine91book}) yields the conclusion that
$\lim_{t \to \infty} \dot{V}(x(t), \hat{\alpha}(t), t) = 0$.
But this implies that:
\begin{align*}
    0 = \lim_{t \to \infty} \dot{V}(x(t), \hat{\alpha}(t), t) \leq \limsup_{t \to \infty} \left[-\rho \mu \norm{x(t)}^2\right] \leq 0 \:.
\end{align*}
Hence $x(t) \rightarrow 0$ as $t \rightarrow \infty$.
\end{proof}

In general, the proof of Theorem~\ref{thm:vg} works as long as
$\omega(x, \hat{\alpha}, t)$ is convex in $\hat{\alpha}$.
In this case, one has that the inequality $\dot{V}(x(t), \hat{\alpha}(t), t) \leq \omega(x(t), \alpha, t)$ holds. 

The continuous-time formulation \eqref{eqn:vg_update_cont} gives justification for the name ``velocity gradient''; $\omega(x(t), \hat{\alpha}(t), t)$ is the time derivative (velocity) of $Q(x(t), t)$ along the flow of the disturbed system. The adaptation algorithm is then derived by taking the gradient with respect to the parameters of this velocity. Moreover, \eqref{eqn:vg_update_cont} provides an explanation for the discrete-time requirement that $\nabla_x Q(x, t)$ be evaluated at $x_{t+1}$. In continuous-time, the instantaneous time derivative of $Q(x(t), t)$ provides information about the current function approximation error $B(x(t), t)Y(x(t), t)\tilde{\alpha}(t)$, which is only contained in $x_{t+1}$ in discrete-time.

A second class of ``integral'' velocity gradient algorithms from \citet{fradkov99book} can be obtained under a different set of assumptions, as shown next. These algorithms proceed by updating the parameters along the gradient of an instantaneous loss function $R(x, \hat{\alpha}, t)$. They then provide guarantees on the integral of $R(x, \hat{\alpha}, t)$ along trajectories of the system. In general, such a guarantee does not imply boundedness of the state, which must be shown independently.
\begin{theorem}
\label{thm:integral}
Let $R(x, \hat{\alpha}, t)$ denote a non-negative function that is convex in $\hat{\alpha}$ for all $x, t$. Let $\mu(t)$ denote a non-negative function such that $\int_0^\infty \mu(t) \: dt < \infty$ and $\lim_{t\rightarrow\infty} \mu(t) = 0$. Assume there exists some vector of parameters $\alpha$ satisfying $R(x, \alpha, t) \leq \mu(t)$ for all $x,t$. Then the adaptation law
\begin{equation}
    \label{eqn:vg_update_int}
    \dot{\hat{\alpha}} = -\nabla_{\hat{\alpha}}R(x(t), \hat{\alpha}(t), t)
\end{equation}
ensures that 
\begin{equation*}
    \int_0^t R(x(t'), \hat{\alpha}(t'), t') \: dt' \leq \frac{1}{2}\norm{\hat{\alpha}(0) - \alpha}^2 + \int_0^\infty \mu(t) \: dt.
\end{equation*}
for any $t \geq 0$ in the maximal interval of existence $I(x(0), \hat{\alpha}(0))$.
\end{theorem}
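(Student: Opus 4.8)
The plan is to mimic the standard online-gradient-descent regret argument, adapted to the continuous-time flow. Define the Lyapunov-like (energy) function $V(t) := \frac{1}{2}\norm{\tilde{\alpha}(t)}^2$ where $\tilde{\alpha}(t) = \hat{\alpha}(t) - \alpha$ and $\alpha$ is the fixed comparator parameter guaranteed by the hypothesis to satisfy $R(x, \alpha, t) \leq \mu(t)$ for all $x, t$. The key computation is to differentiate $V$ along the flow \eqref{eqn:vg_update_int}: since $\dot{\tilde{\alpha}}(t) = \dot{\hat{\alpha}}(t) = -\nabla_{\hat{\alpha}} R(x(t), \hat{\alpha}(t), t)$, we get $\dot{V}(t) = -\ip{\tilde{\alpha}(t)}{\nabla_{\hat{\alpha}} R(x(t), \hat{\alpha}(t), t)}$.

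Next I would invoke convexity of $R$ in its second argument. The subgradient (here gradient) inequality applied at the point $\hat{\alpha}(t)$ with the comparator $\alpha$ gives $R(x(t), \alpha, t) \geq R(x(t), \hat{\alpha}(t), t) + \ip{\nabla_{\hat{\alpha}} R(x(t), \hat{\alpha}(t), t)}{\alpha - \hat{\alpha}(t)}$, i.e. $\ip{\nabla_{\hat{\alpha}} R(x(t), \hat{\alpha}(t), t)}{\tilde{\alpha}(t)} \geq R(x(t), \hat{\alpha}(t), t) - R(x(t), \alpha, t)$. Combining with the expression for $\dot{V}$ yields the instantaneous inequality
\begin{align*}
    R(x(t), \hat{\alpha}(t), t) \leq R(x(t), \alpha, t) - \dot{V}(t) \leq \mu(t) - \dot{V}(t),
\end{align*}
where the last step uses the hypothesis $R(x, \alpha, t) \leq \mu(t)$ applied with $x = x(t)$. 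This is the crux; note that unlike Theorem~\ref{thm:vg} there is no issue of well-posedness to resolve first, since the statement is only asserted on the maximal interval of existence $I(x(0), \hat{\alpha}(0))$, so I do not need to prove global existence.

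Finally I would integrate this differential inequality over $[0, t]$ for any $t$ in the maximal interval: $\int_0^t R(x(t'), \hat{\alpha}(t'), t') \, dt' \leq \int_0^t \mu(t')\,dt' - (V(t) - V(0)) \leq \int_0^\infty \mu(t')\,dt' + V(0)$, using non-negativity of $V(t)$ and $\mu \geq 0$ to extend the $\mu$-integral to $\infty$ and drop $-V(t)$. Since $V(0) = \frac{1}{2}\norm{\hat{\alpha}(0) - \alpha}^2$, this is exactly the claimed bound. The only mildly delicate point is justifying that $R(x(\cdot), \hat{\alpha}(\cdot), \cdot)$ and $V(\cdot)$ are absolutely continuous (hence integrable and amenable to the fundamental theorem of calculus) on the maximal interval — this follows from continuity/differentiability of $R$ and the fact that $(x(t), \hat{\alpha}(t))$ is a $C^1$ solution on that interval, so I would remark on it only briefly. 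The main obstacle, such as it is, is simply organizing the convexity inequality correctly so the signs line up; the argument is otherwise a clean application of the "online-to-batch"-style telescoping that drives OGD regret bounds, now in its continuum form.
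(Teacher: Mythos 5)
Your proposal is correct and is essentially the paper's argument: the paper simply folds the running integral $\int_0^t R$ and the tail $\int_t^\infty \mu$ into the Lyapunov-like function so that $\dot{V} \leq 0$, whereas you keep $V = \tfrac{1}{2}\norm{\tilde{\alpha}}^2$ bare and integrate the pointwise inequality $R(x(t),\hat{\alpha}(t),t) \leq \mu(t) - \dot{V}(t)$ — algebraically the same convexity step and the same final bound.
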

\begin{proof}
Consider the Lyapunov-like function
\begin{equation*}
    V(x(t), \hat{\alpha}(t), t) = \int_0^t R(x(t'), \hat{\alpha}(t'), t') \: dt' + \frac{1}{2}\norm{\tilde{\alpha}(t)}^2 + \int_t^\infty \mu(t') \: dt'.
\end{equation*}
Note that $V(x(t), \hat{\alpha}(t), t)$ has its time derivative given by
\begin{equation*}
    \dot{V}(x(t), \hat{\alpha}(t), t) = R(x(t), \hat{\alpha}(t), t) - \ip{\tilde{\alpha}(t)}{\nabla_{\hat{\alpha}}R(x(t), \hat{\alpha}(t), t)} - \mu(t).
\end{equation*}
By convexity of $R(x, \hat{\alpha}, t)$ in $\hat{\alpha}$, we have
\begin{equation*}
    \dot{V}(x(t), \hat{\alpha}(t), t) \leq R(x(t), \alpha, t) - \mu(t) \leq 0.
\end{equation*}
Because $\dot{V}(x(t), \hat{\alpha}(t), t) \leq 0$, and because each term in $V(x(t), \hat{\alpha}(t), t)$ is positive,
\begin{equation*}
    \int_0^t R(x(t'), \hat{\alpha}(t'), t') \: dt' \leq V(x(t), \hat{\alpha}(t), t) \leq V(x(0), \hat{\alpha}(0), 0) =  \frac{1}{2}\norm{\hat{\alpha}(0) - \alpha}^2 + \int_0^\infty \mu(t) \: dt.
\end{equation*}
\end{proof}
An important case for Theorem~\ref{thm:integral} is when $R(x(t), \hat{\alpha}(t), t)$ is the squared prediction error, i.e.,
\begin{equation}
    \label{eqn:R_ols}
    R(x, \hat{\alpha}, t) = \frac{1}{2}\norm{B(x, t)Y(x, t)\tilde{\alpha}}^2.
\end{equation}
In this case, $R(x, \alpha, t) = 0$, so that $\mu(t)$ can be taken to be zero. 
With the choice of $R$ given in \eqref{eqn:R_ols},
the resulting adaptation law \eqref{eqn:vg_update_int} becomes the gradient flow dynamics
\begin{equation}
    \label{eqn:gflow}
    \dot{\hat{\alpha}} = -Y(x, t)^\T B(x, t)^\T B(x, t) Y(x, t) \tilde{\alpha} \:.
\end{equation}
Furthermore, 
Theorem~\ref{thm:integral} states that for any $t$ in the maximal interval of existence,
\begin{align*}
    \int_0^t \norm{B(x(s), s) Y(x(s), s) \tilde{\alpha}(s)}^2 \: ds \leq \norm{\hat{\alpha}(0) - \alpha}^2\:.
\end{align*}
In this sense, the least-squares algorithms in Section~\ref{ssec:ols}
can be seen as an instance of the integral form of velocity gradient. Because we consider the deterministic setting here, we can state a stronger result: by Barbalat's Lemma, this $O(1)$ guarantee on the prediction regret also implies that the function approximation error $B(x(t), t)Y(x(t), t)\tilde{\alpha}(t) \rightarrow 0$.
Furthermore,
the next proposition shows how to turn this $O(1)$ prediction regret bound into an
$O(1)$ bound on the control regret.
\begin{proposition}
Suppose $f(x, t)$  admits a continuously differentiable Lyapunov function $Q(x, t)$ satisfying for some positive $\rho, \mu, L_Q$:
\begin{enumerate}
    \item $Q(0, t) = 0$ and $Q(x, t) \geq \mu\norm{x}^2$ for all $x,t$.
    \item $x \mapsto \nabla_x Q(x, t)$ is $L_Q$-Lipschitz for all $t$.
    \item For all $x, t$, $\ip{\nabla_x Q(x, t)}{f(x, t)} + \frac{\partial Q}{\partial t}(x, t)\leq - \rho Q(x, t)$.
\end{enumerate}
Let $u(x, \xi, t)$ and $g(x, \xi, t)$ be continuous functions, and consider the dynamics:
\begin{align*}
    \dot{x}(t) &= f(x, t) + u(x, \xi, t) \:, \\
    \dot{\xi}(t) &= g(x, \xi, t) \:.
\end{align*}
For every $t \geq 0$ in the maximal interval of existence $I(x(0), \xi(0))$, we have:
\begin{align}
    \norm{x(t)} \leq \sqrt{\frac{Q(x(0), 0)}{\mu} + \frac{L_Q^2}{4\mu(1-\gamma)\rho} \int_0^t \norm{u(x(s), \xi(s), s)}^2 \: ds} \:. \label{eq:xt_norm_bound_l2}
\end{align}
Furthermore, for all $T \in I(x(0), \xi(0))$, we have:
\begin{align}
    \int_0^T \norm{x(t)}^2 \: dt \leq \frac{Q(x(0), 0)}{\mu \gamma \rho} + \frac{L_Q^2}{4 \mu^2(1-\gamma) \gamma \rho^2} \int_0^T \norm{u(x(t), \xi(t), t)}^2 \: dt \:. \label{eq:regret_bound_l2}
\end{align}
Finally, suppose that for all $t \in I(x(0), \xi(0))$ the following inequality holds:
\begin{align*}
    \int_0^t \norm{u(x(s), \xi(s), s)}^2 \: ds \leq B_0 \:.
\end{align*}
Then, the solution $(x(t), \xi(t))$ exists for all $t \geq 0$, and therefore:
\begin{align}
    \int_0^\infty \norm{x(t)}^2 \: dt \leq \frac{Q(x(0), 0)}{\mu \gamma \rho} + \frac{L_Q^2 B_0}{4 \mu^2(1-\gamma) \gamma \rho^2} \:. \label{eq:regret_bound_l2_inf}
\end{align}
\end{proposition}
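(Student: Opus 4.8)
The plan is to run a Lyapunov argument along the closed‑loop $x$‑trajectory, treating the interconnection term $u(x(t),\xi(t),t)$ as an exogenous perturbation that we only control in $L^2$, and then to convert the resulting scalar differential inequality into the three claimed estimates. Set $V(t) := Q(x(t),t)$ and differentiate along the flow:
\[
\dot V(t) = \ip{\nabla_x Q(x(t),t)}{f(x(t),t) + u(x(t),\xi(t),t)} + \tfrac{\partial Q}{\partial t}(x(t),t),
\]
so that the third assumption gives $\dot V \leq -\rho V + \ip{\nabla_x Q(x(t),t)}{u(x(t),\xi(t),t)}$.

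The next ingredient is the pointwise bound $\norm{\nabla_x Q(x,t)} \leq L_Q \norm{x}$. This follows because the first assumption ($Q(0,t)=0$ and $Q(x,t)\geq \mu\norm{x}^2 \geq 0$) forces $x=0$ to be a global minimizer of $Q(\cdot,t)$, hence $\nabla_x Q(0,t)=0$, and then the $L_Q$‑Lipschitzness of $\nabla_x Q(\cdot,t)$ (the second assumption) yields $\norm{\nabla_x Q(x,t)} = \norm{\nabla_x Q(x,t) - \nabla_x Q(0,t)} \leq L_Q\norm{x}$. Combined with $\norm{x(t)}^2 \leq V(t)/\mu$ (again the first assumption), this gives $\ip{\nabla_x Q}{u} \leq L_Q\norm{x(t)}\norm{u(t)}$. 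Now apply Young's inequality $ab \leq c a^2 + \tfrac{1}{4c}b^2$ with $a = \norm{x(t)}$, $b = L_Q\norm{u(t)}$, and $c = (1-\gamma)\rho\mu$ (this particular $c$ is what makes the rate come out right), and then use $\mu\norm{x(t)}^2 \leq V(t)$; this produces $\dot V \leq -\gamma\rho V + \tfrac{L_Q^2}{4(1-\gamma)\rho\mu}\norm{u(t)}^2$.

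Two applications of this ODE inequality finish the first two parts. For \eqref{eq:xt_norm_bound_l2}: multiplying by $e^{\gamma\rho t}$, integrating on $[0,t]$, and bounding $e^{-\gamma\rho(t-s)}\leq 1$ and $e^{-\gamma\rho t}\leq 1$ gives $V(t) \leq Q(x(0),0) + \tfrac{L_Q^2}{4(1-\gamma)\rho\mu}\int_0^t\norm{u(s)}^2\,ds$, and then $\norm{x(t)}^2 \leq V(t)/\mu$ gives the pointwise bound. For \eqref{eq:regret_bound_l2}: integrating the differential inequality directly over $[0,T]$ and discarding the nonnegative term $V(T)$ yields $\gamma\rho\int_0^T V\,dt \leq Q(x(0),0) + \tfrac{L_Q^2}{4(1-\gamma)\rho\mu}\int_0^T\norm{u}^2\,dt$, and dividing by $\mu$ via $\norm{x(t)}^2\leq V(t)/\mu$ gives the $L^2$ bound.

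For the last claim, under the uniform hypothesis $\int_0^t \norm{u}^2\,ds \leq B_0$ the estimate underlying \eqref{eq:xt_norm_bound_l2} shows $\norm{x(t)}$ stays inside a fixed ball on the maximal interval of existence $I(x(0),\xi(0))$; the standard finite‑time blow‑up criterion (escape requires $\norm{(x,\xi)}\to\infty$) then extends the solution to all $t\geq 0$, after which \eqref{eq:regret_bound_l2} holds for every $T$ and letting $T\to\infty$ with the $B_0$ bound yields \eqref{eq:regret_bound_l2_inf}. The main obstacle is precisely this global‑existence step: the Lyapunov estimate only directly confines the $x$‑component, so one must separately rule out escape of the $\xi$‑component under $\dot\xi = g(x,\xi,t)$ — in the situations of interest this is automatic (e.g.\ $\xi=\hat\alpha$ constrained to a compact set, or a monotone parameter flow $\tfrac{d}{dt}\norm{\tilde\alpha}^2\leq 0$), which is what actually closes the argument. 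A secondary, purely bookkeeping obstacle is choosing the Young's‑inequality constant $c$ so the contraction rate of $V$ is exactly $\gamma\rho$ and the coefficients of the final bounds come out as stated.
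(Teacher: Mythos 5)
Your proposal is correct and follows essentially the same route as the paper: differentiate $Q$ along the flow, use $\nabla_x Q(0,t)=0$ plus Lipschitzness to get $\norm{\nabla_x Q(x,t)}\leq L_Q\norm{x}$, apply Young's inequality with exactly the constant that yields decay rate $\gamma\rho$, and then integrate (your direct integration of the differential inequality for the $L^2$ bound is an equivalent shortcut to the paper's integration of the comparison-lemma solution). Your remark that the Lyapunov estimate only confines the $x$-component, so global existence additionally requires ruling out escape of $\xi$, is a fair observation — the paper handles this by invoking Theorem 3.3 of Khalil, which likewise presupposes the trajectory stays in a compact set, so your caveat does not change the verdict.
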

\begin{proof}
Since zero is a global minimum of the map $x \mapsto Q(x, t)$ for all $t$, we have that
$\nabla_x Q(0, t) = 0$ for all $t$.
Therefore, for any $\varepsilon > 0$:
\begin{align*}
    \frac{d}{dt}Q(x, t) &= \ip{\nabla_x Q(x, t)}{f(x, t) + u(x, t)} + \frac{\partial Q}{\partial t}(x, t) \\
    &=  \ip{\nabla_x Q(x, t)}{f(x, t)} + \frac{\partial Q}{\partial t}(x, t) + \ip{\nabla_x Q(x, t)}{u(x, t)} \\
    &\leq - \rho Q(x, t) + \norm{\nabla_x Q(x, t)}\norm{u(x, t)} \\
    &\leq - \rho Q(x, t) + L_Q \norm{x(t)} \norm{u(x, t)} \\
    &\leq - \rho Q(x, t) + \frac{\varepsilon L_Q^2}{2} \norm{x(t)}^2 + \frac{1}{2\varepsilon}\norm{u(x, t)}^2 \\
    &\leq - \gamma\rho Q(x, t) + \left[ - (1-\gamma) \rho + \frac{\varepsilon L_Q^2}{2 \mu} \right] Q(x, t) + \frac{1}{2\varepsilon}\norm{u(x, t)}^2 \:.
\end{align*}
Setting $\varepsilon = 2\mu (1-\gamma)\rho/L_Q^2$,
\begin{align*}
    \frac{d}{dt} Q(x, t) \leq -\gamma\rho Q(x, t) + \frac{L_Q^2}{4 \mu (1-\gamma) \rho} \norm{u(x, t)}^2 \:.
\end{align*}
By the comparison lemma,
\begin{align*}
    \mu\norm{x(t)}^2 &\leq Q(x(t), t) \leq e^{-\gamma \rho t} Q(x(0), 0) + \frac{L_Q^2}{4\mu(1-\gamma)\rho} \int_0^t e^{-\gamma \rho(t-s)} \norm{u(x(s), s)}^2 \: ds \:.
\end{align*}
This establishes \eqref{eq:xt_norm_bound_l2}.
Furthermore, integrating the above inequality from zero to $T$,
\begin{align*}
    \int_0^T \norm{x(t)}^2 \: dt &\leq \frac{Q(x(0), 0)}{\mu} \int_0^T e^{- \gamma \rho t} \: dt + \frac{L_Q^2}{4\mu^2(1-\gamma)\rho} \int_0^T \int_0^t e^{-\gamma\rho(t-s)}\norm{u(x(s), s)}^2 \: ds \: dt \\
    &= \frac{Q(x(0), 0)}{\mu} \int_0^T e^{- \gamma \rho t} \: dt + \frac{L_Q^2}{4\mu^2(1-\gamma)\rho} \int_0^T \left[ \int_0^{T-t} e^{-\gamma\rho s} \: ds \right]\norm{u(x(t), t)}^2 \: dt \\
    &\leq \frac{Q(x(0), 0)}{\mu \gamma \rho} + \frac{L_Q^2}{4 \mu^2(1-\gamma) \gamma \rho^2} \int_0^T \norm{u(x(t), t)}^2 \: dt \:.
\end{align*}
This establishes \eqref{eq:regret_bound_l2}.
The claim \eqref{eq:regret_bound_l2_inf} follows from
\eqref{eq:xt_norm_bound_l2}, \eqref{eq:regret_bound_l2}, and
Theorem 3.3 of \cite{khalilbook}.
\end{proof}

We conclude this section by noting that \eqref{eqn:gflow} cannot be directly implemented
due to the dependence on $\tilde{\alpha}(t)$. In discrete-time, this can be remedied as described in Section~\ref{ssec:ols}. In continuous-time, additional structural requirements are needed, which we briefly describe. Because the quantity $B(x(t), t) Y(x(t), t) \tilde{\alpha}(t)$ is contained in $\dot{x}$, the update \eqref{eqn:gflow} can be implemented through the proportional-integral construction (see e.g., \cite{astolfi03immersion,boffi2020implicit})
\begin{align*}
    \hat{\alpha}(t) &= \bar{\alpha}(t) - Y^\T(x(t), t)B^\T(x(t), t)x(t) + \psi(x(t)) \:, \\
    \dot{\bar{\alpha}}(t) &= Y^\T(x(t), t)B^\T(x(t), t)f(x(t), t) + \frac{\partial \left[Y^\T(x(t), t)B^\T(x(t), t)\right]}{\partial t}x(t) \:.
\end{align*}
Here, $\psi(x)$ is a function that satisfies
\begin{equation*}
    \frac{\partial \psi(x)}{\partial x_i} = \frac{\partial \left[Y(x, t)^\T B(x, t)^\T\right]}{\partial x_i}x \:,
\end{equation*}
i.e., $\frac{\partial \left[Y(x, t)^\T B(x, t)^\T\right]}{\partial x_i}x$ must be the gradient of some auxiliary function $\psi(x)$. In general, this is a strong requirement that may not be satisfied by the system.
\section{Discrete-Time Stability of Zero-Order Hold Closed-Loop Systems}
\label{sec:app:c2d}

In this section, we study under what conditions 
the stability behavior of a continuous-time system
is preserved under discrete sampling. 
In particular, we consider the following continuous-time system $f(x, u, t)$
with a continuous-time feedback law $\pi(x, t)$:
\begin{align*}
    \dot{x}(t) = f(x(t), \pi(x(t), t), t) \:.
\end{align*}
We are interested in understanding the effect of a discrete implementation for the control law $\pi$ via
a zero-order hold at resolution $\tau$, specifically:
\begin{align*}
    \dot{x}(t) = f(x(t), \pi(x(\floor{t/\tau}\tau), \floor{t/\tau}\tau), t) \:.
\end{align*}
We will view this zero-order hold as inducing an associated discrete-time system.
Let the flow map $\Phi(x, s, t)$ denote the solution $\xi(t)$
of the dynamics 
\begin{align*}
   \dot{\xi}(t) = f(\xi(t), \pi(x, s), t) \:, \:\: \xi(s) = x \:.
\end{align*}
For simplicity, we assume in this section that the solution $\Phi(x, s, t)$ exists and is unique.
The closed-loop discrete-time system we consider is
\begin{align*}
    x_{t+1} = g(x_t, t) := \Phi(x_t, \tau t, \tau (t+1)) \:.
\end{align*}
We address two specific questions.
First, if $Q(x, t)$ is a Lyapunov function for $f(x, \pi(x, t), t)$,
when does $(x, t) \mapsto Q(x, \tau t)$ remain a discrete-time Lyapunov 
function for $g(x, t)$?
Similarly, if $M(x, t)$ is a contraction metric for $f(x, \pi(x, t), t)$,
when does $(x, t) \mapsto M(x, \tau t)$
remain a discrete-time contraction metric
for $g(x, t)$? To do so, we will derive upper bounds on the sampling rate to ensure preservation of these stability properties. For simplicity, we perform our analysis at fixed resolution, but irregularly sampled time points may also be used so long as they satisfy our restrictions.

Before we begin our analysis, 
we start with a regularity assumption on both the dynamics $f$ and the
policy $\pi$.
\begin{definition}
\label{def:regular_dynamics}
Let $f(x, u, t)$ and $\pi(x, t)$ be a dynamics and a policy.
We say that $(f, \pi)$ is $(L_f, L_\pi)$-regular if
$f \in C^2$, $\pi \in C^0$, and the following conditions hold:
\begin{enumerate}
    \item $f(0, 0, t) = 0$ \ for all $t$.
    \item $\pi(0, t) = 0$  \ for all $t$.
    \item $\max\left\{ \bignorm{ \frac{\partial f}{\partial x}(x, u, t) }, 
    \bignorm{ \frac{\partial f}{\partial u}(x, u, t) },
    \bignorm{ \frac{\partial^2 f}{\partial x \partial t}(x, u, t) },
    \bignorm{ \frac{\partial^2 f}{\partial u \partial t}(x, u, t) },
    \bignorm{ \frac{\partial^2 f}{\partial x^2}(x, u, t)}
    \right\} \leq L_f$ for all $x, u, t$.
    \item $\bignorm{ \frac{\partial \pi}{\partial x}(x, t) } \leq L_\pi$ for all $x, t$.
\end{enumerate}
\end{definition}

Our first proposition bounds how far the solution
$\Phi(x, s, s + \tau)$ deviates from the initial condition $x$ over a time period $\tau$.
Roughly speaking, the proposition states that the deviation is a constant factor of $\norm{x}$
as long as $\tau$ is on the order of $1/L_f$. Note that for notational simplicity a common bound $L_f$ is used Definition~\ref{def:regular_dynamics}, although our results extends immediately to finer individual bounds.
\begin{proposition}
\label{prop:flow_deviation}
Let $(f, \pi)$ be $(L_f, L_\pi)$-regular. 
Let the flow map $\Phi(x, s, t)$ denote the solution $\xi(t)$
of the dynamics 
\begin{align*}
   \dot{\xi}(t) = f(\xi(t), \pi(x, s), t) \:, \:\: \xi(s) = x \:.
\end{align*}
We have that for any $\tau > 0$:
\begin{align*}
    \norm{\Phi(x, s, s + \tau) - x} \leq (1 + 3L_\pi)( e^{L_f\tau} - 1) \norm{x} \:.
\end{align*}
As a consequence, we have:
\begin{align*}
    \norm{\Phi(x, s, s + \tau)} \leq (e^{L_f \tau} + 3 L_\pi (e^{L_f \tau} - 1))\norm{x} \leq (1 + 3 L_\pi) e^{L_f \tau} \norm{x} \:.
\end{align*}
\end{proposition}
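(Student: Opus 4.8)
The plan is to bound the deviation $\norm{\Phi(x, s, s+\tau) - x}$ by controlling the velocity $\norm{\dot{\xi}(t)}$ along the flow, and then apply Grönwall's inequality. Writing $\xi(t) = \Phi(x, s, t)$, we have $\xi(t) - x = \int_s^t f(\xi(r), \pi(x, s), r) \, dr$, so it suffices to estimate $\norm{f(\xi(r), \pi(x, s), r)}$ in terms of $\norm{\xi(r)}$ and $\norm{x}$. The key is that $f(0, 0, r) = 0$ together with the gradient bounds from Definition~\ref{def:regular_dynamics} let us write, via the fundamental theorem of calculus along a line segment,
\begin{align*}
    \norm{f(\xi(r), \pi(x, s), r)} \leq L_f \norm{\xi(r)} + L_f \norm{\pi(x, s)} \leq L_f \norm{\xi(r)} + L_f L_\pi \norm{x} \:,
\end{align*}
using also $\pi(0, s) = 0$ and the bound $\norm{\partial \pi / \partial x} \leq L_\pi$ to get $\norm{\pi(x, s)} \leq L_\pi \norm{x}$.

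\textbf{Main steps.} First, I would establish the pointwise bound $\norm{f(\xi(r), \pi(x, s), r)} \leq L_f \norm{\xi(r)} + L_f L_\pi \norm{x}$ as above. Second, integrating gives $\norm{\xi(t)} \leq \norm{x} + \int_s^t (L_f \norm{\xi(r)} + L_f L_\pi \norm{x}) \, dr \leq (1 + L_f L_\pi (t-s))\norm{x} + L_f \int_s^t \norm{\xi(r)}\, dr$; applying Grönwall yields $\norm{\xi(t)} \lesssim e^{L_f(t-s)} \norm{x}$ up to the $L_\pi$ correction. Third, I would feed this back into the integral representation of $\xi(t) - x$ to obtain a bound on $\norm{\Phi(x,s,s+\tau) - x}$ of the claimed form $(1 + 3L_\pi)(e^{L_f \tau} - 1)\norm{x}$ — the constant $3$ absorbing the accumulation of the $L_\pi$ terms through both the Grönwall step and the final integration. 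Fourth, the consequence $\norm{\Phi(x,s,s+\tau)} \leq (e^{L_f\tau} + 3L_\pi(e^{L_f\tau}-1))\norm{x}$ follows immediately from the triangle inequality $\norm{\Phi} \leq \norm{x} + \norm{\Phi - x}$, and the final simplification $\leq (1 + 3L_\pi)e^{L_f\tau}\norm{x}$ is elementary since $e^{L_f\tau} - 1 \leq e^{L_f\tau}$.

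\textbf{Main obstacle.} The routine part is the Grönwall estimate; the slightly delicate part is tracking the constants so that the $L_\pi$-dependent pieces collapse into the single factor $(1 + 3L_\pi)$ rather than something messier. One natural approach is a two-stage bound: first get a clean crude bound like $\norm{\xi(t)} \leq (1 + L_\pi) e^{L_f(t-s)}\norm{x}$ from Grönwall, then substitute into $\norm{\xi(t) - x} \leq \int_s^t (L_f \norm{\xi(r)} + L_f L_\pi \norm{x})\, dr$ and verify that the combination of the $(e^{L_f\tau}-1)$ term and the linear-in-$\tau$ term can be bounded by $(1 + 3L_\pi)(e^{L_f\tau}-1)\norm{x}$ using $L_f \tau \leq e^{L_f\tau} - 1$. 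I expect this bookkeeping — and confirming the precise numerical constant $3$ works for all $\tau > 0$ — to be the only real point requiring care.
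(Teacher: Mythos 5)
Your proposal is correct and follows the same skeleton as the paper's proof: bound $\norm{\dot{\xi}(t)}$ affinely in the deviation and $\norm{x}$ using the regularity conditions, then close with Gr\"onwall/the comparison lemma. The one substantive difference is the telescoping used to bound $\norm{f(\xi,\pi(x,s),t)}$: the paper inserts the intermediate point $f(x,\pi(x,t),t)$, which forces it to bound $\norm{\pi(x,s)-\pi(x,t)} \leq 2L_\pi\norm{x}$ (by passing through $\pi(0,\cdot)$) and is the source of the constant $3$; your decomposition goes directly through $f(0,\pi(x,s),t)$ and $f(0,0,t)$, giving $\norm{f(\xi,\pi(x,s),t)} \leq L_f\norm{\xi} + L_fL_\pi\norm{x} \leq L_f\norm{\xi-x} + L_f(1+L_\pi)\norm{x}$, whence the comparison lemma yields the strictly sharper bound $(1+L_\pi)(e^{L_f\tau}-1)\norm{x}$, and the stated $(1+3L_\pi)$ bound follows a fortiori. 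One caution on your step two: the crude integral Gr\"onwall conclusion $\norm{\xi(t)} \leq \bigl(1+L_fL_\pi(t-s)\bigr)e^{L_f(t-s)}\norm{x}$ is \emph{not} bounded by $(1+cL_\pi)e^{L_f(t-s)}\norm{x}$ for any fixed $c$ when $\tau$ is large, so that route does not close for all $\tau>0$; instead apply the differential comparison lemma directly to $v(t)=\norm{\xi(t)-x}$ (or use the exact, non-crude form of integral Gr\"onwall), which gives $v(s+\tau) \leq (1+L_\pi)(e^{L_f\tau}-1)\norm{x}$ in one step and makes your "feed back into the integral" stage unnecessary.
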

\begin{proof}
The proof follows by a direct application of the comparison lemma.
We use the Lipschitz properties of both $f$ and $\pi$, which
are implied by the regularity assumptions,
to establish the necessary differential inequality.
Let $v(t) := \norm{\xi(t) - x}$.
We note for any signal $z(t)$, we have $\frac{d}{dt} \norm{z(t)} \leq \norm{\dot{z}}$.
Therefore, setting $\xi = \xi(t)$ to simplify the notation:
\begin{align*}
    \frac{d}{dt} v(t) &\leq \norm{ \dot{\xi}(t) } \\
    &= \norm{ f(\xi, \pi(x, s), t) } \\
    &= \norm{ f(\xi, \pi(x, s), t) - f(x, \pi(x, t), t) + f(x, \pi(x, t), t) - f(0, 0, t) } \\
    &\leq \norm{ f(\xi, \pi(x, s), t) - f(x, \pi(x, t), t) } + \norm{ f(x, \pi(x, t), t) - f(0, 0, t) } \\
    &=: T_1 + T_2 \:.
\end{align*}
Next,
\begin{align*}
    T_1 &= \norm{ f(\xi, \pi(x, s), t) - f(x, \pi(x, t), t) } \\
    &= \norm{ f(\xi, \pi(x, s), t) - f(x, \pi(x, s), t) + f(x, \pi(x, s), t) - f(x, \pi(x, t), t) } \\
    &\leq L_f \norm{\xi - x} + L_f \norm{\pi(x, s) - \pi(x, t)} \\
    &= L_f \norm{\xi - x} + L_f \norm{\pi(x, s) - \pi(0, s) + \pi(0, t) - \pi(x, t)} \\
    &\leq L_f \norm{\xi - x} + 2 L_f L_\pi \norm{x} \:.
\end{align*}
Also,
\begin{align*}
    T_2 &= \norm{ f(x, \pi(x, t), t) - f(0, 0, t) } \\
    &= \norm{ f(x, \pi(x, t), t) - f(0, \pi(x, t), t) + f(0, \pi(x, t), t) - f(0, 0, t) } \\
    &\leq L_f \norm{x} + L_f \norm{\pi(x, t)} \\
    &= L_f \norm{x} + L_f \norm{\pi(x, t) - \pi(0, t)} \\
    &\leq L_f (1 + L_\pi) \norm{x} \:.
\end{align*}
Therefore we have the following differential inequality:
\begin{align*}
    \frac{d}{dt} v(t) \leq L_f v(t) + L_f (1 + 3 L_\pi) \norm{x} \:.
\end{align*}
The claim now follows by the comparison lemma.
\end{proof}
The next proposition shows that the error of the
forward Euler approximation of the flow map $\Phi(x, s, s + \tau)$
and also its derivative $\frac{\partial \Phi}{\partial x}(x, s, s + \tau)$ 
scales as $O(\tau^2)$.
\begin{proposition}
\label{prop:forward_euler_approx}
Let $(f, \pi)$ be $(L_f, L_\pi)$-regular,
with $\min\{L_f, L_\pi\} \geq 1$.
Let $\Phi(x, s, t)$ be the solution $\xi(t)$ for the 
dynamics 
\begin{align*}
    \dot{\xi}(t) = f(\xi(t), \pi(x, s), t) \:, \:\: \xi(s) = x \:.
\end{align*}
Fix any $\tau > 0$.
We have that:
\begin{align}
    \norm{\Phi(x, s, s + \tau) - (x + \tau f(x, \pi(x, s), s))} \leq 5 \tau^2 L_f^2 L_\pi e^{L_f \tau} \norm{x} \:. \label{eq:forward_euler_approx}
\end{align}
We also have:
\begin{align}
    \bignorm{ \frac{\partial \Phi}{\partial x}(x, s, s + \tau) - \left(I + \tau \frac{\partial f}{\partial x}(x, \pi(x, s), s)\right)} \leq \frac{7\tau^2}{2} L_f^2 L_\pi e^{2L_f \tau} \max\{1, \norm{x}\} \:. \label{eq:forward_euler_variation_approx}
\end{align}
\end{proposition}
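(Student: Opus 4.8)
The plan is to prove both bounds by the same mechanism: express $\Phi$ — and in the second part its derivative in the initial condition — through its Volterra integral equation, subtract the stated first-order Euler term, and estimate the resulting integral remainder by controlling how far the integrand has drifted from its value at the left endpoint $s$. For \eqref{eq:forward_euler_approx}, set $\xi(\cdot) := \Phi(x,s,\cdot)$, so that $\dot\xi = f(\xi,\pi(x,s),\cdot)$, $\xi(s)=x$, and
\[
\Phi(x,s,s+\tau) - x - \tau f(x,\pi(x,s),s) = \int_s^{s+\tau}\!\big(f(\xi(r),\pi(x,s),r) - f(x,\pi(x,s),s)\big)\,dr.
\]
I would bound the integrand by splitting it into a state-drift part $f(\xi(r),\pi(x,s),r) - f(x,\pi(x,s),r)$ and a time-drift part $f(x,\pi(x,s),r) - f(x,\pi(x,s),s)$. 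The state-drift part is at most $L_f\norm{\xi(r)-x}$, and Proposition~\ref{prop:flow_deviation}, combined with $e^{L_f(r-s)}-1 \le L_f(r-s)e^{L_f\tau}$ and $1+3L_\pi\le 4L_\pi$, bounds $\norm{\xi(r)-x}$ by $4L_f L_\pi(r-s)e^{L_f\tau}\norm{x}$. The time-drift part is at most $(r-s)\sup_u\norm{\tfrac{\partial f}{\partial t}(x,\pi(x,s),u)}$; using $\tfrac{\partial f}{\partial t}(0,0,u)=0$ (from $f(0,0,\cdot)\equiv 0$), $\pi(0,s)=0$, and the bounds on $\tfrac{\partial^2 f}{\partial x\partial t}$ and $\tfrac{\partial^2 f}{\partial u\partial t}$, this is at most $(r-s)L_f(\norm{x}+\norm{\pi(x,s)}) \le 2L_f L_\pi(r-s)\norm{x}$. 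Integrating $(r-s)$ and $(r-s)e^{L_f\tau}$ over $[s,s+\tau]$ produces the factor $\tfrac{\tau^2}{2}$, and collecting constants — using $L_f\ge 1$ and $e^{L_f\tau}\ge 1$ to absorb the lower-order piece into the exponential one — yields \eqref{eq:forward_euler_approx}.

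For \eqref{eq:forward_euler_variation_approx}, let $\Psi(r) := \tfrac{\partial\Phi}{\partial x}(x,s,r)$ denote the derivative of the flow in its initial condition, the frozen control value $\pi(x,s)$ being held as a parameter; since $f\in C^2$ this is the principal matrix solution of the variational equation $\dot\Psi(r) = \tfrac{\partial f}{\partial x}(\xi(r),\pi(x,s),r)\Psi(r)$, $\Psi(s)=I$. I would first obtain the crude a priori bound $\norm{\Psi(r)} \le e^{L_f(r-s)}$ from the comparison lemma applied to $\tfrac{d}{dr}\norm{\Psi(r)} \le \norm{\dot\Psi(r)} \le L_f\norm{\Psi(r)}$. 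Then, exactly as above,
\[
\Psi(s+\tau) - I - \tau\tfrac{\partial f}{\partial x}(x,\pi(x,s),s) = \int_s^{s+\tau}\!\Big(\tfrac{\partial f}{\partial x}(\xi(r),\pi(x,s),r)\Psi(r) - \tfrac{\partial f}{\partial x}(x,\pi(x,s),s)\Big)\,dr,
\]
and I would split the integrand as $\big(\tfrac{\partial f}{\partial x}(\xi(r),\pi(x,s),r) - \tfrac{\partial f}{\partial x}(x,\pi(x,s),s)\big)\Psi(r) + \tfrac{\partial f}{\partial x}(x,\pi(x,s),s)\big(\Psi(r)-I\big)$. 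The first piece is bounded, via the $\tfrac{\partial^2 f}{\partial x^2}$ and $\tfrac{\partial^2 f}{\partial x\partial t}$ bounds, by $\big(L_f\norm{\xi(r)-x} + L_f(r-s)\big)\norm{\Psi(r)}$, which using the bound for $\norm{\xi(r)-x}$ established above together with $\norm{\Psi(r)}\le e^{L_f\tau}$ is of order $(r-s)L_f^2 L_\pi e^{2L_f\tau}\max\{1,\norm{x}\}$; the $\max\{1,\norm{x}\}$ enters precisely because the $L_f(r-s)$ summand carries no factor of $\norm{x}$. The second piece is at most $L_f\norm{\Psi(r)-I}$, and $\norm{\Psi(r)-I} = \norm{\int_s^r \dot\Psi(u)\,du} \le \int_s^r L_f\norm{\Psi(u)}\,du \le L_f(r-s)e^{L_f\tau}$, so it is of order $(r-s)L_f^2 e^{L_f\tau}$. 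Integrating over $[s,s+\tau]$ again produces $\tfrac{\tau^2}{2}$, and collecting constants (absorbing lower-order terms with $L_f,L_\pi\ge 1$) gives \eqref{eq:forward_euler_variation_approx}.

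The main obstacle is \eqref{eq:forward_euler_variation_approx}: one must correctly set up (and justify) the variational equation and then track several distinct $O(\tau)$ pointwise contributions to the remainder — the drift of $\tfrac{\partial f}{\partial x}$ in its state argument, its drift in time, and the drift of $\Psi$ away from $I$ — combining them with the right numerical constants so that they collapse into the stated $\tfrac{7}{2}$, $L_f^2 L_\pi$, $e^{2L_f\tau}$, and $\max\{1,\norm{x}\}$. Estimate \eqref{eq:forward_euler_approx} is conceptually identical but easier, since there is no variational flow to control and Proposition~\ref{prop:flow_deviation} supplies the needed a priori bound directly.
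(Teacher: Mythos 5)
Your proposal is correct, and it reaches both estimates by a slightly different mechanism than the paper. The paper applies Taylor's theorem in the Lagrange form: it computes $\frac{\partial^2 \Phi}{\partial t^2}$ and $\frac{\partial^2 \Psi}{\partial t^2}$ explicitly via the chain rule (producing terms like $\frac{\partial f}{\partial x}\cdot f$, $\frac{\partial f}{\partial t}$, and for the variational flow also $\frac{\partial^2 f}{\partial x^2}\cdot f$ and $\frac{\partial^2 f}{\partial t\partial x}$), evaluates them at an intermediate time $\iota$, and bounds each factor using Proposition~\ref{prop:flow_deviation} and the regularity assumptions. You instead write the Euler error in integral (Volterra) form, $\int_s^{s+\tau}\bigl(g(r)-g(s)\bigr)\,dr$, and bound the increment $g(r)-g(s)$ by Lipschitz estimates on the first derivatives of $f$, splitting into state-drift, time-drift, and (for the variational flow) the drift of $\Psi$ away from $I$. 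The two routes consume the same hypotheses — your Lipschitz constants for $\frac{\partial f}{\partial x}$ and $\frac{\partial f}{\partial t}$ are exactly the paper's bounds on the second derivatives — but yours avoids differentiating the flow twice and, as a side effect, yields slightly sharper numerical constants (a factor $3$ in place of $5$ in \eqref{eq:forward_euler_approx}, and $3$ in place of $\tfrac{7}{2}$ in \eqref{eq:forward_euler_variation_approx}, so the stated bounds follow a fortiori). All the supporting estimates you invoke — the bound $\norm{\xi(r)-x}\le (1+3L_\pi)(e^{L_f(r-s)}-1)\norm{x}$ from Proposition~\ref{prop:flow_deviation}, the vanishing of $\frac{\partial f}{\partial t}(0,0,\cdot)$, the a priori bound $\norm{\Psi(r)}\le e^{L_f(r-s)}$, and the identification of $\frac{\partial\Phi}{\partial x}$ with the principal solution of the variational equation with the control frozen — match what the paper uses.
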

\begin{proof}
We first differentiate $\Phi(x, s, t)$ w.r.t.\ $t$ twice:
\begin{align*}
    \frac{\partial \Phi}{\partial t}(x, s, t) &= f(\xi(t), \pi(x, s), t) \:, \\
    \frac{\partial^2 \Phi}{\partial t^2}(x, s, t) &= \frac{df}{dt}(\xi(t), \pi(x, s), t) = \frac{\partial f}{\partial x}(\xi(t), \pi(x, s), t) f(\xi(t), \pi(x, s), t) + \frac{\partial f}{\partial t}(\xi(t), \pi(x, s), t) \:.
\end{align*}
By Taylor's theorem, there exists some $\iota \in [s, s + \tau]$ such that:
\begin{align*}
    &\Phi(x, s, s + \tau) = \Phi(x, s, s) + \frac{\partial \Phi}{\partial t} (x, s, s) \tau + \frac{\tau^2}{2} \frac{\partial^2 \Phi }{\partial t^2} (x, s, \iota) \\
    &= x + \tau f(x, \pi(x, s), s) + \frac{\tau^2}{2} \left(\frac{\partial f}{\partial x}(\xi(\iota), \pi(x, s), \iota) f(\xi(\iota), \pi(x, s), \iota) + \frac{\partial f}{\partial t}(\xi(\iota), \pi(x, s), \iota) \right)\:.
\end{align*}
In order to bound the error term above, we make a few intermediate calculations.
We use Proposition~\ref{prop:flow_deviation} to bound:
\begin{align*}
    \norm{ f(\xi(\iota), \pi(x, s), \iota) } &= \norm{ f(\xi(\iota), \pi(x, s), \iota) - f(0, \pi(x, s), \iota) + f(0, \pi(x, s), \iota) - f(0, 0, \iota) } \\
    &\leq L_f \norm{\xi(\iota)} + L_f \norm{\pi(x, s)} \\
    &= L_f \norm{\xi(\iota)} + L_f \norm{\pi(x, s) - \pi(0, s)} \\
    &\leq L_f \norm{\xi(\iota)} + L_f L_\pi \norm{x} \\
    &\leq L_f (1+3L_\pi) e^{L_f \tau} \norm{x} + L_f L_\pi \norm{x} \\
    &\leq L_f (1 + 4 L_\pi) e^{L_f \tau} \norm{x} \\
    &\leq 5 L_f L_\pi e^{L_f \tau} \norm{x} \:.
\end{align*}
Again we use Proposition~\ref{prop:flow_deviation}, along with the
fact that $\frac{\partial f}{\partial t}(0, 0, t) = 0$ for all $t$ due to the regularity assumptions on $f$,
to bound:
\begin{align*}
    \bignorm{\frac{\partial f}{\partial t}(\xi(\iota), \pi(x, s), \iota)} &= \bignorm{
    \frac{\partial f}{\partial t}(\xi(\iota), \pi(x, s), \iota)
    - \frac{\partial f}{\partial t}(0, \pi(x, s), \iota) 
    + \frac{\partial f}{\partial t}(0, \pi(x, s), \iota) - \frac{\partial f}{\partial t}(0, 0, \iota) } \\
    &\leq L_f \norm{\xi(\iota)} + L_f \norm{\pi(x, s)} \\
    &\leq L_f \norm{\xi(\iota)} + L_f L_\pi \norm{x} \\
    &\leq 5 L_f L_\pi e^{L_f \tau} \norm{x} \:.
\end{align*}
Therefore:
\begin{align*}
    &\norm{\Phi(x, s, s + \tau) - (x + \tau f(x, \pi(x, s), s))} \\
    &\leq \bignorm{ \frac{\tau^2}{2} \left(\frac{\partial f}{\partial x}(\xi(\iota), \pi(x, s), \iota) f(\xi(\iota), \pi(x, s), \iota) + \frac{\partial f}{\partial t}(\xi(\iota), \pi(x, s), \iota) \right) } \\
    &\leq 5 \tau^2 L_f^2 L_\pi e^{L_f \tau} \norm{x} \:.
\end{align*}
This establishes \eqref{eq:forward_euler_approx}.

Next, let $\Psi(x, s, t)$ be the solution $\Xi(t)$ for the matrix-valued dynamics:
\begin{align*}
    \dot{\Xi}(t) = \frac{\partial f}{\partial x}(\xi(t), \pi(x, s), t) \Xi(t) \:, \:\: \Xi(s) = I \:.
\end{align*}
A standard result in the theory of ordinary differential equations
states that $\frac{\partial \Phi}{\partial x}(x, s, t) = \Psi(x, s, t)$.
We can bound the norm $\norm{\Psi(x, s, t)}$ as follows:
\begin{align*}
    \norm{\Psi(x, s, t)} &= \bignorm{\exp\left( \int_s^t \frac{\partial f}{\partial x}(\xi(\tau), \pi(x, s), \tau) \: d\tau \right)} \\
    &\leq \exp\left( \int_s^t \bignorm{ \frac{\partial f}{\partial x}(\xi(\tau), \pi(x, s), \tau) } \: d\tau \right) \leq \exp( L_f (t - s) ) \:.
\end{align*}
Furthermore, differentiating $\Psi$ w.r.t. $t$ twice:
\begin{align*}
    \frac{\partial \Psi}{\partial t}(x, s, t) &=  \frac{\partial f}{\partial x}(\xi(t), \pi(x, s), t) \Xi(t) \:, \\
    \frac{\partial^2 \Psi}{\partial t^2}(x, s, t) &= \frac{d}{dt}\left(\frac{\partial f}{\partial x}(\xi(t), \pi(x, s), t) \Xi(t)\right)\\
    &= \frac{\partial f}{\partial x}(\xi(t), \pi(x, s), t) \frac{\partial f}{\partial x}(\xi(t), \pi(x, s), t) \Xi(t) \\
    &\qquad + \left(  \frac{\partial^2 f}{\partial x^2}(\xi(t), \pi(x, s), t) f(\xi(t), \pi(x, s), t) + \frac{\partial^2 f}{\partial t \partial x} (\xi(t), \pi(x, s), t) \right)\Xi(t) \:.
\end{align*}
By Taylor's theorem, there exists an $\iota \in [s, s + \tau]$ such that:
\begin{align*}
    \Psi(x, s, s + \tau) &= \Psi(x, s, s) + \tau \frac{\partial \Psi}{\partial t}(x, s, s) + \frac{\tau^2}{2} \frac{\partial^2 \Psi}{\partial t^2}(x, s, \iota) \\
    &= I + \tau \frac{\partial f}{\partial x}(x, \pi(x, s), s) + \frac{\tau^2}{2} \frac{\partial^2 \Psi}{\partial t^2}(x, s, \iota) \:.
\end{align*}
Using the estimate on $\norm{\Psi(x, s, t)}$ above, we bound:
\begin{align*}
    \bignorm{ \frac{\partial f}{\partial x}(\xi(\iota), \pi(x, s), \iota) \frac{\partial f}{\partial x}(\xi(\iota), \pi(x, s), \iota) \Xi(\iota) } \leq L_f^2 e^{L_f \tau} \:. 
\end{align*}
Furthermore by the estimates on $\norm{\Psi(x, s, t)}$ and $\norm{f(\xi(t), \pi(x, s), t)}$,
\begin{align*}
    &\bignorm{ \left(  \frac{\partial^2 f}{\partial x^2}(\xi(\iota), \pi(x, s), \iota) f(\xi(\iota), \pi(x, s), \iota) + \frac{\partial^2 f}{\partial t \partial x} (\xi(\iota), \pi(x, s), \iota) \right)\Xi(\iota) } \\
    &\leq L_f (1 + \norm{ f(\xi(\iota), \pi(x, s), \iota)}) e^{L_f \tau} \\
    &\leq L_f (1 + 5 L_f L_\pi e^{L_f \tau} \norm{x}) e^{L_f \tau} \\
    &\leq 6 L_f^2 L_\pi e^{2L_f \tau} \max\{1, \norm{x}\} \:.
\end{align*}
Therefore:
\begin{align*}
    \bignorm{ \Psi(x, s, s + \tau) - \left(I + \tau \frac{\partial f}{\partial x}(x, \pi(x, s), s)\right)} 
    &\leq \bignorm{ \frac{\tau^2}{2} \frac{\partial^2 \Psi}{\partial t^2}(x, s, \iota) } \\
    &\leq \frac{\tau^2}{2} \left[ L_f^2 e^{L_f \tau} + 6 L_f^2 L_\pi e^{2L_f \tau} \max\{1, \norm{x}\} \right] \\
    &\leq \frac{7\tau^2}{2} L_f^2 L_\pi e^{2L_f \tau} \max\{1, \norm{x}\} \:.
\end{align*}
This establishes \eqref{eq:forward_euler_variation_approx}.
\end{proof}

Our first main result gives conditions on $\tau$
for which Lyapunov stability is preserved with zero-order holds.
\begin{theorem}
\label{thm:stability_zoh}
Let $(f, \pi)$ be $(L_f, L_\pi)$-regular,
with $\min\{L_f, L_\pi\} \geq 1$.
Let $\Phi(x, s, t)$ be the solution $\xi(t)$ for the 
dynamics 
\begin{align*}
    \dot{\xi}(t) = f(\xi(t), \pi(x, s), t) \:, \:\: \xi(s) = x \:.
\end{align*}
Let $Q(x, t) \in C^2$ be a Lyapunov function that satisfies, for positive $\mu, \rho$ and
$L_Q \geq 1$, the conditions:
\begin{enumerate}
    \item $Q(0, t) = 0$ for all $t$.
    \item $Q(x, t) \geq \mu \norm{x}^2$ for all $x, t$.
    \item $\ip{\nabla_x Q(x, t)}{f(x, \pi(x, t), t)} + \frac{\partial Q}{\partial t}(x, t) \leq -\rho Q(x, t)$ for all $x, t$.
    \item $\bignorm{\frac{\partial^2 Q}{\partial x^2}(x, t)} \leq L_Q$ for all $x, t$.
    \item $\bignorm{ \frac{\partial^2 Q}{\partial t \partial x}(x, t) - \frac{\partial^2 Q}{\partial t \partial x}(y, t) } \leq L_Q \norm{x-y}$ for all $x,y,t$.
    \item $\bigabs{\frac{\partial^2 Q}{\partial t^2}(x, t)} \leq L_Q \norm{x}^2$ for all $x, t$.
\end{enumerate}
Fix a $\gamma \in (0, 1)$ and $\tau > 0$.
Define the discrete-time system $g(x, t) := \Phi(x, \tau t, \tau (t+1))$.
As long as $\tau$ satisfies:
\begin{align*}
    \tau \leq \min\left\{ \frac{1}{L_f}, \frac{1}{\gamma\rho},  \frac{2(1-\gamma) \rho \mu}{895 L_Q L_f^2 L_\pi^2}   \right\} \:,
\end{align*}
then the function $V(x, t) := Q(x, \tau t)$ 
is a valid Lyapunov function for $g(x, t)$ with rate $(1-\gamma\tau\rho)$, i.e.,\ for all $x, t$:
\begin{align}
    V(g(x, t), t+1) \leq (1 - \gamma \tau \rho) V(x, t) \:. \label{eq:stability_zoh_lyapunov}
\end{align}
\end{theorem}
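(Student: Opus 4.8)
The plan is to reduce the one-step discrete decrease to a second-order Taylor expansion of the continuous flow. Fix $x$ and $t$, write $s := \tau t$, and define $\phi(\sigma) := Q(\Phi(x, s, s+\sigma), s+\sigma)$ for $\sigma \in [0,\tau]$, so that $\phi(0) = V(x,t)$ and $\phi(\tau) = V(g(x,t), t+1)$. Since $f \in C^2$, the flow $\Phi(x,s,\cdot)$ is $C^2$ in its last argument, hence $\phi \in C^2([0,\tau])$ and Taylor's theorem produces an $\iota \in [0,\tau]$ with $\phi(\tau) = \phi(0) + \tau\phi'(0) + \tfrac{\tau^2}{2}\phi''(\iota)$. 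Writing $\xi(\sigma) := \Phi(x,s,s+\sigma)$, which satisfies $\dot\xi(\sigma) = f(\xi(\sigma), \pi(x,s), s+\sigma)$, the chain rule gives $\phi'(0) = \ip{\nabla_x Q(x,s)}{f(x,\pi(x,s),s)} + \partial_t Q(x,s) \leq -\rho Q(x,s)$ by hypothesis~3. Therefore $\phi(\tau) \leq (1-\tau\rho) Q(x,s) + \tfrac{\tau^2}{2}\phi''(\iota)$, and it remains to show the remainder is at most $(1-\gamma)\tau\rho\, Q(x,s)$; using $Q(x,s) \geq \mu\norm{x}^2$ and the stated bound on $\tau$, this reduces to the estimate $\abs{\phi''(\iota)} \leq 895\, L_Q L_f^2 L_\pi^2 \norm{x}^2$.

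The core of the argument is this bound on $\phi''$. Differentiating once more, using $\dot\xi = f(\xi,\pi(x,s),\cdot)$, and invoking symmetry of the mixed partials of $Q$, one gets (all functions evaluated along the flow at time $s+\sigma$)
\begin{align*}
\phi''(\sigma) = \ip{\tfrac{\partial^2 Q}{\partial x^2}\dot\xi}{\dot\xi} + 2\ip{\tfrac{\partial^2 Q}{\partial t\partial x}}{\dot\xi} + \ip{\nabla_x Q}{\tfrac{\partial f}{\partial x}\dot\xi} + \ip{\nabla_x Q}{\tfrac{\partial f}{\partial t}} + \tfrac{\partial^2 Q}{\partial t^2}.
\end{align*}
Each of the five terms is $O(\norm{x}^2)$. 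Proposition~\ref{prop:flow_deviation} gives $\norm{\xi(\sigma)} \leq (1+3L_\pi)e^{L_f\tau}\norm{x}$; the intermediate estimates in the proof of Proposition~\ref{prop:forward_euler_approx} (which use $f(0,0,t)=0$ and $\pi(0,t)=0$) give $\norm{\dot\xi}$ and $\norm{\partial_t f(\xi,\pi(x,s),\cdot)}$ both at most $5 L_f L_\pi e^{L_f\tau}\norm{x}$; the identities $\nabla_x Q(0,t)=0$ and $\tfrac{\partial^2 Q}{\partial t\partial x}(0,t)=0$ (consequences of $Q(0,t)=0$ being the global minimum, together with hypotheses 1 and 5) combined with hypothesis 4 and hypothesis 5 give $\norm{\nabla_x Q(\xi,\cdot)}$ and $\norm{\tfrac{\partial^2 Q}{\partial t\partial x}(\xi,\cdot)}$ both at most $L_Q\norm{\xi}$; hypothesis 4 bounds the Hessian by $L_Q$, hypothesis 6 bounds $\tfrac{\partial^2 Q}{\partial t^2}$ by $L_Q\norm{\xi}^2$, and $\norm{\tfrac{\partial f}{\partial x}} \leq L_f$. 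Substituting these bounds, using $\tau \leq 1/L_f$ so that $e^{L_f\tau}\leq e$, and collecting terms with $L_f, L_\pi \geq 1$ yields a constant of at most $(25+80+16)e^2 = 121 e^2 \leq 895$ times $L_Q L_f^2 L_\pi^2\norm{x}^2$, which is the claimed bound.

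Combining the two steps gives $\phi(\tau) \leq (1-\tau\rho)Q(x,s) + (1-\gamma)\tau\rho\, Q(x,s) = (1-\gamma\tau\rho)Q(x,s)$, which is exactly \eqref{eq:stability_zoh_lyapunov}; the remaining requirement $\tau \leq 1/(\gamma\rho)$ is used only to ensure the resulting contraction factor $1-\gamma\tau\rho$ lies in $[0,1)$ (so that the statement is non-vacuous). I expect the main obstacle to be the second step: one must carefully expand $\phi''$, track which quantities vanish at the origin in order to obtain bounds proportional to $\norm{x}^2$ rather than merely bounded, and control the numerical constant tightly enough to reach $895$. The rest is routine, being a direct application of Taylor's theorem together with the a priori flow estimates of Propositions~\ref{prop:flow_deviation} and~\ref{prop:forward_euler_approx}.
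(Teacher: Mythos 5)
Your proposal is correct and follows essentially the same route as the paper's proof: the same scalar function $h(t)=Q(\Phi(x,s,t),t)$, the same second-order Taylor expansion with the continuous-time Lyapunov inequality absorbing the first-order term, the same intermediate flow estimates from Propositions~\ref{prop:flow_deviation} and~\ref{prop:forward_euler_approx}, the same vanishing-at-the-origin identities $\nabla_x Q(0,t)=0$ and $\tfrac{\partial^2 Q}{\partial t\partial x}(0,t)=0$ to get bounds proportional to $\norm{x}^2$, and the same final constant $121e^2\leq 895$ (your grouping $25+80+16$ matches the paper's $40+45+36$). No gaps.
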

\begin{proof}
We define the function $h(t) := Q(\Phi(x, s, t), t)$.
Differentiating $h$ twice,
\begin{align*}
    \frac{\partial h}{\partial t}(t) &= \frac{\partial Q}{\partial x}(\Phi(x, s, t), t)f(\xi(t), \pi(x, s), t) + \frac{\partial Q}{\partial t}(\Phi(x, s, t), t) \:, \\
    \frac{\partial^2 h}{\partial t^2}(t) &= \frac{\partial Q}{\partial x}(\Phi(x, s, t), t)\left( \frac{\partial f}{\partial x}(\xi(t), \pi(x, s), t) f(\xi(t), \pi(x, s), t) + \frac{\partial f}{\partial t}(\xi(t), \pi(x, s), t) \right) \\
    &\qquad + \left( \frac{\partial^2 Q}{\partial x^2}(\Phi(x, s, t), t) f(\xi(t), \pi(x, s), t) + \frac{\partial^2 Q}{\partial t \partial x} (\Phi(x, s, t), t)  \right) f(\xi(t), \pi(x, s), t) \\
    &\qquad + \frac{\partial^2 Q}{\partial x \partial t}(\Phi(x, s, t), t) f(\xi(t), \pi(x, s), t) + \frac{\partial^2 Q}{\partial t^2}(\Phi(x, s, t), t) \:.
\end{align*}
By Taylor's theorem, there exists an $\iota \in [s, s + \tau]$ such that:
\begin{align}
    h(s + \tau) &= h(s) + \tau \frac{\partial h}{\partial t}(s) + \frac{\tau^2}{2} \frac{\partial^2 h}{\partial t^2}(\iota) \nonumber \\
    &= Q(x, s) + \tau \left[ \ip{\nabla_x Q(x, s)}{f(x, \pi(x, s), s)} + \frac{\partial Q}{\partial t}(x, s) \right] + \frac{\tau^2}{2} \frac{\partial^2 h}{\partial t^2}(\iota) \nonumber \\
    &\leq Q(x, s) - \tau \rho Q(x, s) + \frac{\tau^2}{2} \frac{\partial^2 h}{\partial t^2}(\iota) \nonumber \\
    &= (1-\tau \rho \gamma) Q(x, s) - \tau \rho (1-\gamma) Q(x, s) + \frac{\tau^2}{2} \frac{\partial^2 h}{\partial t^2}(\iota) \nonumber \\
    &\leq (1-\tau \rho \gamma) Q(x, s) - \tau \rho (1-\gamma) \mu \norm{x}^2 + \frac{\tau^2}{2} \frac{\partial^2 h}{\partial t^2}(\iota) \:. \label{eq:stability_zoh_taylor}
\end{align}
Above, the first inequality follows from the continuous-time Lyapunov condition.
The remainder of the proof
focuses on estimating a bound for $\bigabs{\frac{\partial^2 h}{\partial t^2}(t)}$.
First, we collect a few useful facts.
Since zero is a global minimum of $x \mapsto Q(x, t)$ for every $t$,
we have that $\frac{\partial Q}{\partial x}(0, t) = 0$ for every $t$. Therefore:
\begin{align*}
    \bignorm{ \frac{\partial Q}{\partial x}(\xi(\iota), \iota) } &= \bignorm{ \frac{\partial Q}{\partial x}(\xi(\iota), \iota) - \frac{\partial Q}{\partial x}(0, \iota) } \leq L_Q \norm{\xi(\iota)} \\
    &\leq L_Q (1 + 3 L_\pi) e^{L_f \tau} \norm{x} \leq 4 L_Q L_\pi e^{L_f \tau} \norm{x} \:.
\end{align*}
Above, the second to last inequality follows
from Proposition~\ref{prop:flow_deviation}.
Next, the proof of Proposition~\ref{prop:forward_euler_approx}
derives the following estimates:
\begin{align*}
    \max\left\{\norm{f(\xi(\iota), \pi(x, s), \iota)}, \bignorm{\frac{\partial f}{\partial t}(\xi(t), \pi(x, s), t)}\right\} \leq 5 L_f L_\pi e^{L_f \tau} \norm{x} \:.
\end{align*}
Using these estimates, we can bound:
\begin{align*}
    &\bigabs{\frac{\partial Q}{\partial x}(\xi(\iota), \iota)\left( \frac{\partial f}{\partial x}(\xi(\iota), \pi(x, s), \iota) f(\xi(\iota), \pi(x, s), \iota) + \frac{\partial f}{\partial t}(\xi(\iota), \pi(x, s), \iota) \right) } \\
    &\leq 4 L_Q L_\pi e^{L_f \tau} \norm{x} \cdot \left[ 5 L_f^2 L_\pi e^{L_f \tau} \norm{x} + 5 L_f L_\pi e^{L_f \tau} \norm{x} \right] \\
    &\leq 40 L_Q L_f^2 L_\pi^2 e^{2 L_f \tau} \norm{x}^2 \:.
\end{align*}
Next, we observe that $\frac{\partial^2 Q}{\partial t \partial x}(0, t) = 0$ for all $t$,
which allows us to bound:
\begin{align*}
    &\bigabs{  \left( \frac{\partial^2 Q}{\partial x^2}(\xi(\iota), \iota) f(\xi(\iota), \pi(x, s), \iota) + \frac{\partial^2 Q}{\partial t \partial x} (\xi(\iota), \iota)  \right) f(\xi(\iota), \pi(x, s), \iota)  } \\
    &\leq L_Q \norm{f(\xi(\iota), \pi(x, s), \iota)}^2 + \bignorm{ \frac{\partial^2 Q}{\partial t \partial x} (\xi(\iota), \iota) }\norm{f(\xi(\iota), \pi(x, s), \iota)} \\
    &\leq L_Q \norm{f(\xi(\iota), \pi(x, s), \iota)}^2 + L_Q \norm{\xi(\iota)} \norm{f(\xi(\iota), \pi(x, s), \iota)} \\
    &\leq 25 L_Q L_f^2 L_\pi^2 e^{2 L_f \tau} \norm{x}^2 + L_Q (1+3L_\pi) e^{L_f\tau} \norm{x} \cdot 5 L_f L_\pi e^{L_f \tau} \norm{x} \\
    &\leq 45 L_Q L_f^2 L_\pi^2 e^{2L_f \tau} \norm{x}^2 \:.
\end{align*}
Finally, we bound:
\begin{align*}
    &\bigabs{ \frac{\partial^2 Q}{\partial x \partial t}(\xi(\iota), \iota) f(\xi(\iota), \pi(x, s), \iota) + \frac{\partial^2 Q}{\partial t^2}(\xi(\iota), \iota) } \\
    &\leq \bignorm{\frac{\partial^2 Q}{\partial x \partial t}(\xi(\iota), \iota)} \norm{ f(\xi(\iota), \pi(x, s), \iota)  } + L_Q \norm{\xi(\iota)}^2 \\
    &\leq L_Q \norm{\xi(\iota)} \norm{ f(\xi(\iota), \pi(x, s), \iota)  } + L_Q \norm{\xi(\iota)}^2 \\
    &\leq L_Q (1+3L_\pi) e^{L_f\tau} \norm{x} \cdot 5 L_f L_\pi e^{L_f \tau} \norm{x} + L_Q (1+3L_\pi)^2 e^{2L_f\tau} \norm{x}^2 \\
    &\leq 36 L_Q L_f L_\pi^2 e^{2 L_f \tau} \norm{x}^2 \:.
\end{align*}
Combining these estimates:
\begin{align*}
    \bigabs{ \frac{\partial^2 h}{\partial t^2}(\iota) } &\leq 40 L_Q L_f^2 L_\pi^2 e^{2 L_f \tau} \norm{x}^2 + 45 L_Q L_f^2 L_\pi^2 e^{2L_f \tau} \norm{x}^2 + 36 L_Q L_f L_\pi^2 e^{2 L_f \tau} \norm{x}^2 \\
    &\leq 121 L_Q L_f^2 L_\pi^2 e^{2L_f \tau} \norm{x}^2 \leq 121 e^2 L_Q L_f^2 L_\pi^2 \norm{x}^2 \:,
\end{align*}
where the last inequality follows since we assume $\tau \leq 1/L_f$.
Continuing from \eqref{eq:stability_zoh_taylor},
\begin{align*}
    h(s + \tau) &\leq (1-\tau \rho \gamma) Q(x, s) - \tau \rho (1-\gamma) \mu \norm{x}^2 + \frac{\tau^2}{2} \frac{\partial^2 h}{\partial t^2}(\iota) \\
    &\leq (1-\tau \rho \gamma) Q(x, s) - \tau \rho (1-\gamma) \mu \norm{x}^2 + \tau^2 \frac{121}{2} e^2 L_Q L_f^2 L_\pi^2 \norm{x}^2 \\
    &= (1-\tau \rho \gamma) Q(x, s) + \left[ -\rho(1-\gamma)\mu + \tau \frac{121}{2} e^2 L_Q L_f^2 L_\pi^2 \right] \tau \norm{x}^2 \:.
\end{align*}
Hence as long as 
\begin{align*}
    -\rho(1-\gamma)\mu + \tau \frac{121}{2} e^2 L_Q L_f^2 L_\pi^2 \leq 0 \:,
\end{align*}
then \eqref{eq:stability_zoh_lyapunov} holds.
It is straightforward to check that the following condition suffices:
\begin{align*}
    \tau \leq \frac{2 \rho (1-\gamma) \mu}{895 L_Q L_f^2 L_\pi^2} \:.
\end{align*}
The claim now follows.
\end{proof}
Before we proceed, we briefly describe the condition
$\bigabs{\frac{\partial^2 Q}{\partial t^2}(x, t)} \leq L_Q \norm{x}^2$
in Theorem~\ref{thm:stability_zoh}.
Let us suppose that $Q \in C^4$,
and define the function $\psi(x, t) := \frac{\partial^2 Q}{\partial t^2}(x, t)$.
By Taylor's theorem, there exists a $\tilde{x}$ 
satisfying $\norm{\tilde{x}} \leq \norm{x}$ such that:
\begin{align*}
    \psi(x, t) = \psi(0, t) + \frac{\partial \psi}{\partial x}(0, t) x + \frac{1}{2} x^\T \frac{\partial^2 \psi}{\partial x^2}(\tilde{x}, t) x\:.
\end{align*}
First, since $Q(0, t) = 0$ for all $t$, we know that
$\frac{\partial Q}{\partial t}(0, t) = 0$ for all $t$.
Repeating this argument yields that $\frac{\partial^2 Q}{\partial t^2}(0, t) = 0$ for all $t$.
Next, we know that $\frac{\partial Q}{\partial x}(0, t) = 0$ for all $t$
because $x=0$ is a global minima of the function $x \mapsto Q(x, t)$ for all $t$.
This means that $\frac{\partial^2 Q}{\partial t \partial x}(0, t) = 0$ for all $t$.
Repeating this argument yields $\frac{\partial^3 Q}{\partial t^2 \partial x}(0, t) = 0$
for all $t$. Swapping the order of differentiation yields that
$\psi(0, t) = \frac{\partial}{\partial x} \frac{\partial^2 Q}{\partial t^2}(0, t) = \frac{\partial^3 Q}{\partial t^2 \partial x}(0, t) = 0$.
Hence:
\begin{align*}
    \bigabs{\psi(x, t)} \leq \frac{1}{2} \bignorm{\frac{\partial^2 \psi}{\partial x^2}(\tilde{x}, t)} \norm{x}^2 \:.
\end{align*}
Therefore if $\frac{\partial^2 \psi}{\partial x^2}$ is uniformly bounded, then this condition
holds.

Our next main result gives conditions on $\tau$ for which
contraction is preserved with zero-order holds.
\begin{theorem}
\label{thm:contraction_zoh}
Let $(f, \pi)$ be $(L_f, L_\pi)$-regular,
with $\min\{L_f, L_\pi\} \geq 1$.
Let $\Phi(x, s, t)$ be the solution $\xi(t)$ for the 
dynamics 
\begin{align*}
    \dot{\xi}(t) = f(\xi(t), \pi(x, s), t) \:, \:\: \xi(s) = x \:.
\end{align*}
Let $M(x, t) \in C^2$ be a positive definite metric 
that satisfies, for positive $\mu,\lambda$ and $\min\{ L, L_M\} \geq 1$,
the conditions:
\begin{enumerate}
    \item $\mu I \preceq M(x, t) \preceq L I$ for all $x, t$.
    \item $\frac{\partial f}{\partial x}(x, \pi(x, t), t)^\T M(x, t) + M(x, t) \frac{\partial f}{\partial x}(x, \pi(x, t), t) + \dot{M}(x, t) \preceq - 2 \lambda M(x, t)$ for all $x, t$.
    \item $\max\left\{ \bignorm{\frac{\partial M}{\partial x}(x, t)}, \bignorm{\frac{\partial M}{\partial t}(x, t)},
    \bignorm{\frac{\partial^2 M}{\partial^2 x}(x, t)},
    \bignorm{\frac{\partial^2 M}{\partial x \partial t}(x, t)},
    \bignorm{\frac{\partial^2 M}{\partial t^2}(x, t)}
    \right\} \leq L_M$ for all $x, t$.
\end{enumerate}
Pick a $\gamma \in (0, 1)$, $\tau > 0$, and $D \geq 1$.
Define the discrete system $g(x, t) := \Phi(x, \tau t, \tau (t+1))$.
As long as $\tau$ satisfies:
\begin{align*}
    \tau \leq \min\left\{ \frac{1}{L_f}, \frac{1}{2\lambda \gamma}, \frac{2 \lambda (1-\gamma) \mu}{ 1463 D^2 L L_M L_f^2 L_\pi^2 } \right\} \:,
\end{align*}
then for any $x$ satisfying $\norm{x} \leq D$ and for any $t$, we have
that $g(x, t)$ is contracting in the metric $V(x, t) := M(x, \tau t)$
with rate $(1-2\lambda \gamma \tau)$, i.e.,
\begin{align}
    \frac{\partial g}{\partial x}(x, t)^\T V(g(x, t), t+1)  \frac{\partial g}{\partial x}(x, t) \preceq (1 - 2 \lambda \gamma \tau) V(x, t) \:. \label{eq:contraction_zoh}
\end{align}
\end{theorem}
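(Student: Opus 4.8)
The plan is to mirror the proof of Theorem~\ref{thm:stability_zoh}, replacing the scalar Lyapunov function $Q$ by the matrix-valued quantity obtained by transporting the metric $M$ along the flow's variational equation. Fix a discrete time index and write $s := \tau t$, so that $g(x,t) = \Phi(x, s, s+\tau)$. Let $\xi(u) := \Phi(x, s, u)$ be the held-control flow and $\Psi(u) := \frac{\partial \Phi}{\partial x}(x, s, u)$ its variation, which solves $\dot\Psi(u) = \frac{\partial f}{\partial x}(\xi(u), \pi(x,s), u)\Psi(u)$ with $\Psi(s) = I$ (as recalled inside the proof of Proposition~\ref{prop:forward_euler_approx}). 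Define
\[
  h(u) := \Psi(u)^\T M(\xi(u), u)\,\Psi(u) \:.
\]
Then $h(s) = M(x, s) = V(x,t)$ while $h(s+\tau) = \frac{\partial g}{\partial x}(x,t)^\T V(g(x,t), t+1)\frac{\partial g}{\partial x}(x,t)$, so it suffices to prove $h(s+\tau)\preceq (1-2\lambda\gamma\tau)\,h(s)$.

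Differentiating through the variational equation gives $\dot h(u) = \Psi(u)^\T\big[\frac{\partial f}{\partial x}(\xi(u),\pi(x,s),u)^\T M + M\,\frac{\partial f}{\partial x}(\xi(u),\pi(x,s),u) + \frac{d}{du}M(\xi(u),u)\big]\Psi(u)$, where $M=M(\xi(u),u)$ and $\frac{d}{du}M(\xi(u),u)$ denotes the total time derivative along the flow. Evaluating at $u = s$ and using $\Psi(s)=I$, $\xi(s)=x$, and the fact that the held control agrees with the closed-loop control $\pi(x,s)$ at $u=s$, we see that $\dot h(s)$ is exactly the left-hand side of the continuous-time contraction condition~2 at $(x,s)$, hence $\dot h(s)\preceq -2\lambda M(x,s)$. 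By Taylor's theorem with integral remainder (applied entrywise and then reassembled in operator norm, so that $\|h(s+\tau) - h(s) - \tau\dot h(s)\| \le \tfrac{\tau^2}{2}\sup_{u\in[s,s+\tau]}\|\ddot h(u)\|$), together with $A \preceq \|A\|\,I$ for symmetric $A$, we obtain $h(s+\tau) \preceq (1-2\lambda\tau)M(x,s) + \tfrac{\tau^2}{2}\big(\sup_{u\in[s,s+\tau]}\|\ddot h(u)\|\big) I$. Writing $1-2\lambda\tau = (1-2\lambda\gamma\tau) - 2\lambda(1-\gamma)\tau$ and using $M(x,s)\succeq\mu I$ reduces the entire theorem to showing $\tfrac{\tau^2}{2}\sup_{u}\|\ddot h(u)\| \le 2\lambda(1-\gamma)\tau\mu$.

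The main work — and the main obstacle — is bounding $\sup_{u\in[s,s+\tau]}\|\ddot h(u)\|$. Differentiating $h$ a second time produces a sum of terms, each a product of two of $\{\Psi,\dot\Psi,\ddot\Psi\}$ with one of $\{M(\xi,u),\,\tfrac{d}{du}M(\xi,u),\,\tfrac{d^2}{du^2}M(\xi,u)\}$. I will bound each factor with tools already on hand: $\|\Psi(u)\|\le e^{L_f\tau}$ and $\|\dot\Psi(u)\|\le L_f e^{L_f\tau}$ directly, and $\|\ddot\Psi(u)\|\le 7 L_f^2 L_\pi e^{2L_f\tau}\max\{1,\|x\|\}$ from the estimates derived inside the proof of Proposition~\ref{prop:forward_euler_approx}; $\|M(\xi(u),u)\|\le L$, and the first and second total derivatives of $M(\xi(u),u)$ via the chain rule, controlled by the boundedness hypotheses in condition~3 together with $\|\xi(u)\|\le (1+3L_\pi)e^{L_f\tau}\|x\|$ (Proposition~\ref{prop:flow_deviation}), $\|\dot\xi(u)\| = \|f(\xi(u),\pi(x,s),u)\|\le 5 L_f L_\pi e^{L_f\tau}\|x\|$, and $\|\ddot\xi(u)\|\le 10 L_f^2 L_\pi e^{L_f\tau}\|x\|$ (again from the proof of Proposition~\ref{prop:forward_euler_approx}). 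Using $\tau\le 1/L_f$ to replace every $e^{L_f\tau}$ by $e$, and $\|x\|\le D$ with $D\ge 1$ to replace $\max\{1,\|x\|^2\}$ by $D^2$, this yields $\sup_u\|\ddot h(u)\|\le C\,L L_M L_f^2 L_\pi^2 D^2$ for an explicit absolute constant $C$. It is then a routine check that the stated bound $\tau \le \frac{2\lambda(1-\gamma)\mu}{1463\,D^2 L L_M L_f^2 L_\pi^2}$ — together with $\tau\le 1/(2\lambda\gamma)$, which keeps $1-2\lambda\gamma\tau$ in $(0,1)$, and $\tau\le 1/L_f$ — makes the bracket $\tfrac{\tau^2}{2}\sup_u\|\ddot h(u)\| - 2\lambda(1-\gamma)\tau\mu$ nonpositive, giving $h(s+\tau)\preceq (1-2\lambda\gamma\tau)M(x,s)$ and hence \eqref{eq:contraction_zoh}. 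The only point genuinely beyond the scalar case of Theorem~\ref{thm:stability_zoh} is that $h$ is matrix-valued, so one must keep all remainder estimates in operator norm and invoke $A\preceq\|A\|I$ only at the end; the constant-chasing is otherwise entirely analogous.
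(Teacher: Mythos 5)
Your proof is correct, but it takes a genuinely different route from the paper's. You Taylor-expand the single matrix-valued quantity $h(u) = \Psi(u)^\T M(\xi(u),u)\Psi(u)$ in time, observe that $h(s+\tau)$ is exactly the left-hand side of \eqref{eq:contraction_zoh} and that $\dot h(s)$ is exactly the left-hand side of the continuous-time contraction condition (since $\Psi(s)=I$, $\xi(s)=x$, and the held control coincides with $\pi(x,s)$ at $u=s$), and then absorb the second-order remainder via $A \preceq \norm{A}I$. This directly mirrors the scalar argument of Theorem~\ref{thm:stability_zoh} and invokes the contraction hypothesis exactly once. The paper instead approximates the two factors separately — the variation $\frac{\partial\Phi}{\partial x}$ by its forward-Euler step $I+\tau\frac{\partial f}{\partial x}$ and the transported metric $M(\Phi(\cdot),s+\tau)$ by its first-order expansion $M(x,s)+\tau\dot M(x,s)$ — and multiplies everything out into a tree of cross terms ($T_1,\dots,T_4$, then $T_{1,1,1},T_{1,1,2},T_{1,1,3}$, $T_{1,2}$), applying the contraction inequality only to the leading product. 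Your route is structurally cleaner and avoids the combinatorics of cross terms; the paper's route lets it reuse the error bounds of Proposition~\ref{prop:forward_euler_approx} as black boxes, whereas you must reopen that proof to extract the intermediate estimates on $\ddot\Psi$, $\dot\xi$, $\ddot\xi$ (all of which are indeed there, with the constants you quote). You are also slightly more careful than the paper on one technical point: Taylor's theorem with a Lagrange-form remainder at a single intermediate time does not literally hold for matrix-valued functions, and your integral-remainder formulation is the right fix. The one caveat is the numerical constant: $1463$ was computed for the paper's particular decomposition, and your decomposition of $\ddot h$ into the six Leibniz terms yields a different absolute constant. To recover the theorem exactly as stated you need your bound $\sup_u\norm{\ddot h(u)} \leq C\,D^2 L L_M L_f^2 L_\pi^2$ to satisfy $C \leq 2\cdot 1463$; a rough accounting of your six terms (dominated by the $\Psi^\T\ddot N\Psi$ term, of size about $25e^4 L_M L_f^2 L_\pi^2 D^2$) lands near $2600$, so it does go through, but with little slack — this step is "routine" only if the constant-chasing is done with the same care as the paper's.
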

\begin{proof}
We fix an $x$ satisfying $\norm{x} \leq D$.
Let $\Delta(x, s, \tau) := \frac{\partial \Phi}{\partial x}(x, s, s + \tau) - \left( I + \tau \frac{\partial f}{\partial x}(x, \pi(x, s), s)\right)$
denote the error of the forward Euler approximation to the variational dynamics.
From Proposition~\ref{prop:forward_euler_approx}, we have the bound:
\begin{align*}
    \norm{\Delta(x, s, \tau)} \leq \frac{7\tau^2}{2} L_f^2 L_\pi e^{2L_f \tau} \max\{1, \norm{x} \} \leq \frac{7 e^2}{2} \tau^2 D L_f^2 L_\pi  \:.
\end{align*}
The last inequality follows from our assumption that $\tau \leq 1/L_f$ and $\norm{x} \leq D$.
Therefore we can expand out the LHS of \eqref{eq:contraction_zoh} as follows:
\begin{align*}
    &\frac{\partial \Phi}{\partial x}(x, s, s + \tau)^\T M( \Phi(x, s, s + \tau), s + \tau ) \frac{\partial \Phi}{\partial x}(x, s, s + \tau) \\
    &= \left( I + \tau \frac{\partial f}{\partial x}(x, \pi(x, s), s) + \Delta(x, s, \tau) \right)^\T  M( \Phi(x, s, s + \tau), s + \tau ) \left( I + \tau \frac{\partial f}{\partial x}(x, \pi(x, s), s) + \Delta(x, s, \tau) \right) \\
    &= \left( I + \tau \frac{\partial f}{\partial x}(x, \pi(x, s), s) \right)^\T  M( \Phi(x, s, s + \tau), s + \tau ) \left( I + \tau \frac{\partial f}{\partial x}(x, \pi(x, s), s) \right) \\
    &\qquad + \left( I + \tau \frac{\partial f}{\partial x}(x, \pi(x, s), s) \right)^\T  M( \Phi(x, s, s + \tau), s + \tau ) \Delta(x, s, \tau) \\
    &\qquad + \Delta(x, s, \tau)^\T  M( \Phi(x, s, s + \tau), s + \tau ) \left( I + \tau \frac{\partial f}{\partial x}(x, \pi(x, s), s) \right) \\
    &\qquad + \Delta(x, s, \tau)^\T M( \Phi(x, s, s + \tau), s + \tau ) \Delta(x, s,  \tau) \\
    &=: T_1 + T_2 + T_3 + T_4 \:.
\end{align*}
We first bound $T_2$, $T_3$, and $T_4$
using our estimate on $\norm{\Delta(x, s, \tau)}$
and the assumption that $\tau \leq 1/L_f$:
\begin{align*}
    \max\{\norm{T_2}, \norm{T_3}\} &\leq (1 + \tau L_f) L \norm{\Delta(x, s, \tau)} \leq (1 + \tau L_f) L  \cdot \frac{7\tau^2}{2} D L_f^2 L_\pi e^{2} \leq 7 e^2 \tau^2 D L L_f^2 L_\pi \:, \\
    \norm{T_4} &\leq L \norm{\Delta(x, s, \tau)}^2 \leq \frac{49 e^4}{4} \tau^4 D^2 L L_f^4 L_\pi^2 \leq \frac{49e^4}{4} \tau^2 D^2 L L_f^2 L_\pi^2 \:.
\end{align*}
It remains to bound $T_1$.
To do this, we define
$H(t) := M(\Phi(x, s, t), t)$,
and compute its first and second derivatives:
\begin{align*}
    \frac{\partial H}{\partial t}(t) &= \frac{\partial M}{\partial x}(\Phi(x, s, t), t) f(\xi(t), \pi(x, s), t) + \frac{\partial M}{\partial t}(\Phi(x, s, t), t) \:, \\
    \frac{\partial^2 H}{\partial t^2}(t) &= \frac{\partial M}{\partial x}(\Phi(x, s, t), t)\left( \frac{\partial f}{\partial x}(\xi(t), \pi(x, s), t) f(\xi(t), \pi(x, s), t) + \frac{\partial f}{\partial t}(\xi(t), \pi(x, s), t) \right)  \\
    &\qquad + \left( \frac{\partial^2 M}{\partial x^2}(\Phi(x, s, t), t) f(\xi(t), \pi(x, s), t)  + \frac{\partial^2 M}{\partial t \partial x}(\Phi(x, s, t), t) \right) f(\xi(t), \pi(x, s), t) \\
    &\qquad + \frac{\partial^2 M}{\partial x \partial t}(\Phi(x, s, t), t) f(\xi(t), \pi(x, s), t) + \frac{\partial^2 M}{\partial t^2}(\Phi(x, s, t), t) \:.
\end{align*}
By Taylor's theorem, there exists an $\iota \in [s, s + \tau]$ such that
\begin{align*}
     H(s + \tau) &= H(s) + \frac{\partial H}{\partial t}(s) \tau + \frac{\tau^2}{2} \frac{\partial^2 H}{\partial t^2}(\iota) \\
     &= M(x, s) + \tau \left( \frac{\partial M}{\partial x}(x, s) f(x, \pi(x, s), s) + \frac{\partial M}{\partial t}(x, s) \right) + \frac{\tau^2}{2} \frac{\partial^2 H}{\partial t^2}(\iota) \\
     &= M(x, s) + \tau \dot{M}(x, s) + \frac{\tau^2}{2} \frac{\partial^2 H}{\partial t^2}(\iota) \:.
\end{align*}
The proof of Proposition~\ref{prop:forward_euler_approx}
derives the following estimates:
\begin{align*}
    \max\left\{\norm{f(\xi(\iota), \pi(x, s), \iota)}, \bignorm{\frac{\partial f}{\partial t}(\xi(t), \pi(x, s), t)}\right\} \leq 5 D L_f L_\pi e^{L_f \tau} \:.
\end{align*}
Therefore:
\begin{align*}
    \bignorm{\frac{\partial^2 H}{\partial t^2}(\iota)} &\leq 10 D L_M L_f^2 L_\pi e^{L_f \tau} + 30 D^2 L_M L_f^2 L_\pi^2 e^{2L_f \tau} + 5 D L_M L_f L_\pi e^{L_f \tau} + L_M  \\
    &\leq 46 D^2 L_M L_f^2 L_\pi^2 e^{2L_f \tau} \:.
\end{align*}
Defining $\Delta_M(x, s, \tau) := M(\Phi(x, s, s + \tau), s + \tau) - (M(x, s) + \tau \dot{M}(x, s))$,
we have shown that:
\begin{align*}
    \norm{\Delta_M(x, s, \tau)} \leq \bignorm{\frac{\tau^2}{2} \frac{\partial^2 H}{\partial t^2}(\iota)} \leq 23 \tau^2 D^2 L_M L_f^2 L_\pi^2 e^{2L_f \tau} \leq 23 e^2 \tau^2 D^2 L_M L_f^2 L_\pi^2 \:,
\end{align*}
where the last inequality uses our assumption that $\tau \leq 1/L_f$.
We can now expand $T_1$ as follows:
\begin{align*}
    T_1 &= \left( I + \tau \frac{\partial f}{\partial x}(x, \pi(x, s), s) \right)^\T  M( \Phi(x, s, s + \tau), s + \tau ) \left( I + \tau \frac{\partial f}{\partial x}(x, \pi(x, s), s) \right) \\
    &= \left( I + \tau \frac{\partial f}{\partial x}(x, \pi(x, s), s) \right)^\T (M(x, s) + \tau \dot{M}(x, s)) \left( I + \tau \frac{\partial f}{\partial x}(x, \pi(x, s), s) \right) \\
    &\qquad + \left( I + \tau \frac{\partial f}{\partial x}(x, \pi(x, s), s) \right)^\T \Delta_M(x, s, \tau) \left( I + \tau \frac{\partial f}{\partial x}(x, \pi(x, s), s) \right) \\
    &=: T_{1,1} + T_{1,2} \:.
\end{align*}
We can bound $T_{1,2}$ by using our estimate on $\norm{\Delta_M(x, s, t)}$ and
the assumption that $\tau \leq 1/L_f$:
\begin{align*}
    \norm{T_{1,2}} \leq (1 + \tau L_f)^2 \norm{\Delta_M(x, s, \tau)} \leq 92 e^2 \tau^2 D^2 L_M L_f^2 L_\pi^2 \:.
\end{align*}
Next, we expand $T_{1,1}$ as follows:
\begin{align*}
    T_{1,1} &= \left( I + \tau \frac{\partial f}{\partial x}(x, \pi(x, s), s) \right)^\T (M(x, s) + \tau \dot{M}(x, s)) \left( I + \tau \frac{\partial f}{\partial x}(x, \pi(x, s), s) \right) \\
    &= M(x, s) + \tau \left(\frac{\partial f}{\partial x}(x, \pi(x, s), s)^\T M(x, s) + M(x, s) \frac{\partial f}{\partial x}(x, \pi(x, s), s) + \dot{M}(x, s) \right) \\
    &\qquad + \tau^2 \left(\frac{\partial f}{\partial x}(x, \pi(x, s), s)^\T \dot{M}(x, s) + \dot{M}(x, s)\frac{\partial f}{\partial x}(x, \pi(x, s), s) \right) \\
    &\qquad + \tau^2 \frac{\partial f}{\partial x}(x, \pi(x, s), s)^\T (M(x, s) + \tau \dot{M}(x, s)) \frac{\partial f}{\partial x}(x, \pi(x, s), s) \\
    &\preceq M(x, s) - 2 \lambda \tau M(x, s) \\
    &\qquad + \tau^2 \left(\frac{\partial f}{\partial x}(x, \pi(x, s), s)^\T \dot{M}(x, s) + \dot{M}(x, s)\frac{\partial f}{\partial x}(x, \pi(x, s), s) \right) \\
    &\qquad + \tau^2 \frac{\partial f}{\partial x}(x, \pi(x, s), s)^\T (M(x, s) + \tau \dot{M}(x, s)) \frac{\partial f}{\partial x}(x, \pi(x, s), s) \\
    &=: T_{1,1,1} + T_{1,1,2} + T_{1,1,3} \:.
\end{align*}
Above, the semidefinite inequality uses the continuous-time contraction inequality. 
Next, we bound $T_{1,1,1}$ as follows:
\begin{align*}
    T_{1,1,1} &= M(x, s) - 2\lambda \tau M(x, s) = (1-2\lambda \tau \gamma) M(x, s) - 2 \lambda \tau (1-\gamma) M(x, s) \\
    &\preceq (1 - 2\lambda \tau \gamma) M(x, s) - 2 \lambda \tau (1-\gamma) \mu I \:.
\end{align*}
To bound $T_{1,1,2}$ and $T_{1,1,3}$, we first estimate a bound on $\dot{M}(x, s)$ as follows:
\begin{align*}
    \norm{\dot{M}(x, s)} &= \bignorm{\frac{\partial M}{\partial x}(x, s) f(x, \pi(x, s), s) + \frac{\partial M}{\partial t}(x, s)} \\
    &\leq L_M \norm{f(x, \pi(x, s), s)} + L_M \\
    &\leq 2 L_M L_f L_\pi \norm{x} + L_M \\
    &\leq 3 D L_M L_f L_\pi \:.
\end{align*}
This estimate allows us to bound:
\begin{align*}
    \norm{T_{1,1,2}} &\leq 6 \tau^2 D L_M L_f^2 L_\pi \:, \\
    \norm{T_{1,1,3}} &\leq 4 \tau^2 D L_M L_f^2 L_\pi \:.
\end{align*}
We are now in a position to establish \eqref{eq:contraction_zoh}.
Combining our bounds above,
\begin{align*}
    &\frac{\partial \Phi}{\partial x}(x, s, s + \tau)^\T M( \Phi(x, s, s + \tau), s + \tau ) \frac{\partial \Phi}{\partial x}(x, s, s + \tau) \\
    &\preceq T_1 + T_2 + T_3 + T_4 \\
    &\preceq T_{1,1,1} + T_{1,1,2} + T_{1,1,3} + T_{1,2} + T_2 + T_3 + T_4 \\
    &\preceq (1-2\lambda \tau \gamma) M(x, s) - 2\lambda \tau (1-\gamma) \mu I + 10 \tau^2 D L_M L_f^2 L_\pi I +  92 e^2 \tau^2 D^2 L_M L_f^2 L_\pi^2 I \\
    &\qquad + 14e^2 \tau^2 D L L_f^2 L_\pi I + \frac{49e^4}{4} \tau^2 D^2 L L_f^2 L_\pi^2 I \:.
\end{align*}
Observe that as long as
\begin{align}
    -2\lambda(1-\gamma)\mu + 10 \tau D L_M L_f^2 L_\pi + 92e^2 \tau D^2 L_M L_f^2 L_\pi^2 + 14e^2 \tau D L L_f^2 L_\pi + \frac{49e^4}{4} \tau D^2 L L_f^2 L_\pi^2 \leq 0 \:, \label{eq:contraction_zoh_negative}
\end{align}
then
\begin{align*}
    \frac{\partial \Phi}{\partial x}(x, s, s + \tau)^\T M( \Phi(x, s, s + \tau), s + \tau ) \frac{\partial \Phi}{\partial x}(x, s, s + \tau) \preceq (1-2\lambda \tau \gamma) M(x, s) \:,
\end{align*}
which is precisely \eqref{eq:contraction_zoh}.
A straightforward calculation shows that the following condition ensures that \eqref{eq:contraction_zoh_negative} holds:
\begin{align*}
    \tau \leq \frac{2 \lambda (1-\gamma) \mu}{ 1463 D^2 L L_M L_f^2 L_\pi^2 } \:.
\end{align*}
The claim now follows.
\end{proof}
\section{Omitted Proofs for Velocity Gradient Results}
\label{sec:app:speed_gradient}

We first state a technical lemma which will be used in the proof
of Theorem~\ref{thm:speed_grad_data_dependent_regret}.
\begin{proposition}[cf.\ Lemma 3.5 of \cite{auer02adaptive}]
\label{prop:weighted_grad_bounds}
For any sequence $\{ g_t \}_{t=1}^{T}$, let $A_t = \sum_{i=1}^{t} g_i^2$.
We have that:
\begin{align*}
    \sqrt{A_T} \leq \sum_{t=1}^{T} \frac{g_t^2}{\sqrt{A_t}} \leq 2 \sqrt{A_T} \:.
\end{align*}
\end{proposition}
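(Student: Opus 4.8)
The plan is to prove the two inequalities separately, both by elementary manipulations exploiting the monotonicity $0 = A_0 \le A_1 \le \cdots \le A_T$ and the identity $g_t^2 = A_t - A_{t-1}$. Throughout I would adopt the convention that a summand $g_t^2/\sqrt{A_t}$ with $A_t = 0$ (equivalently $g_i = 0$ for all $i \le t$) is read as $0$, so we may restrict attention to the indices with $A_t > 0$; if $A_T = 0$ both bounds read $0 \le 0 \le 0$ and there is nothing to prove.

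For the lower bound, I would simply use $A_t \le A_T$ for every $t \le T$, which gives $g_t^2/\sqrt{A_t} \ge g_t^2/\sqrt{A_T}$. Summing over $t$ yields $\sum_{t=1}^{T} g_t^2/\sqrt{A_t} \ge \tfrac{1}{\sqrt{A_T}} \sum_{t=1}^{T} g_t^2 = \tfrac{A_T}{\sqrt{A_T}} = \sqrt{A_T}$.

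For the upper bound, the key is the per-term estimate $\frac{g_t^2}{\sqrt{A_t}} \le 2\bigl(\sqrt{A_t} - \sqrt{A_{t-1}}\bigr)$. To establish it, write $g_t^2 = A_t - A_{t-1}$ and rationalize the right-hand side: since $\sqrt{A_{t-1}} \le \sqrt{A_t}$,
\[
2\bigl(\sqrt{A_t} - \sqrt{A_{t-1}}\bigr) = \frac{2(A_t - A_{t-1})}{\sqrt{A_t} + \sqrt{A_{t-1}}} \ge \frac{2(A_t - A_{t-1})}{2\sqrt{A_t}} = \frac{g_t^2}{\sqrt{A_t}}.
\]
Summing this bound over $t = 1, \dots, T$ telescopes to $2\bigl(\sqrt{A_T} - \sqrt{A_0}\bigr) = 2\sqrt{A_T}$, which is the claim. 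Equivalently, one can phrase this as a one-line induction on $T$ (base case $g_1^2/\sqrt{A_1} = \sqrt{A_1} \le 2\sqrt{A_1}$), invoking exactly the per-term inequality in the inductive step.

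I do not anticipate any genuine obstacle here; the only points needing a little care are the degenerate terms with $A_t = 0$, handled by the stated convention, and recognizing that the per-term bound is nothing more than concavity of $s \mapsto \sqrt{s}$ — the secant increment $\sqrt{A_t} - \sqrt{A_{t-1}}$ is bounded below by the one-sided tangent-type quantity $(A_t - A_{t-1})/(2\sqrt{A_t})$.
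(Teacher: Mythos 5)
Your proof is correct and matches the paper's: the lower bound is identical, and your per-term estimate $g_t^2/\sqrt{A_t}\leq 2(\sqrt{A_t}-\sqrt{A_{t-1}})$ is exactly the content of the paper's inductive step (there justified via $2ab\leq a^2+b^2$ rather than by rationalizing the difference of square roots), so your telescoping is just the unrolled form of the same induction. Your handling of the degenerate $A_t=0$ terms is a small point of extra care not spelled out in the paper, but nothing of substance differs.
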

\begin{proof}
The lower bound is trivial since $A_t$ is increasing in $t$ and hence:
\begin{align*}
    \sum_{t=1}^{T} \frac{g_t^2}{\sqrt{A_t}} \geq \frac{1}{\sqrt{A_T}} \sum_{t=1}^{T} g_t^2 = \frac{A_T}{\sqrt{A_T}} = \sqrt{A_T} \:.
\end{align*}

We now proceed to the upper bound.
The proof is by induction. Assume w.l.o.g.\ that $g_t$ is a non-negative sequence.
First, for $T=1$, if $g_1 = 0$ there is nothing to prove.
Otherwise, the claim states that $g_1^2 / \sqrt{g_1^2} \leq 2 \sqrt{g_1^2}$ which trivially holds.

Now we assume the claim holds for $T$. If $g_{T+1} = 0$ then there is nothing to prove.
Now assume $g_{T+1} \neq 0$.
Observe that:
\begin{align*}
    \sum_{t=1}^{T+1}  \frac{g_t^2}{\sqrt{A_t}} &= \sum_{t=1}^{T}  \frac{g_t^2}{\sqrt{A_t}} + \frac{g_{T+1}^2}{\sqrt{A_{T+1}}} 
    \stackrel{(a)}{\leq} 2 \sqrt{A_T} +  \frac{g_{T+1}^2}{\sqrt{A_{T+1}}} = \frac{2 \sqrt{A_T}\sqrt{A_{T+1}} + g_{T+1}^2}{\sqrt{A_{T+1}}} \\
    &\stackrel{(b)}{\leq} \frac{ A_T + A_{T+1} + g_{T+1}^2}{\sqrt{A_{T+1}}} = \frac{2 A_{T+1}}{\sqrt{A_{T+1}}} = 2 \sqrt{A_{T+1}} \:.
\end{align*}
Above, (a) follows from the inductive hypothesis and (b) follows
from the inequality $2ab \leq a^2 + b^2$ valid for any $a,b \in \R$.
The claim now follows.
\end{proof}

We now restate and prove Theorem~\ref{thm:speed_grad_data_dependent_regret}.
\speedgradientmain*
\begin{proof}
Observe that by $\mu$-strong convexity of $Q$, we have that
for any $x, t, d$,
\begin{align*}
    Q(f(x, t), t+1) \geq Q(f(x, t) + d, t+1) - \ip{\nabla Q(f(x, t) + d, t+1)}{d} + \frac{\mu}{2} \norm{d}^2 \:. 
\end{align*}
Re-arranging,
\begin{align}
    Q(f(x, t) + d, t+1) \leq Q(f(x, t), t+1) + \ip{\nabla Q(f(x, t) + d, t+1)}{d} - \frac{\mu}{2}\norm{d}^2 \:. \label{eq:speed_grad_strong_cvx}
\end{align}
Define $\eta_{-1} := \frac{D}{\sqrt{\lambda}}$, and consider the Lyapunov-like function
$V_t := Q(x_t^a, t) + \frac{1}{2\eta_{t-1}} \norm{\tilde{\alpha}_t}^2$. Then,
\begin{align*}
    V_{t+1} &= Q(x_{t+1}^a, t+1) + \frac{1}{2\eta_t} \norm{\tilde{\alpha}_{t+1}}^2 \\
    &\stackrel{(a)}{\leq} Q(x_{t+1}^a, t+1) + \frac{1}{2\eta_t} [\norm{\tilde{\alpha}_t}^2 + \eta_t^2 \norm{Y_t^\T B_t^\T \nabla Q(x_{t+1}^a, t+1)}^2 - 2 \eta_t \tilde{\alpha}_t^\T Y_t^\T B_t^\T \nabla Q(x_{t+1}^a, t+1) ] \\
    &= Q(f(x_t^a, t) + B_t Y_t \tilde{\alpha}_t, t+1) - \tilde{\alpha}_t^\T Y_t^\T B_t^\T \nabla Q(x_{t+1}^a, t+1) + \frac{1}{2\eta_t} \norm{\tilde{\alpha}_t}^2 + \frac{\eta_t}{2} \norm{Y_t^\T B_t^\T \nabla Q (x_{t+1}^a, t+1)}^2 \\
    &\stackrel{(b)}{\leq} Q(f(x_t^a, t), t+1) - \frac{\mu}{2} \norm{B_t Y_t \tilde{\alpha}_t}^2 + \frac{1}{2\eta_t} \norm{\tilde{\alpha}_t}^2 + \frac{\eta_t}{2} \norm{Y_t^\T B_t^\T \nabla Q (x_{t+1}^a, t+1)}^2 \\
    &\stackrel{(c)}{\leq} Q(x_t^a, t) - \rho\norm{x_t^a}^2 - \frac{\mu}{2} \norm{B_t Y_t \tilde{\alpha}_t}^2 + \frac{1}{2\eta_t} \norm{\tilde{\alpha}_t}^2 + \frac{\eta_t}{2} \norm{Y_t^\T B_t^\T \nabla Q (x_{t+1}^a, t+1)}^2 \\
    &= V_t + \frac{1}{2}\left( \frac{1}{\eta_t} - \frac{1}{\eta_{t-1}} \right) \norm{\tilde{\alpha}_t}^2  - \rho\norm{x_t^a}^2 - \frac{\mu}{2} \norm{B_t Y_t \tilde{\alpha}_t}^2 + \frac{\eta_t}{2} \norm{Y_t^\T B_t^\T \nabla Q (x_{t+1}^a, t+1)}^2 \\
    &\stackrel{(d)}{\leq} V_t + \left( \frac{1}{\eta_t} - \frac{1}{\eta_{t-1}} \right)2D^2  - \rho\norm{x_t^a}^2 - \frac{\mu}{2} \norm{B_t Y_t \tilde{\alpha}_t}^2 + \frac{\eta_t}{2} \norm{Y_t^\T B_t^\T \nabla Q (x_{t+1}^a, t+1)}^2 \:,
\end{align*}
where (a) holds by the
Pythagorean theorem,
(b) uses the inequality \eqref{eq:speed_grad_strong_cvx}
with $x = x_{t+1}^a$ and $d = B_t Y_t \tilde{\alpha}_t$,
(c) uses the Lyapunov stability assumption \eqref{eq:lyapunov_stability},
and (d) holds after noting that $\eta_t \leq \eta_{t-1}$.
Unrolling this relation,
\begin{align*}
    V_T &\leq V_0 + 2D^2 \sum_{t=0}^{T-1} \left(\frac{1}{\eta_t} - \frac{1}{\eta_{t-1}}\right) - \rho \sum_{t=0}^{T-1} \norm{x_t^a}^2 - \frac{\mu}{2} \sum_{t=0}^{T-1} \norm{B_t Y_t \tilde{\alpha}_t}^2 + \frac{1}{2} \sum_{t=0}^{T-1} \eta_t \norm{Y_t^\T B_t^\T \nabla Q(x_{t+1}^a, t+1)}^2 \\
    &= Q(x_0, 0) + \frac{\sqrt{\lambda}}{2D} \norm{\tilde{\alpha}_0}^2 + 2D^2\left( \frac{1}{\eta_{T-1}} - \frac{1}{\eta_{-1}}\right) - \rho \sum_{t=0}^{T-1} \norm{x_t^a}^2 
     - \frac{\mu}{2} \sum_{t=0}^{T-1} \norm{B_t Y_t \tilde{\alpha}_t}^2 \\
     &\qquad + \frac{1}{2} \sum_{t=0}^{T-1} \eta_t \norm{Y_t^\T B_t^\T \nabla Q(x_{t+1}^a, t+1)}^2 \\
    &\leq Q(x_0, 0) + 2 \sqrt{\lambda} D + \frac{2D^2}{\eta_{T-1}} - \rho \sum_{t=0}^{T-1} \norm{x_t^a}^2 
     - \frac{\mu}{2} \sum_{t=0}^{T-1} \norm{B_t Y_t \tilde{\alpha}_t}^2 + \frac{1}{2} \sum_{t=0}^{T-1} \eta_t \norm{Y_t^\T B_t^\T \nabla Q(x_{t+1}^a, t+1)}^2 \:.
\end{align*}
Using the fact that $V_T \geq 0$ and re-arranging the inequality above,
\begin{align}
    \sum_{t=0}^{T-1} \norm{x_t^a}^2 + \frac{\mu}{2\rho} \sum_{t=0}^{T-1} \norm{B_t Y_t \tilde{\alpha}_t}^2 \leq \frac{Q(x_0, 0)}{\rho} + \frac{2 \sqrt{\lambda} D}{\rho} + \frac{2D^2}{\rho \eta_{T-1}} + \frac{1}{2\rho} \sum_{t=0}^{T-1} \eta_t \norm{Y_t^\T B_t^\T \nabla Q(x_{t+1}^a, t+1)}^2 \:. \label{eq:speed_grad_ineq_1}
\end{align}
Now we apply Proposition~\ref{prop:weighted_grad_bounds}
to the sequence $\{g_t\}_{t=0}^{T}$ defined as 
$g_0 = \sqrt{\lambda}$
and $g_i = \norm{Y_{i-1}^\T B_{i-1}^\T \nabla Q(x_i^a, i)}$
for $i=1, ..., T$ to conclude that
\begin{align*}
    \sum_{t=0}^{T-1} \eta_t \norm{Y_t^\T B_t^\T \nabla Q(x_{t+1}^a, t+1)}^2 \leq 2 D \sqrt{ \lambda + \sum_{t=0}^{T-1}
 \norm{Y_t^\T B_t^\T \nabla Q(x_{t+1}^a, t+1)}^2 } \:.
\end{align*}
Plugging the above inequality into \eqref{eq:speed_grad_ineq_1}:
\begin{align*}
    \sum_{t=0}^{T-1} \norm{x_t^a}^2 + \frac{\mu}{2\rho} \sum_{t=0}^{T-1} \norm{B_t Y_t \tilde{\alpha}_t}^2 &\leq \frac{Q(x_0, 0)}{\rho} + \frac{2 \sqrt{\lambda} D}{\rho} + \frac{3D}{\rho} \sqrt{ \lambda + \sum_{t=0}^{T-1}
 \norm{Y_t^\T B_t^\T \nabla Q(x_{t+1}^a, t+1)}^2 } \\
 &\leq \frac{Q(x_0, 0)}{\rho} + \frac{5\sqrt{\lambda}D}{\rho} + \frac{3D}{\rho} \sqrt{ \sum_{t=0}^{T-1}
 \norm{Y_t^\T B_t^\T \nabla Q(x_{t+1}^a, t+1)}^2 } \:.
\end{align*}
\end{proof}

We now restate and prove Theorem~\ref{thm:speed_grad_lipschitz}.
\speedgradlipschitz*
\begin{proof}
Using our assumptions and the inequality $(a+b)^2 \leq 2 a^2 + 2 b^2$, we have:
\begin{align*}
    &\sum_{t=0}^{T-1} \norm{Y_t^\T B_t^\T \nabla Q(x_{t+1}^a, t+1)}^2 \leq M^4 \sum_{t=0}^{T-1} \norm{\nabla Q(x_{t+1}^a, t+1)}^2 
    \leq M^4 L_Q^2 \sum_{t=0}^{T-1} \norm{x_{t+1}^a}^2 \\
    &= M^4 L_Q^2 \sum_{t=0}^{T-1} \norm{f(x_t^a, t) + B_t Y_t \tilde{\alpha}_t}^2 
    \leq 2 M^4 L_Q^2 \sum_{t=0}^{T-1} (\norm{f(x_t^a, t)}^2 + \norm{B_t Y_t \tilde{\alpha}_t}^2) \\
    &\leq 2 M^4 L_Q^2 \sum_{t=0}^{T-1} (L_f^2\norm{x_t^a}^2 + \norm{B_t Y_t \tilde{\alpha}_t}^2) 
    \leq 2 M^4 L_Q^2 \max\left\{ L_f^2, \frac{2\rho}{\mu} \right\} \sum_{t=0}^{T-1} (\norm{x_t^a}^2 + \frac{\mu}{2\rho} \norm{B_t Y_t \tilde{\alpha}_t}^2 ) \:.
\end{align*}
Define $R :=  \frac{\mu}{2\rho} \sum_{t=0}^{T-1} \norm{B_t Y_t \tilde{\alpha}_t}^2 + \sum_{t=0}^{T-1} \norm{x_t^a}^2$.
From Theorem~\ref{thm:speed_grad_data_dependent_regret} we have,
\begin{align*}
    R &\leq \frac{Q(x_0, 0)}{\rho} + \frac{5\sqrt{\lambda}D}{\rho} + \frac{3D}{\rho} \sqrt{ \sum_{t=0}^{T-1} \norm{Y_t^\T B_t^\T \nabla Q(x_{t+1}^a, t+1)}^2 } \\
    &\leq \frac{Q(x_0, 0)}{\rho} + \frac{5\sqrt{\lambda}D}{\rho} + \frac{3\sqrt{2} D}{\rho} M^2 L_Q \max\left\{L_f, \sqrt{\frac{2\rho}{\mu}}\right\} \sqrt{R} \:.
\end{align*}
This is an inequality of the form $R \leq A + B\sqrt{R}$.
Any positive solution to this inequality can be upper bounded
as $R \leq \frac{3}{2} (A + B^2)$.
From this we conclude:
\begin{align*}
    R \leq \frac{3}{2}\left( \frac{Q(x_0, 0)}{\rho} + \frac{5\sqrt{\lambda}D}{\rho}  \right) + \frac{27 D^2}{\rho^2} M^4 L_Q^2 \max\left\{L_f^2, \frac{2\rho}{\mu}\right\} \:.
\end{align*}
\end{proof}

\section{Contraction implies Incremental Stability}
\label{sec:app:contraction}

In this section, we prove Proposition~\ref{prop:contraction_implies_e_delta_iss}
and Proposition~\ref{prop:contraction_with_noise}.
For completeness, we first state and prove a few well-known
technical lemmas in contraction theory. For a Riemannian metric $M(x)$,
we denote the geodesic distance $d_{M}(x, y)$ as:
\begin{align*}
    d_{M}(x, y) :=  \inf_{\gamma \in \Gamma(x, y)} \sqrt{\int_0^1 \frac{\partial \gamma}{\partial s}(s)^\T M(\gamma(s)) \frac{\partial \gamma}{\partial s}(s) \: ds} \:,
\end{align*}
where $\Gamma(x, y)$ is the set of all smooth curves $\gamma$ with
boundary conditions $\gamma(0) = x$ and $\gamma(1) = y$.

\begin{proposition}[cf.\ Lemma 1 of~\cite{pham08discrete}]
\label{prop:discrete_time_contraction_decrease}
Let $f(x, t)$ be contracting with rate $\gamma$
in the metric $M(x, t)$.
Then for all $x, y, t$:
\begin{align*}
    d_{M_{t+1}}^2(f(x, t), f(y, t)) \leq \gamma d_{M_t}^2(x, y) \:.
\end{align*}
Here, $d_{M_t}$ is the geodesic distance associated
with $M(x, t)$.
\end{proposition}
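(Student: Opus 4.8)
The plan is to prove this by the standard curve-pushforward argument of discrete-time contraction theory. Fix $x, y, t$ and let $\gamma \in \Gamma(x, y)$ be an arbitrary smooth curve with $\gamma(0) = x$, $\gamma(1) = y$. Since $f(\cdot, t)$ is continuously differentiable (as required for the contraction condition involving $\frac{\partial f}{\partial x}$ to make sense), the pushforward $\tilde\gamma(s) := f(\gamma(s), t)$ is again a smooth curve, and it satisfies $\tilde\gamma(0) = f(x, t)$ and $\tilde\gamma(1) = f(y, t)$, so $\tilde\gamma \in \Gamma(f(x, t), f(y, t))$ and is therefore an admissible competitor in the infimum defining $d_{M_{t+1}}(f(x,t), f(y,t))$.

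The key computation is to bound the Riemannian length of $\tilde\gamma$ in the metric $M(\cdot, t+1)$. By the chain rule, $\frac{\partial \tilde\gamma}{\partial s}(s) = \frac{\partial f}{\partial x}(\gamma(s), t)\,\frac{\partial \gamma}{\partial s}(s)$, so the integrand of the squared-length functional is
\begin{align*}
    \frac{\partial \tilde\gamma}{\partial s}(s)^\T M(\tilde\gamma(s), t+1) \frac{\partial \tilde\gamma}{\partial s}(s) = \frac{\partial \gamma}{\partial s}(s)^\T \left[ \frac{\partial f}{\partial x}(\gamma(s), t)^\T M(f(\gamma(s), t), t+1) \frac{\partial f}{\partial x}(\gamma(s), t) \right] \frac{\partial \gamma}{\partial s}(s) \:.
\end{align*}
Applying the contraction hypothesis pointwise at the state $\gamma(s)$ and time $t$, the bracketed matrix is $\preceq \gamma M(\gamma(s), t)$, hence the integrand is at most $\gamma\, \frac{\partial \gamma}{\partial s}(s)^\T M(\gamma(s), t) \frac{\partial \gamma}{\partial s}(s)$. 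Integrating over $s \in [0, 1]$ gives
\begin{align*}
    \int_0^1 \frac{\partial \tilde\gamma}{\partial s}(s)^\T M(\tilde\gamma(s), t+1) \frac{\partial \tilde\gamma}{\partial s}(s) \: ds \leq \gamma \int_0^1 \frac{\partial \gamma}{\partial s}(s)^\T M(\gamma(s), t) \frac{\partial \gamma}{\partial s}(s) \: ds \:.
\end{align*}

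To conclude, I would pass to infima. Since $\tilde\gamma$ is one particular element of $\Gamma(f(x,t), f(y,t))$, the left-hand side above is $\geq d_{M_{t+1}}^2(f(x,t), f(y,t))$; therefore for every $\gamma \in \Gamma(x, y)$ we have $d_{M_{t+1}}^2(f(x,t), f(y,t)) \leq \gamma \int_0^1 \frac{\partial \gamma}{\partial s}(s)^\T M(\gamma(s), t) \frac{\partial \gamma}{\partial s}(s) \, ds$. Taking the infimum over $\gamma \in \Gamma(x, y)$ on the right-hand side yields $d_{M_{t+1}}^2(f(x,t), f(y,t)) \leq \gamma\, d_{M_t}^2(x, y)$, which is the claim. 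There is no real obstacle here beyond bookkeeping; the only points requiring a bit of care are that the pushforward curve is smooth enough to be admissible (which needs $f(\cdot, t) \in C^1$), and that the inequality is established curve-by-curve \emph{before} any infimum is taken, so that the argument requires no existence of minimizing geodesics.
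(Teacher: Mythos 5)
Your proof is correct and follows essentially the same route as the paper's: push the curve forward through $f(\cdot,t)$, apply the contraction inequality pointwise inside the length integral, and compare to the geodesic distance. The only difference is that you work with an arbitrary curve and take the infimum at the end rather than starting from a minimizing geodesic as the paper does, which is a minor (and slightly cleaner) refinement since it sidesteps the question of whether the geodesic is attained.
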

\begin{proof}
Let $\gamma$ denote the geodesic curve under $M_t$
with $\gamma(0) = x$
and $\gamma(1) = y$.
By differentiability of $f(x, t)$,
we have that $\zeta(s) := f(\gamma(s), t)$
is a smooth curve between $f(x, t)$ and $f(y, t)$.
Furthermore:
\begin{align*}
    \frac{\partial \zeta}{\partial s}(s) = \frac{\partial f}{\partial x}(\gamma(s), t) \frac{\partial \gamma}{\partial s}(s) \:.
\end{align*}
Therefore, noting that the geodesic length between
$f(x, t)$ and $f(y, t)$ under $M_{t+1}$ must be less than the
curve length of $\zeta(\cdot)$ under $M_{t+1}$,
\begin{align*}
    d_{M_{t+1}}^2(f(x, t), f(y, t)) &\leq \int_0^1 \frac{\partial \zeta}{\partial s}(s)^\T M(\zeta(s), t+1) \frac{\partial \zeta}{\partial s}(s) \: ds \\
    &= \int_0^1 \frac{\partial \gamma}{\partial s}(s)^\T \frac{\partial f}{\partial x}(\gamma(s), t)^\T M(f(\gamma(s), t), t+1) \frac{\partial f}{\partial x}(\gamma(s), t) \frac{\partial \gamma}{\partial s}(s) \: ds \\
    &\leq \gamma \int_0^1 \frac{\partial \gamma}{\partial s}(s)^\T M(\gamma(s), t) \frac{\partial \gamma}{\partial s}(s) \: ds \\
    &= \gamma d_{M_t}^2(x, y) \:.
\end{align*}
\end{proof}

\begin{proposition}
\label{prop:metric_upper_lower_bounds}
Let the metric $M(x)$ satisfy $\mu I \preceq M(x) \preceq L I$ for all $x$. Then for all $x, y$:
\begin{align*}
    \sqrt{\mu} \norm{x-y} \leq d_M(x, y) \leq \sqrt{L} \norm{x-y} \:.
\end{align*}
\end{proposition}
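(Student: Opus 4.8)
The plan is to treat the two inequalities separately, in each case exploiting the uniform spectral bounds $\mu I \preceq M(x) \preceq L I$ to control the Riemannian line element $\sqrt{\gamma'(s)^\T M(\gamma(s)) \gamma'(s)}$ along curves.

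For the upper bound, I would exhibit an explicit competitor in the infimum defining $d_M(x,y)$. Take the straight segment $\gamma(s) = x + s(y-x)$, which belongs to $\Gamma(x,y)$ (it is smooth with the correct endpoints) and satisfies $\frac{\partial \gamma}{\partial s}(s) = y - x$ for every $s$. The bound $M(\gamma(s)) \preceq L I$ then gives $\frac{\partial \gamma}{\partial s}(s)^\T M(\gamma(s)) \frac{\partial \gamma}{\partial s}(s) \leq L \norm{y-x}^2$, and integrating over $s \in [0,1]$ and taking a square root yields $d_M(x,y) \leq \sqrt{L}\,\norm{x-y}$.

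For the lower bound, the key point is that \emph{every} admissible curve $\gamma \in \Gamma(x,y)$ has Riemannian length at least $\sqrt{\mu}\,\norm{x-y}$, so the bound survives the infimum. The estimate $M(\gamma(s)) \succeq \mu I$ gives the pointwise inequality $\frac{\partial \gamma}{\partial s}(s)^\T M(\gamma(s)) \frac{\partial \gamma}{\partial s}(s) \geq \mu \norm{\frac{\partial \gamma}{\partial s}(s)}^2$, so the length of $\gamma$ is at least $\sqrt{\mu}\big(\int_0^1 \norm{\frac{\partial \gamma}{\partial s}(s)}^2\,ds\big)^{1/2}$. Two standard facts then finish the argument: by Cauchy--Schwarz (equivalently Jensen), $\int_0^1 \norm{\frac{\partial \gamma}{\partial s}(s)}^2\,ds \geq \big(\int_0^1 \norm{\frac{\partial \gamma}{\partial s}(s)}\,ds\big)^2$; and by the triangle inequality for vector-valued integrals together with the fundamental theorem of calculus, $\int_0^1 \norm{\frac{\partial \gamma}{\partial s}(s)}\,ds \geq \norm{\int_0^1 \frac{\partial \gamma}{\partial s}(s)\,ds} = \norm{\gamma(1) - \gamma(0)} = \norm{x-y}$. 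Chaining these shows the length of $\gamma$ is at least $\sqrt{\mu}\,\norm{x-y}$; taking the infimum over $\gamma \in \Gamma(x,y)$ gives $d_M(x,y) \geq \sqrt{\mu}\,\norm{x-y}$.

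I do not expect a genuine obstacle here: this is a routine consequence of the spectral bounds and Cauchy--Schwarz. The only points requiring a little care are checking that the straight-line competitor is genuinely admissible (so the upper bound is valid) and that the lower-bound estimate is uniform over the curve $\gamma$ (so it is preserved under the infimum).
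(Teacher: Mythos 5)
Your proof is correct and follows essentially the same route as the paper: a straight-line competitor for the upper bound, and the pointwise bound $M \succeq \mu I$ combined with Cauchy--Schwarz and the fundamental theorem of calculus for the lower bound. If anything, your version is slightly more careful than the paper's, since you bound every admissible curve uniformly rather than assuming the infimum is attained by a geodesic, and you spell out the Cauchy--Schwarz and triangle-inequality steps that the paper leaves implicit.
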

\begin{proof}
We first prove the upper bound.
Let $\gamma$ denote a straight line between $x, y$. Then:
\begin{align*}
    d_M^2(x, y) \leq \int_0^1 \frac{\partial \gamma}{\partial s}(s)^\T M(\gamma(s)) \frac{\partial \gamma}{\partial s}(s) \: ds \leq L \int_0^1 \bignorm{\frac{\partial \gamma}{\partial s}(s)}^2 \: ds = L \norm{x-y}^2 \:.
\end{align*}
Taking square roots on both sides yields the result.
For the lower bound, let $\gamma$ denote the geodesic curve between $x$ and $y$ under $M$.
Then:
\begin{align*}
    \mu \norm{x-y}^2 \leq \int_0^1 \frac{\partial \gamma}{\partial s}(s)^\T (\mu I) \frac{\partial \gamma}{\partial s}(s) \: ds \leq \int_0^1 \frac{\partial \gamma}{\partial s}(s)^\T M(\gamma(s)) \frac{\partial \gamma}{\partial s}(s) \: ds = d_M^2(x, y) \:.
\end{align*}
Taking square roots on both sides yields the result.
\end{proof}

We now restate and prove Proposition~\ref{prop:contraction_implies_e_delta_iss}.
\contractionincrstability*
\begin{proof}
Let $u_t$ be an arbitrary signal and consider the two systems:
\begin{align*}
    x_{t+1} &= f(x_t, t) + u_t \:, \\
    y_{t+1} &= f(y_t, t) \:.
\end{align*}
We have for all $t \geq 0$:
\begin{align*}
    d_{M_{t+1}}(y_{t+1}, x_{t+1}) &= d_{M_{t+1}}(y_{t+1}, f(x_t, t) + u_t) \\
    &\leq d_{M_{t+1}}(y_{t+1}, f(x_t, t)) + d_{M_{t+1}}(f(x_t, t), f(x_t, t) + u_t) \\
    &= d_{M_{t+1}}(f(y_t, t), f(x_t, t)) + d_{M_{t+1}}(f(x_t, t), f(x_t, t) + u_t) \\
    &\leq \sqrt{\gamma} d_{M_t}(y_t, x_t) + \sqrt{L} \norm{u_t} \:.
\end{align*}
Above, the first inequality follows by the triangle inequality and the
last inequality follows
from Propositions~\ref{prop:discrete_time_contraction_decrease} and~\ref{prop:metric_upper_lower_bounds}.
Unrolling this recursion
and using Proposition~\ref{prop:metric_upper_lower_bounds} again:
\begin{align*}
    \sqrt{\mu}\norm{x_t - y_t} &\leq  d_{M_t}(y_t, x_t) \\
    &\leq \gamma^{t/2} d_{M_0}(x_0, y_0) + \sqrt{L} \sum_{k=0}^{t-1} \gamma^{(t-1-k)/2} \norm{u_k} \\
    &\leq \sqrt{L} \gamma^{t/2} \norm{x_0 - y_0} + \sqrt{L} \sum_{k=0}^{t-1} \gamma^{(t-1-k)/2} \norm{u_k} \:.
\end{align*}
\end{proof}

Next, we restate and prove Proposition~\ref{prop:contraction_with_noise}.
\contractionwithnoise*
\begin{proof}
Observe that $\frac{\partial g}{\partial x}(x, t) = \frac{\partial f}{\partial x}(x, t)$.
Then for any $x, t$:
\begin{align*}
    &\frac{\partial g}{\partial x}(x, t)^\T M(g(x, t), t+1) \frac{\partial g}{\partial x}(x, t) \\
    &= \frac{\partial f}{\partial x}(x, t)^\T M(f(x, t) + w_t, t+1) \frac{\partial f}{\partial x}(x, t) \\
    &= \frac{\partial f}{\partial x}(x, t)^\T M(f(x, t), t+1) \frac{\partial f}{\partial x}(x, t) \\
    &\qquad+ \frac{\partial f}{\partial x}(x, t)^\T (M(f(x, t) + w_t, t+1) - M(f(x, t), t+1)) \frac{\partial f}{\partial x}(x, t) \\
    &\preceq \gamma M(x, t) + \bignorm{\frac{\partial f}{\partial x}(x, t)}^2 \norm{ M(f(x, t) + w_t, t+1) - M(f(x, t), t+1) } I \\
    &\preceq \gamma M(x, t) + L_f^2 L_M W I \\
    &\preceq \left( \gamma + \frac{L_f^2 L_M W}{\mu} \right) M(x, t) \:.
\end{align*}
\end{proof}

\section{Review of Regret Bounds in Online Convex Optimization}

For completeness, we review basic results in online convex optimization (OCO)
specialized to the case of online least-squares.
A reader who is already familiar with OCO may freely skip this section.
See \cite{hazan16oco} for a more complete treatment of the subject.

In particular, we consider the sequence of functions:
\begin{align*}
    f_t(\hat{\alpha}) := \frac{1}{2} \norm{M_t \hat{\alpha} - y_t}^2 \:, \:\: t = 1, 2, ..., T \:,
\end{align*}
where $\hat{\alpha} \in \R^p$ is constrained to lie in
the set $\calC := \{ \hat{\alpha} \in \R^p : \norm{\hat{\alpha}} \leq D \}$.
All algorithms are initialized with an arbitrary
$\hat{\alpha}_1 \in \calC$.
We define the prediction regret as:
\begin{align*}
    \mathsf{PredictionRegret}(T) := \sup_{\alpha \in \calC} \sum_{t=1}^{T} f_t(\hat{\alpha}_t) - f_t(\alpha) \:. 
\end{align*}
For what follows,
we will assume that $\norm{M_t} \leq M$ and $\norm{Y_t} \leq Y$,
so that $\norm{\nabla f_t(\hat{\alpha})} \leq G := M(DM + Y)$.

\subsection{Online Gradient Descent}

The online gradient descent update is:
\begin{align*}
    \hat{\alpha}_{t+1} = \Pi_{\calC}[ \hat{\alpha}_t - \eta_t \nabla f_t(\hat{\alpha}_t) ] \:.
\end{align*}

The following proposition shows that online gradient descent achieves
$\sqrt{T}$ regret.
\begin{proposition}[cf.\ Theorem 3.1 of \cite{hazan16oco}]
\label{prop:online_gd_bounded_case}
Suppose we run the online gradient descent update with 
$\eta_t := \frac{D}{G\sqrt{t}}$.
We have:
\begin{align*}
    \sup_{\alpha \in \calC} \sum_{t=1}^{T} f_t(\hat{\alpha}_t) - f_t(\alpha) \leq 3 GD \sqrt{T} \:.
\end{align*}
\end{proposition}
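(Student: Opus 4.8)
The plan is to run the textbook analysis of online gradient descent against an arbitrary fixed comparator $\alpha \in \calC$, and take the supremum only at the very end, since the bound we derive will not depend on $\alpha$. First I would use convexity of each $f_t$ — immediate, as $f_t$ is a squared Euclidean norm composed with an affine map — to linearize the instantaneous regret: writing $g_t := \nabla f_t(\hat{\alpha}_t)$, we have $f_t(\hat{\alpha}_t) - f_t(\alpha) \le \ip{g_t}{\hat{\alpha}_t - \alpha}$. Since $\hat{\alpha}_1 \in \calC$ and every update ends with a projection onto $\calC$, all iterates lie in $\calC$, so $\norm{g_t} \le G$ for all $t$.

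Next I would expand the update using nonexpansiveness of the Euclidean projection $\Pi_{\calC}$ and the fact that $\alpha \in \calC$: from $\norm{\hat{\alpha}_{t+1} - \alpha}^2 \le \norm{\hat{\alpha}_t - \eta_t g_t - \alpha}^2$ one rearranges to the standard per-step bound
\begin{align*}
    \ip{g_t}{\hat{\alpha}_t - \alpha} \le \frac{\norm{\hat{\alpha}_t - \alpha}^2 - \norm{\hat{\alpha}_{t+1} - \alpha}^2}{2\eta_t} + \frac{\eta_t}{2}\norm{g_t}^2 \:.
\end{align*}
Summing over $t = 1, \dots, T$, I would handle the first group by Abel summation: because $\eta_t = D/(G\sqrt{t})$ is nonincreasing, the telescoped coefficients $\frac{1}{2\eta_t} - \frac{1}{2\eta_{t-1}}$ are nonnegative, so using $\norm{\hat{\alpha}_t - \alpha}^2 \le (2D)^2$ (the squared diameter of $\calC$) the whole group collapses to $\frac{(2D)^2}{2\eta_T} = 2GD\sqrt{T}$. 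For the second group I would use $\norm{g_t} \le G$ together with the elementary estimate $\sum_{t=1}^{T} t^{-1/2} \le 2\sqrt{T}$, giving $\frac{1}{2} \sum_{t=1}^{T} \eta_t \norm{g_t}^2 \le \frac{GD}{2}\sum_{t=1}^{T} t^{-1/2} \le GD\sqrt{T}$. Adding the two contributions yields $\sum_{t=1}^{T} f_t(\hat{\alpha}_t) - f_t(\alpha) \le 3GD\sqrt{T}$, and since the right-hand side is independent of $\alpha$ we may take $\sup_{\alpha \in \calC}$ to conclude.

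This is a classical result (Theorem 3.1 of \cite{hazan16oco}), so there is no genuine obstacle; the only points needing care are the Abel-summation bookkeeping for the time-varying step size — in particular confirming the sign of $\frac{1}{2\eta_t} - \frac{1}{2\eta_{t-1}}$ so that the diameter bound can be pulled outside the sum — and tracking the numerical constants precisely enough to land exactly at $3GD\sqrt{T}$ rather than a looser multiple.
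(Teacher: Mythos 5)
Your proposal is correct and follows essentially the same route as the paper's proof: convexity to linearize the regret, nonexpansiveness of the projection (the paper's ``Pythagorean theorem'') to get the per-step inequality, Abel summation with the diameter bound $\norm{\hat{\alpha}_t - \alpha}^2 \leq 4D^2$ collapsing the telescoping terms to $2D^2/\eta_T = 2GD\sqrt{T}$, and $\sum_{t=1}^{T} t^{-1/2} \leq 2\sqrt{T}$ for the gradient-norm terms, landing at the same constant $3GD\sqrt{T}$.
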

\begin{proof}
Fix any $\alpha \in \calC$
and define $\tilde{\alpha}_t := \hat{\alpha}_t - \alpha$.
We abbreviate $\nabla_t := \nabla f_t(\hat{\alpha}_t)$.
First, using the Pythagorean theorem, we perform the following expansion for $t \geq 1$:
\begin{align*}
    \norm{\tilde{\alpha}_{t+1}}^2 \leq \norm{\tilde{\alpha}_t}^2 - 2 \eta_t \ip{\tilde{\alpha}_t}{\nabla_t} + \eta_t^2 \norm{\nabla_t}^2 \:.
\end{align*}
Re-arranging the above inequality yields:
\begin{align*}
    \ip{\tilde{\alpha}_t}{\nabla_t} \leq \frac{1}{2\eta_t} (\norm{\tilde{\alpha}_t}^2 - \norm{\tilde{\alpha}_{t+1}}^2) + \frac{\eta_t}{2} \norm{\nabla_t}^2 \:.
\end{align*}
Therefore by convexity of the $f_t$'s:
\begin{align*}
    \sum_{t=1}^{T} f_t(\hat{\alpha}_t) - f_t(\alpha) &\leq \sum_{t=1}^{T} \ip{\tilde{\alpha}_t}{\nabla_t} \leq \sum_{t=1}^{T} \frac{1}{2\eta_t} (\norm{\tilde{\alpha}_t}^2 - \norm{\tilde{\alpha}_{t+1}}^2) + \frac{\eta_t}{2} \norm{\nabla_t}^2 \\
    &\leq \frac{1}{2}\left( \frac{\norm{\tilde{\alpha}_1}^2}{\eta_1} - \frac{\norm{\tilde{\alpha}_{T+1}}^2}{\eta_T} \right) + \frac{1}{2} \sum_{t=2}^{T} \norm{\tilde{\alpha}_t}^2 \left(\frac{1}{\eta_t} - \frac{1}{\eta_{t-1}}\right) + \frac{1}{2} \sum_{t=1}^{T} \eta_t \norm{\nabla_t}^2 \\
    &\leq \frac{2D^2}{\eta_1} + 2 D^2 \sum_{t=2}^{T} \left(\frac{1}{\eta_t} - \frac{1}{\eta_{t-1}}\right) + \frac{1}{2} \sum_{t=1}^{T} \eta_t \norm{\nabla_t}^2 \\
    &= \frac{2D^2}{\eta_1} + 2 D^2 \left( \frac{1}{\eta_T} - \frac{1}{\eta_1} \right) + \frac{1}{2} \sum_{t=1}^{T} \eta_t \norm{\nabla_t}^2 \\
    &= \frac{2D^2}{\eta_T} + \frac{1}{2} \sum_{t=1}^{T} \eta_t \norm{\nabla_t}^2 
    \leq \frac{2D^2}{\eta_T} + \frac{G^2}{2} \sum_{t=1}^{T} \eta_t \\
    &= 2GD \sqrt{T} + \frac{GD}{2} \sum_{t=1}^{T} \frac{1}{\sqrt{t}} \leq 3 GD \sqrt{T} \:.
\end{align*}
\end{proof}
\subsection{Online Newton Method}
The online Newton algorithm we consider is:
\begin{align*}
    \hat{\alpha}_{t+1} = \Pi_{\mathcal{C},t}[ \hat{\alpha}_t - \eta A_t^{-1} \nabla f_t(\hat{\alpha}_t)] \:, \:\: A_t = \lambda I + \sum_{i=1}^{t} M_i^\T M_i \:.
\end{align*}
Here, $\Pi_{\mathcal{C},t}$ is a generalized projection with respect to the $A_t$ norm:
\begin{align*}
    \Pi_{\calC, t}[x] = \arg\min_{y \in \calC} \norm{ x - y }_{A_t} \:.
\end{align*}
The following result is the regret bound for the online Newton method,
specialized to the least-squares setting rather than the more general
exp-concave setting handled in~\citet{hazan16oco}.
\begin{proposition}[cf.\ Theorem 4.4 of \cite{hazan16oco}]
\label{prop:online_newton_bounded_case}
Suppose we run the online Newton update with any $\lambda > 0$ and $\eta \geq 1$.
Then we have:
\begin{align*}
    \sup_{\alpha \in \calC} \sum_{t=1}^{T} f_t(\hat{\alpha}_t) - f_t(\alpha) \leq \frac{2 D^2}{\eta} (\lambda + M^2) + \frac{\eta p}{2} (DM + Y)^2 \log(1 + M^2 T/\lambda) \:.
\end{align*}
\end{proposition}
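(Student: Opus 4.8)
The plan is to run the standard online Newton step analysis, exploiting the fact that for the quadratic loss $f_t(\hat\alpha) = \tfrac12\norm{M_t\hat\alpha - y_t}^2$ the second-order Taylor expansion is \emph{exact}, so that — unlike the general exp-concave case in~\cite{hazan16oco} — no exp-concavity argument is needed. Writing $\nabla_t := \nabla f_t(\hat\alpha_t) = M_t^\T(M_t\hat\alpha_t - y_t)$ and $\tilde\alpha_t := \hat\alpha_t - \alpha$, expanding $f_t(\alpha)$ about $\hat\alpha_t$ yields the exact identity $f_t(\hat\alpha_t) - f_t(\alpha) = \ip{\nabla_t}{\tilde\alpha_t} - \tfrac12\norm{M_t\tilde\alpha_t}^2$ for every fixed $\alpha \in \calC$.

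First I would control the linear term by the generalized Pythagorean theorem for the $A_t$-norm projection: since $\hat\alpha_{t+1} = \Pi_{\calC,t}[\hat\alpha_t - \eta A_t^{-1}\nabla_t]$ and $\alpha \in \calC$, we get $\norm{\tilde\alpha_{t+1}}_{A_t}^2 \le \norm{\tilde\alpha_t - \eta A_t^{-1}\nabla_t}_{A_t}^2 = \norm{\tilde\alpha_t}_{A_t}^2 - 2\eta\ip{\nabla_t}{\tilde\alpha_t} + \eta^2 \nabla_t^\T A_t^{-1}\nabla_t$, hence $\ip{\nabla_t}{\tilde\alpha_t} \le \tfrac{1}{2\eta}(\norm{\tilde\alpha_t}_{A_t}^2 - \norm{\tilde\alpha_{t+1}}_{A_t}^2) + \tfrac{\eta}{2}\nabla_t^\T A_t^{-1}\nabla_t$. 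Summing over $t$ and telescoping the $A_t$-weighted norms — using $A_t - A_{t-1} = M_t^\T M_t$ for $t \ge 2$, retaining $\norm{\tilde\alpha_1}_{A_1}^2$ as a boundary term, and discarding the nonnegative term $\norm{\tilde\alpha_{T+1}}_{A_T}^2$ — gives $\sum_{t=1}^{T}(\norm{\tilde\alpha_t}_{A_t}^2 - \norm{\tilde\alpha_{t+1}}_{A_t}^2) \le \norm{\tilde\alpha_1}_{A_1}^2 + \sum_{t=2}^{T}\norm{M_t\tilde\alpha_t}^2$. Combining this with the identity above, the $-\tfrac12\sum_t\norm{M_t\tilde\alpha_t}^2$ terms fully absorb the $\tfrac{1}{2\eta}\sum_{t\ge 2}\norm{M_t\tilde\alpha_t}^2$ contribution precisely because $\eta \ge 1$, leaving $\sum_{t=1}^{T}(f_t(\hat\alpha_t) - f_t(\alpha)) \le \tfrac{1}{2\eta}\norm{\tilde\alpha_1}_{A_1}^2 + \tfrac{\eta}{2}\sum_{t=1}^{T}\nabla_t^\T A_t^{-1}\nabla_t$. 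The boundary term is bounded using $A_1 \preceq (\lambda + M^2)I$ and $\norm{\tilde\alpha_1} \le 2D$, giving $\tfrac{1}{2\eta}\norm{\tilde\alpha_1}_{A_1}^2 \le \tfrac{2D^2}{\eta}(\lambda + M^2)$, which is the first term of the claim.

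It remains to bound the elliptic-potential sum $\sum_{t=1}^{T}\nabla_t^\T A_t^{-1}\nabla_t$. Writing $\nabla_t = M_t^\T r_t$ with residual $r_t = M_t\hat\alpha_t - y_t$, so that $\norm{r_t} \le DM + Y$ since $\hat\alpha_t \in \calC$, I would bound $\nabla_t^\T A_t^{-1}\nabla_t = r_t^\T M_t A_t^{-1}M_t^\T r_t \le \norm{r_t}^2 \Tr(M_t A_t^{-1}M_t^\T) = \norm{r_t}^2 \Tr(A_t^{-1}(A_t - A_{t-1}))$ (with $A_0 := \lambda I$), using that $M_t A_t^{-1}M_t^\T \succeq 0$ so its operator norm is at most its trace, together with cyclicity of the trace. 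Concavity of $X \mapsto \log\det X$ then gives $\Tr(A_t^{-1}(A_t - A_{t-1})) \le \log\det A_t - \log\det A_{t-1}$, so $\sum_{t=1}^{T}\Tr(A_t^{-1}(A_t - A_{t-1})) \le \log\det A_T - \log\det(\lambda I) \le p\log(1 + M^2 T/\lambda)$, the last step using $A_T \preceq (\lambda + M^2 T)I$. Assembling the two bounds yields $\sum_{t=1}^{T}(f_t(\hat\alpha_t) - f_t(\alpha)) \le \tfrac{2D^2}{\eta}(\lambda + M^2) + \tfrac{\eta p}{2}(DM + Y)^2\log(1 + M^2 T/\lambda)$, and since the right-hand side is independent of $\alpha$, taking the supremum over $\alpha \in \calC$ completes the proof. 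I expect the elliptic-potential step to be the main obstacle: the operator-norm-versus-trace estimate, the cyclic rewriting $\Tr(M_t A_t^{-1}M_t^\T) = \Tr(A_t^{-1}(A_t - A_{t-1}))$, and invoking $\log\det$ concavity with the inequality oriented correctly are all easy to get backwards; the telescoping bookkeeping and the use of $\eta \ge 1$ are routine by comparison.
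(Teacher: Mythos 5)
Your proposal is correct and follows essentially the same route as the paper's proof: the exact quadratic Taylor identity, the $A_t$-norm Pythagorean/telescoping argument with the $\eta \ge 1$ absorption, and the elliptic-potential bound via $\Tr(A_t^{-1}(A_t - A_{t-1})) \le \log\det A_t - \log\det A_{t-1}$ (the paper cites Lemma 4.6 of \cite{hazan16oco} for this step, which is the same $\log\det$-concavity fact you invoke directly). No gaps.
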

\begin{proof}
Let $\alpha$ be any fixed point in $\calC$.
By the Pythagorean theorem,
for any $\hat{\alpha}$, we have that $\norm{\Pi_{\calC,t}(\hat{\alpha}) - \alpha}_{A_t} \leq \norm{\hat{\alpha} - \alpha}_{A_t}$.
Therefore, defining $\tilde{\alpha}_t := \hat{\alpha}_t - \alpha$ and abbreviating $\nabla_t := \nabla f_t(\hat{\alpha}_t)$,
for any $t \geq 1$:
\begin{align*}
    \norm{\tilde{\alpha}_{t+1}}^2_{A_t} \leq \norm{\tilde{\alpha}_t}^2_{A_t} + \eta^2 \norm{\nabla_t}^2_{A_t^{-1}} - 2 \eta \ip{\tilde{\alpha}_t}{\nabla_t} \:.
\end{align*}
Re-arranging the above inequality yields,
\begin{align*}
    \ip{\tilde{\alpha}_t}{\nabla_t} \leq \frac{1}{2\eta} ( \norm{\tilde{\alpha}_t}^2_{A_t} - \norm{\tilde{\alpha}_{t+1}}^2_{A_t}) + \frac{\eta}{2}\norm{\nabla_t}^2_{A_t^{-1}} \:.
\end{align*}
Because $f_t$ is quadratic, its second order Taylor expansion yields the identity:
\begin{align*}
    f_t(\hat{\alpha}_t) - f_t(\alpha) = \ip{\nabla_t}{\tilde{\alpha}_t} - \frac{1}{2} \norm{\tilde{\alpha}_t}^2_{M_t^\T M_t} \:.
\end{align*}
Therefore,
\begin{align*}
    \sum_{t=1}^{T} f_t(\hat{\alpha}_t) - f_t(\alpha) &= \sum_{t=1}^{T} \ip{\nabla_t}{\tilde{\alpha}_t} - \frac{1}{2} \norm{\tilde{\alpha}_t}^2_{M_t^\T M_t} \\
    &\leq \sum_{t=1}^{T}  \frac{1}{2\eta} ( \norm{\tilde{\alpha}_t}^2_{A_t} - \norm{\tilde{\alpha}_{t+1}}^2_{A_t}) + \frac{\eta}{2}\norm{\nabla_t}^2_{A_t^{-1}} - \frac{1}{2} \norm{\tilde{\alpha}_t}^2_{M_t^\T M_t} \:.
\end{align*}
Next, we observe that:
\begin{align*}
     \sum_{t=1}^{T} ( \norm{\tilde{\alpha}_t}^2_{A_t} - \norm{\tilde{\alpha}_{t+1}}^2_{A_t}) \leq \norm{\tilde{\alpha}_1}^2_{A_1} + \sum_{t=2}^{T} \tilde{\alpha}_t^\T( A_t - A_{t-1} ) \tilde{\alpha}_t 
     = \norm{\tilde{\alpha}_1}^2_{A_1} + \sum_{t=2}^{T} \norm{\tilde{\alpha}_t}^2_{M_t^\T M_t} \:.
\end{align*}
Therefore as long as $\eta \geq 1$,
\begin{align*}
    \sum_{t=1}^{T} f_t(\hat{\alpha}_t) - f_t(\alpha) &\leq \frac{1}{2\eta} \norm{\tilde{\alpha}_1}^2_{A_1} + \frac{\eta}{2} \sum_{t=1}^{T} \norm{\nabla_t}^2_{A_t^{-1}} \:.
\end{align*}
Let $\nabla_t = M_t^\T r_t$ with $r_t := M_t \hat{\alpha}_t - y_t$.
With this notation:
\begin{align*}
    \norm{\nabla_t}^2_{A_t^{-1}} &= \Tr( \nabla_t^\T A_t^{-1} \nabla_t ) 
    = \Tr( M_t A_t^{-1} M_t^\T r_tr_t^\T ) 
    \leq \norm{r_t}^2 \Tr(A_t^{-1} M_t^\T M_t) \\
    &= \norm{r_t}^2 \Tr(A_t^{-1}(A_t - A_{t-1})) 
    \leq \norm{r_t}^2 \log\frac{\det{A_t}}{\det{A_{t-1}}} \:.
\end{align*}
Above, the last inequality follows
from Lemma 4.6 of \cite{hazan16oco}.
Therefore:
\begin{align*}
    \sum_{t=1}^{T} f_t(\hat{\alpha}_t) - f_t(\alpha) &\leq \frac{1}{2\eta} \norm{\tilde{\alpha}_1}^2_{A_1} + \frac{\eta}{2}\sum_{t=1}^{T}  \norm{M_t \hat{\alpha}_t - y_t}^2 \log\frac{\det(A_t)}{\det(A_{t-1})} \\
    &\leq  \frac{1}{2\eta} \norm{\tilde{\alpha}_1}^2_{A_1} + \frac{\eta}{2} \max_{t=1,...,T} \norm{M_t \hat{\alpha}_t - y_t}^2 \sum_{t=1}^{T} \log\frac{\det(A_t)}{\det(A_{t-1})} \\
    &= \frac{1}{2\eta} \norm{\tilde{\alpha}_1}^2_{A_1} + \frac{\eta}{2} \max_{t=1,...,T} \norm{M_t \hat{\alpha}_t - y_t}^2 \log \frac{\det(A_T)}{\det(A_0)} \\
    &\leq \frac{1}{2\eta} 4 D^2 (\lambda + M^2) + \frac{\eta}{2} (DM + Y)^2 p \log(1 + M^2 T/\lambda) \:.
\end{align*}
\end{proof}

\section{Omitted Proofs for Online Least-Squares Results}
\label{sec:app:online_ls}

We first restate and prove Theorem~\ref{thm:regret_ls_reduction}
\regretlsreduction*
\begin{proof}
Fix a realization $\{w_t\}$.
We first compare the two trajectories:
\begin{align*}
    x^c_{t+1} &= f(x^c_t, t) + w_t \:, \:\: x^c_0 = x_0 \:, \\
    y_{t+1} &= f(y_t, t) \:, \:\: y_0 = 0 \:.
\end{align*}
Since the zero trajectory is a valid trajectory for $f(x, t)$, 
and since the system $f(x, t)$ is $(\beta, \rho, \gamma)$-E-$\delta$ISS,
we have by \eqref{eq:e_delta_iss_ineq} for all $t \geq 0$:
\begin{align*}
    \norm{x_t^c} \leq \beta \rho^t \norm{x_0} + \gamma \sum_{k=0}^{t} \rho^{t-1-k} \norm{w_k} \leq \beta \norm{x_0} + \frac{W \gamma}{1-\rho} \:. 
\end{align*}
Next, we compare the two trajectories:
\begin{align*}
    x^a_{t+1} &= f(x^a_t, t) + B_t Y_t \tilde{\alpha}_t + w_t \:, \:\: x^a_0 = x_0 \:, \\
    x^c_{t+1} &= f(x^c_t, t) + w_t \:, \:\: x^c_0 = x_0 \:.
\end{align*}
Since the system $g(x, t)$ is also $(\beta, \rho, \gamma)$-E-$\delta$ISS,
we have by \eqref{eq:e_delta_iss_ineq} for all $t \geq 0$:
\begin{align*}
    \norm{x_t^a - x_t^c} \leq \gamma \sum_{k=0}^{t-1} \rho^{t-1-k} \norm{B_k Y_k \tilde{\alpha}_k} \:.
\end{align*}
We can upper bound the RHS of the above inequality by $\frac{2 M^2 D \gamma}{1-\rho}$.
Therefore, we have for all $t \geq 0$:
\begin{align*}
    \max\{\norm{x_t^a}, \norm{x_t^c}\} \leq \beta \norm{x_0} + \frac{(2M^2 D + W)\gamma}{1-\rho} = B_x \:.
\end{align*}
We now write:
\begin{align*}
    \sum_{t=0}^{T-1} \norm{x_t^a}^2 - \norm{x_t^c}^2 &\leq \sum_{t=0}^{T-1} (\norm{x_t^a} + \norm{x_t^c}) \norm{x_t^a - x_t^c} \leq 2 B_x \gamma \sum_{t=0}^{T-1} \sum_{k=0}^{t-1} \rho^{t-1-k} \norm{B_k Y_k \tilde{\alpha}_k} \\
    &\leq \frac{2 B_x \gamma}{1-\rho} \sum_{t=0}^{T-1} \norm{B_t Y_t \tilde{\alpha}_t} \leq \frac{2 B_x \gamma}{1-\rho} \sqrt{T} \sqrt{\sum_{t=0}^{T-1} \norm{B_t Y_t \tilde{\alpha}_t}^2} \:. 
\end{align*}
The first inequality follows by factorization and the reverse triangle inequality, while the last follows by Cauchy-Schwarz.
The above inequality holds for every realization $\{w_t\}$.
Therefore, taking an expectation and using Jensen's inequality
to move the expectation under the square root:
\begin{align*}
    \E\left[\sum_{t=0}^{T-1} \norm{x_t^a}^2 - \norm{x_t^c}^2 \right] \leq \frac{2 B_x \gamma}{1-\rho} \sqrt{T} \sqrt{\sum_{t=0}^{T-1} \E\norm{B_t Y_t \tilde{\alpha}_t}^2} \:.
\end{align*}
\end{proof}

We first prove a result analogous to Theorem~\ref{thm:regret_ls_reduction}
for the $k$ timestep delayed system~\eqref{eq:delayed_system}.
\begin{lemma}
\label{lemma:delayed_decomposition}
Consider the $k$ timestep delayed system~\eqref{eq:delayed_system}.
Suppose that Assumption~\ref{assumption:online_ls} holds.
We have for every realization $\{w_t\}$ satisfying $\sup_{t} \norm{w_t} \leq W$
and every $T \geq k$:
\begin{align*}
    \sum_{t=0}^{T-1} \norm{x_t^a}^2 - \norm{x_t^c}^2 &\leq k B_x^2 + \frac{2B_x M^2 D \gamma}{(1-\rho)^2} + \frac{2B_x\gamma}{1-\rho} \left(\sum_{t=0}^{T-1} \norm{B_t Y_t \tilde{\alpha}_t} + \sum_{s=k}^{T-2} \norm{B_s Y_s (\hat{\alpha}_s - \hat{\alpha}_{s-k})} \right)\:.
\end{align*}
\end{lemma}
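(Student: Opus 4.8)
The plan is to follow the proof of Theorem~\ref{thm:regret_ls_reduction}, modified in two ways forced by the delay: the first $k$ timesteps must be peeled off and bounded crudely, and the drift entering the E-$\delta$ISS comparison between $x_t^a$ and $x_t^c$ now involves the \emph{stale} error $\tilde\alpha_{t-k}$ rather than $\tilde\alpha_t$.

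First I would write the adaptive trajectory as $x_{t+1}^a = g(x_t^a, t) + v_t$ with $g(x,t) := f(x,t) + w_t$ and $v_t := B_t(\xi_t - Y_t\alpha)$. For $t \ge k$ the certainty-equivalence law gives $\xi_t = Y_t\hat\alpha_{t-k}$, so $v_t = B_t Y_t\tilde\alpha_{t-k}$ and $\norm{v_t} \le 2M^2D$; for $t < k$ the inputs are predetermined (zero, under the convention $u_{-k} = \dots = u_{-1} = 0$), so $v_t = -B_t Y_t\alpha$ and $\norm{v_t} \le M^2 D$. Applying E-$\delta$ISS (Assumption~\ref{assumption:online_ls}) exactly as in Theorem~\ref{thm:regret_ls_reduction}: comparing $x^c$ against the zero trajectory of $f$ bounds $\norm{x_t^c} \le \beta\norm{x_0} + \frac{W\gamma}{1-\rho}$, and comparing $x^a$ against $x^c$ (same initial condition) gives $\norm{x_t^a - x_t^c} \le \gamma\sum_{j=0}^{t-1}\rho^{t-1-j}\norm{v_j} \le \frac{2M^2D\gamma}{1-\rho}$; together these yield $\max\{\norm{x_t^a}, \norm{x_t^c}\} \le B_x$ for all $t$.

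Next I would split $\sum_{t=0}^{T-1}(\norm{x_t^a}^2 - \norm{x_t^c}^2)$ at $t=k$. The first $k$ terms give at most $\sum_{t=0}^{k-1}\norm{x_t^a}^2 \le k B_x^2$. For $t \ge k$, use $\norm{x_t^a}^2 - \norm{x_t^c}^2 \le (\norm{x_t^a}+\norm{x_t^c})\norm{x_t^a - x_t^c} \le 2B_x\gamma\sum_{j=0}^{t-1}\rho^{t-1-j}\norm{v_j}$ and split the inner sum at $j = k$. Summing over $t$ and exchanging the order of summation, the $j<k$ portion is controlled by $\sum_{t\ge k}\rho^{t-1-j} \le \rho^{k-1-j}/(1-\rho)$ followed by $\sum_{j<k}\rho^{k-1-j} \le 1/(1-\rho)$, contributing $\frac{2B_x M^2 D\gamma}{(1-\rho)^2}$; the $j \ge k$ portion telescopes via $\sum_{t>j}\rho^{t-1-j} \le 1/(1-\rho)$ to $\frac{2B_x\gamma}{1-\rho}\sum_{j=k}^{T-2}\norm{B_j Y_j\tilde\alpha_{j-k}}$.

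Finally, to replace the stale error by the quantities in the statement, I would use $\tilde\alpha_{j-k} = \tilde\alpha_j - (\hat\alpha_j - \hat\alpha_{j-k})$ and the triangle inequality to get $\norm{B_j Y_j\tilde\alpha_{j-k}} \le \norm{B_j Y_j\tilde\alpha_j} + \norm{B_j Y_j(\hat\alpha_j - \hat\alpha_{j-k})}$; summing and relaxing the index ranges ($\sum_{j=k}^{T-2}\norm{B_j Y_j\tilde\alpha_j} \le \sum_{t=0}^{T-1}\norm{B_t Y_t\tilde\alpha_t}$, and relabel $j \to s$ in the second piece) produces exactly the two sums in the lemma. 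Adding the $k B_x^2$ term from the first $k$ steps finishes the proof. The substantive point — the only thing beyond bookkeeping — is that the input perturbation is governed by $\tilde\alpha_{t-k}$, which the online-optimization regret does not directly bound, so one must isolate the incremental parameter change $\hat\alpha_t - \hat\alpha_{t-k}$ (later controlled through the algorithm's step sizes); the remaining work is just swapping orders of summation in the double sums while keeping the geometric-series constants tight.
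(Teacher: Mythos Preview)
Your proposal is correct and follows essentially the same route as the paper's proof. The only cosmetic difference is that the paper applies the triangle inequality $\norm{B_sY_s\tilde\alpha_{s-k}}\le\norm{B_sY_s\tilde\alpha_s}+\norm{B_sY_s(\hat\alpha_s-\hat\alpha_{s-k})}$ \emph{before} splitting the inner sum at $s=k$ (using the convention $\hat\alpha_t=0$ for $t<0$, equivalent to your $u_t=0$ for $t<0$), whereas you split first and apply the triangle inequality only on the $j\ge k$ block; both orderings yield the same three pieces with the same constants.
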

\begin{proof}
Fix a realization $\{w_t\}$.
We compare the two dynamical systems:
\begin{align*}
    x_{t+1}^a &= f(x_t^a, t) + B(x_t^a, t) (\xi_t - Y_t \alpha) + w_t \:, \:\: x_0^a = x_0 \:, \\
    x_{t+1}^c &= f(x_t^c, t) + w_t \:, \:\: x_0^c = x_0 \:.
\end{align*}
Let $\hat{\alpha}_{t} = 0$ for all $t < 0$.
Because $f(x, t) + w_t$ is $(\beta,\rho,\gamma)$-E-$\delta$ISS, then
for all $t \geq 0$ we have:
\begin{align*}
    \norm{x_t^a - x_t^c} &\leq \gamma \sum_{s=0}^{t-1} \rho^{t-1-s} \norm{B_s(\xi_s - Y_s \alpha)} \\
    &= \gamma \sum_{s=0}^{t-1} \rho^{t-1-s} \norm{B_s(u_{s-k} - Y_s \alpha)} \\
    &= \gamma \sum_{s=0}^{t-1} \rho^{t-1-s} \norm{ B_s Y_s \tilde{\alpha}_{s-k}} \\
    &\leq \gamma \sum_{s=0}^{t-1} \rho^{t-1-s} \norm{ B_s Y_s \tilde{\alpha}_{s}} + \gamma \sum_{s=0}^{t-1} \rho^{t-1-s} \norm{B_s Y_s (\hat{\alpha}_{s} - \hat{\alpha}_{s-k})} \:.
\end{align*}
Therefore:
\begin{align*}
    \sum_{t=k}^{T-1} \norm{x_t^a - x_t^c} &\leq \gamma \sum_{t=k}^{T-1} \sum_{s=0}^{t-1} \rho^{t-1-s} \norm{ B_s Y_s \tilde{\alpha}_{s}} + \gamma \sum_{t=k}^{T-1} \sum_{s=0}^{t-1} \rho^{t-1-s} \norm{B_s Y_s (\hat{\alpha}_{s} - \hat{\alpha}_{s-k})} \\
    &\leq \frac{\gamma}{1-\rho} \sum_{t=0}^{T-1} \norm{B_t Y_t \tilde{\alpha}_t} + M^2 D \gamma \sum_{t=k}^{T-1} \sum_{s=0}^{k-1} \rho^{t-1-s} +  \gamma \sum_{t=k}^{T-1} \sum_{s=k}^{t-1} \rho^{t-1-s} \norm{B_s Y_s (\hat{\alpha}_s-\hat{\alpha}_{s-k})} \\
    &\leq \frac{\gamma}{1-\rho} \sum_{t=0}^{T-1} \norm{B_t Y_t \tilde{\alpha}_t} + M^2 D \gamma \frac{(1-\rho^{k})(1-\rho^{T-k})}{(1-\rho)^2} + \frac{\gamma}{1-\rho} \sum_{s=k}^{T-2} \norm{B_s Y_s (\hat{\alpha}_s - \hat{\alpha}_{s-k})} \\
    &\leq \frac{\gamma}{1-\rho} \sum_{t=0}^{T-1} \norm{B_t Y_t \tilde{\alpha}_t} + M^2 D \frac{\gamma}{(1-\rho)^2} + \frac{\gamma}{1-\rho} \sum_{s=k}^{T-2} \norm{B_s Y_s (\hat{\alpha}_s - \hat{\alpha}_{s-k})} \:.
\end{align*}
By an identical argument as in Theorem~\ref{thm:regret_ls_reduction},
we can bound:
\begin{align*}
    \max\{\norm{x_t^a}, \norm{x_t^c}\} \leq \beta \norm{x_0} + \frac{(2M^2 D + W)\gamma}{1-\rho} = B_x \:.
\end{align*}
Hence:
\begin{align*}
    &\sum_{t=0}^{T-1} \norm{x_t^a}^2 - \norm{x_t^c}^2 \\
    &= \sum_{t=0}^{k-1} \norm{x_t^a}^2 - \norm{x_t^c}^2 + \sum_{t=k}^{T-1} \norm{x_t^a}^2 - \norm{x_t^c}^2 \\
    &\leq k B_x^2 + \sum_{t=k}^{T-1} (\norm{x_t^a} + \norm{x_t^c}) \norm{x_t^a - x_t^c} \\
    &\leq k B_x^2 + 2 B_x \left( \frac{\gamma}{1-\rho} \sum_{t=0}^{T-1} \norm{B_t Y_t \tilde{\alpha}_t} + M^2 D \frac{\gamma}{(1-\rho)^2} + \frac{\gamma}{1-\rho} \sum_{s=k}^{T-2} \norm{B_s Y_s (\hat{\alpha}_s - \hat{\alpha}_{s-k})} \right) \:.
\end{align*}
\end{proof}

Lemma~\ref{lemma:delayed_decomposition}
shows that the extra work needed to bound the control regret
in the delayed setting is to control the drift error
$\sum_{s=k}^{T-2} \norm{B_s Y_s (\hat{\alpha}_s - \hat{\alpha}_{s-k})}$. We have two proof strategies for bounding this term, one for each of online gradient descent and online Newton.

We first focus on the proof of Theorem~\ref{thm:regret_ls_delay},
which is the result for online gradient descent.
Towards this goal, we require a proposition
that bounds the drift of the parameters $\hat{\alpha}_t$.
While this type of result is standard in the online
learning community, we replicate its proof for completeness.
\begin{proposition}
\label{prop:iterate_shift}
Consider the online gradient descent update
\eqref{eq:online_gd}. Suppose that
$\sup_{x,t} \norm{B(x,t)} \leq M$
and $\sup_{x,t} \norm{Y(x,t)} \leq M$.
Put $G = M^2 (2 D M^2 + W)$
and let $\eta_t = \frac{D}{G\sqrt{t+1}}$.
Then we have for any $t \geq 0$ and $k \geq 1$:
\begin{align*}
    \norm{\tilde{\alpha}_{t+k} - \tilde{\alpha}_t} \leq \frac{Dk}{\sqrt{t+1}} \:.
\end{align*}
\end{proposition}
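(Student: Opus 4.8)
The plan is to telescope the difference $\hat{\alpha}_{t+k} - \hat{\alpha}_t$ into single-step increments and bound each increment by the step size times the gradient norm. First I would note that, since $\tilde{\alpha}_s = \hat{\alpha}_s - \alpha$, we have $\tilde{\alpha}_{t+k} - \tilde{\alpha}_t = \hat{\alpha}_{t+k} - \hat{\alpha}_t = \sum_{j=t}^{t+k-1} (\hat{\alpha}_{j+1} - \hat{\alpha}_j)$, so by the triangle inequality it suffices to control $\norm{\hat{\alpha}_{j+1} - \hat{\alpha}_j}$ for each $j \geq t$.

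Next, I would use that the Euclidean projection $\Pi_{\calC}$ onto the convex set $\calC$ is non-expansive, together with the fact that $\hat{\alpha}_j \in \calC$ so $\Pi_{\calC}[\hat{\alpha}_j] = \hat{\alpha}_j$. This yields
\[
\norm{\hat{\alpha}_{j+1} - \hat{\alpha}_j} = \norm{\Pi_{\calC}[\hat{\alpha}_j - \eta_j \nabla f_j(\hat{\alpha}_j)] - \Pi_{\calC}[\hat{\alpha}_j]} \leq \eta_j \norm{\nabla f_j(\hat{\alpha}_j)}.
\]
Then I would invoke the gradient bound: since $\nabla f_j(\hat{\alpha}_j) = Y_j^\T B_j^\T(B_j Y_j \tilde{\alpha}_j + w_j)$ with $\norm{B_j}, \norm{Y_j} \leq M$, $\norm{\tilde{\alpha}_j} \leq 2D$ (both $\hat{\alpha}_j$ and $\alpha$ lie in $\calC$), and $\norm{w_j} \leq W$, we get $\norm{\nabla f_j(\hat{\alpha}_j)} \leq M^2(2DM^2 + W) = G$. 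Hence $\norm{\hat{\alpha}_{j+1} - \hat{\alpha}_j} \leq \eta_j G = \frac{D}{\sqrt{j+1}}$.

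Finally, I would sum over $j = t, \dots, t+k-1$ and bound each term by its largest value: since $j+1 \geq t+1$, we have $\frac{1}{\sqrt{j+1}} \leq \frac{1}{\sqrt{t+1}}$, giving $\norm{\tilde{\alpha}_{t+k} - \tilde{\alpha}_t} \leq \sum_{j=t}^{t+k-1} \frac{D}{\sqrt{j+1}} \leq \frac{Dk}{\sqrt{t+1}}$. There is no real obstacle here; the only points requiring care are the identification of $\hat{\alpha}_j$ as a fixed point of the projection (so that the increment bound reduces to the gradient step) and checking that the constant $G$ used in the step-size normalization indeed dominates $\norm{\nabla f_j(\hat{\alpha}_j)}$, both of which are routine.
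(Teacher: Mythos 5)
Your proof is correct and follows essentially the same route as the paper: telescoping into single-step increments, bounding each via non-expansiveness of the projection (the paper invokes the Pythagorean theorem to the same effect) and the gradient bound $\norm{\nabla f_j(\hat{\alpha}_j)} \leq G$. The only difference is cosmetic: the paper bounds $\sum_{i=0}^{k-1} \eta_{t+i}$ via an integral comparison and then loosens back to $Dk/\sqrt{t+1}$, whereas you simply use $1/\sqrt{j+1} \leq 1/\sqrt{t+1}$ termwise, which is a cleaner way to reach the identical bound.
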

\begin{proof}
First, we observe that by the Pythagorean theorem:
\begin{align*}
    \norm{\tilde{\alpha}_{t+1} - \tilde{\alpha}_t} 
    &= \norm{\hat{\alpha}_{t+1} - \hat{\alpha}_t} 
    = \norm{\Pi_{\calC}[\hat{\alpha}_t - \eta_t Y_t^\T B_t^\T (B_t Y_t \tilde{\alpha}_t + w_t)] - \hat{\alpha}_t} \\
    &\leq \eta_t \norm{ Y_t^\T B_t^\T (B_t Y_t \tilde{\alpha}_t + w_t)} 
    \leq \eta_t M^2( 2DM^2 + W) = \eta_t G \:.
\end{align*}
Therefore for any $k \geq 1$:
\begin{align*}
    \norm{\tilde{\alpha}_{t+k} - \tilde{\alpha}_t} &= \bignorm{\sum_{i=0}^{k-1} (\hat{\alpha}_{t+i+1} - \hat{\alpha}_{t+i})} 
    \leq \sum_{i=0}^{k-1} \norm{\hat{\alpha}_{t+i+1} - \hat{\alpha}_{t+i}} 
    \leq G \sum_{i=0}^{k-1} \eta_{t+i} \\
    &= D \sum_{i=0}^{k-1} \frac{1}{\sqrt{t+i+1}} 
    = \frac{D}{\sqrt{t+1}} + D \sum_{i=1}^{k-1} \frac{1}{\sqrt{t+i+1}} \\
    &\leq \frac{D}{\sqrt{t+1}} + D \int_0^{k-1} \frac{1}{\sqrt{t+x+1}} \: dx 
    = \frac{D}{\sqrt{t+1}} + 2D ( \sqrt{t + k} - \sqrt{t + 1} ) \\
    &\leq \frac{D}{\sqrt{t+1}} + \frac{D(k-1)}{\sqrt{t+1}} 
    = \frac{D k}{\sqrt{t+1}} \:.
\end{align*}
\end{proof}

We now restate and prove Theorem~\ref{thm:regret_ls_delay}.
\delayresult*
\begin{proof}
By Proposition~\ref{prop:iterate_shift}, we bound:
\begin{align*}
    \sum_{s=k}^{T-2} \norm{B_s Y_s (\hat{\alpha}_s - \hat{\alpha}_{s-k})} &\leq M^2 \sum_{s=k}^{T-2} \norm{\hat{\alpha}_s - \hat{\alpha}_{s-k}} \leq M^2 D k \sum_{s=k}^{T-2} \frac{1}{\sqrt{s-k+1}} \\
    &\leq M^2 D k \left(1 + 2 \sqrt{T-2-k+1} - 2 \right) \leq 2 M^2 D k \sqrt{T} \:.
\end{align*}
Hence by Lemma~\ref{lemma:delayed_decomposition}:
\begin{align*}
    \sum_{t=0}^{T-1} \norm{x_t^a}^2 - \norm{x_t^c}^2 
&\leq k B_x^2 + \frac{2B_x M^2 D \gamma}{(1-\rho)^2} + \frac{2B_x\gamma}{1-\rho} \left(\sum_{t=0}^{T-1} \norm{B_t Y_t \tilde{\alpha}_t} + \sum_{s=k}^{T-2} \norm{B_s Y_s (\hat{\alpha}_s - \hat{\alpha}_{s-k})} \right) \\
&\leq k B_x^2 + \frac{2B_x M^2 D \gamma}{(1-\rho)^2} + \frac{2B_x\gamma}{1-\rho} \left(\sqrt{T} \sqrt{\sum_{t=0}^{T-1} \norm{B_t Y_t \tilde{\alpha}_t}^2} + 2 M^2 D k \sqrt{T} \right)
\end{align*}
Taking expectations and using Jensen's inequality followed by 
Proposition~\ref{prop:online_gd_bounded_case}:
\begin{align*}
    \E\left[\sum_{t=0}^{T-1} \norm{x_t^a}^2 - \norm{x_t^c}^2\right] &\leq k B_x^2 + \frac{2B_x M^2 D \gamma}{(1-\rho)^2} + \frac{2B_x\gamma}{1-\rho} \left(\sqrt{T} \sqrt{\sum_{t=0}^{T-1} \E\norm{B_t Y_t \tilde{\alpha}_t}^2} + 2 M^2 D k \sqrt{T} \right) \\
    &\leq k B_x^2 + \frac{2B_x M^2 D \gamma}{(1-\rho)^2} + \frac{2B_x\gamma}{1-\rho} \left(\sqrt{T} \sqrt{ 6 G D T^{1/2}} + 2 M^2 D k \sqrt{T} \right) \\
    &= k B_x^2 + \frac{2B_x M^2 D \gamma}{(1-\rho)^2} + \frac{2 \sqrt{6} B_x \gamma \sqrt{GD}}{1-\rho} T^{3/4} + \frac{4 B_x \gamma M^2 D}{1-\rho} k \sqrt{T} \:.
\end{align*}
\end{proof}

Next, we turn to proving the result for online Newton's method.
The following two propositions will allow us to bound the drift
error.
\begin{proposition}
\label{prop:online_newton_one_step_diff}
For the online Newton update~\eqref{eq:online_newton}, 
we have for every $t \geq 0$:
\begin{align*}
    \norm{\hat{\alpha}_{t+1} - \hat{\alpha}_t}_{A_t} \leq \eta \norm{\nabla f_t(\hat{\alpha}_t)}_{A_t^{-1}} \:.
\end{align*}
\end{proposition}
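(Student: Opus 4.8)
The plan is to reduce the claim to the standard fact that a generalized projection onto a convex set is non-expansive in the norm defining the projection. First I would note that since $\hat{\alpha}_t \in \calC$, we have $\hat{\alpha}_t = \Pi_{\calC,t}[\hat{\alpha}_t]$, so that by the definition of the update \eqref{eq:online_newton},
\begin{align*}
    \norm{\hat{\alpha}_{t+1} - \hat{\alpha}_t}_{A_t} = \norm{\Pi_{\calC,t}[\hat{\alpha}_t - \eta A_t^{-1} \nabla f_t(\hat{\alpha}_t)] - \Pi_{\calC,t}[\hat{\alpha}_t]}_{A_t} \:.
\end{align*}

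Next I would invoke the non-expansiveness of $\Pi_{\calC,t}$ in the $A_t$-norm. This follows from the first-order optimality conditions for the projection: for any $x$ and any $z \in \calC$ one has $\ip{A_t(\Pi_{\calC,t}[x] - x)}{z - \Pi_{\calC,t}[x]} \geq 0$. Applying this twice — once with $x = \hat{\alpha}_t - \eta A_t^{-1}\nabla f_t(\hat{\alpha}_t)$, $z = \hat{\alpha}_t$, and once with $x = \hat{\alpha}_t$, $z = \hat{\alpha}_{t+1}$ — and adding the two inequalities, one obtains $\norm{\Pi_{\calC,t}[x] - \Pi_{\calC,t}[y]}_{A_t} \leq \norm{x-y}_{A_t}$ after applying Cauchy--Schwarz in the $A_t$ inner product. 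Specializing gives $\norm{\hat{\alpha}_{t+1} - \hat{\alpha}_t}_{A_t} \leq \eta \norm{A_t^{-1}\nabla f_t(\hat{\alpha}_t)}_{A_t}$.

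Finally I would simplify the right-hand side by the identity $\norm{A_t^{-1} v}_{A_t}^2 = v^\T A_t^{-1} A_t A_t^{-1} v = v^\T A_t^{-1} v = \norm{v}_{A_t^{-1}}^2$ applied with $v = \nabla f_t(\hat{\alpha}_t)$, which yields exactly $\norm{\hat{\alpha}_{t+1} - \hat{\alpha}_t}_{A_t} \leq \eta \norm{\nabla f_t(\hat{\alpha}_t)}_{A_t^{-1}}$. There is no substantive obstacle here: the only point requiring a little care is to correctly state and combine the optimality conditions so as to get non-expansiveness in the weighted $A_t$-norm rather than the Euclidean norm, but this is entirely routine and mirrors the Pythagorean-theorem step already used in the proofs of Propositions~\ref{prop:online_gd_bounded_case} and~\ref{prop:online_newton_bounded_case}.
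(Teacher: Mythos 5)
Your proof is correct and follows essentially the same route as the paper: the paper likewise writes $\norm{\hat{\alpha}_{t+1}-\hat{\alpha}_t}_{A_t}=\norm{\Pi_{\calC,t}[\hat{\alpha}_t-\eta A_t^{-1}\nabla f_t(\hat{\alpha}_t)]-\hat{\alpha}_t}_{A_t}$, bounds it by $\eta\norm{A_t^{-1}\nabla f_t(\hat{\alpha}_t)}_{A_t}$ via the Pythagorean (non-expansiveness) property of the $A_t$-norm projection, and finishes with the identity $\norm{A_t^{-1}v}_{A_t}=\norm{v}_{A_t^{-1}}$. Your derivation of non-expansiveness from the first-order optimality conditions is just a more explicit justification of the same step.
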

\begin{proof}
Since $\Pi_{\calC,t}[\cdot]$ is the orthogonal projection
onto $\calC$ in the $\norm{\cdot}_{A_t}$-norm,
by the Pythagorean theorem:
\begin{align*}
    \norm{\hat{\alpha}_{t+1} - \hat{\alpha}_t}_{A_t} = \norm{ \Pi_{\calC,t}[\hat{\alpha}_t - \eta A_t^{-1} \nabla f_t(\hat{\alpha}_t)] - \hat{\alpha}_t}_{A_t} \leq \eta \norm{A_t^{-1} \nabla f_t(\hat{\alpha}_t)}_{A_t} = \eta \norm{\nabla f_t(\hat{\alpha}_t)}_{A_t^{-1}} \:.
\end{align*}
\end{proof}

\begin{proposition}
\label{prop:online_newton_k_step_diff}
Consider the online Newton update~\eqref{eq:online_newton}.
Suppose that $\sup_{x,t} \norm{B(x,t)} \leq M$ and
$\sup_{x,t} \norm{Y(x,t)} \leq M$.
For any $1 \leq k \leq s$, we have:
\begin{align*}
    \norm{B_s Y_s(\hat{\alpha}_s - \hat{\alpha}_{s-k})} \leq \frac{M^2 \eta}{\sqrt{\lambda}} \sum_{\ell=1}^{k} \norm{\nabla f_{s-\ell}(\hat{\alpha}_{s-\ell})}_{A^{-1}_{s-\ell}} \:.
\end{align*}
\end{proposition}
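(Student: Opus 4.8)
The plan is to chain together three elementary estimates: the operator-norm bound on $B_s Y_s$, a telescoping triangle-inequality decomposition of $\hat{\alpha}_s - \hat{\alpha}_{s-k}$ into single-step increments, and the one-step bound already established in Proposition~\ref{prop:online_newton_one_step_diff}, with the matrix $A_t \succeq \lambda I$ used to pass between the Euclidean norm and the $A_t$-norm.

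First I would note that since $\sup_{x,t}\norm{B(x,t)} \le M$ and $\sup_{x,t}\norm{Y(x,t)} \le M$, we have $\norm{B_s Y_s v} \le M^2 \norm{v}$ for any vector $v$, so it suffices to bound $\norm{\hat{\alpha}_s - \hat{\alpha}_{s-k}}$ and multiply by $M^2$ at the end. Next I would write the telescoping identity $\hat{\alpha}_s - \hat{\alpha}_{s-k} = \sum_{\ell=1}^{k} (\hat{\alpha}_{s-\ell+1} - \hat{\alpha}_{s-\ell})$ and apply the triangle inequality to get $\norm{\hat{\alpha}_s - \hat{\alpha}_{s-k}} \le \sum_{\ell=1}^{k} \norm{\hat{\alpha}_{s-\ell+1} - \hat{\alpha}_{s-\ell}}$.

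Then for each increment I would use that $A_{s-\ell} = \lambda I + \sum_{i=0}^{s-\ell} M_i^\T M_i \succeq \lambda I$, which gives $\norm{v} \le \tfrac{1}{\sqrt{\lambda}}\norm{v}_{A_{s-\ell}}$ for any $v$; applying this with $v = \hat{\alpha}_{s-\ell+1} - \hat{\alpha}_{s-\ell}$ and then invoking Proposition~\ref{prop:online_newton_one_step_diff} at time $t = s-\ell$ yields $\norm{\hat{\alpha}_{s-\ell+1} - \hat{\alpha}_{s-\ell}} \le \tfrac{1}{\sqrt{\lambda}}\,\norm{\hat{\alpha}_{s-\ell+1} - \hat{\alpha}_{s-\ell}}_{A_{s-\ell}} \le \tfrac{\eta}{\sqrt{\lambda}}\,\norm{\nabla f_{s-\ell}(\hat{\alpha}_{s-\ell})}_{A_{s-\ell}^{-1}}$. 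Summing over $\ell = 1,\dots,k$ and multiplying through by $M^2$ gives exactly the claimed bound $\norm{B_s Y_s(\hat{\alpha}_s - \hat{\alpha}_{s-k})} \le \tfrac{M^2 \eta}{\sqrt{\lambda}} \sum_{\ell=1}^{k} \norm{\nabla f_{s-\ell}(\hat{\alpha}_{s-\ell})}_{A_{s-\ell}^{-1}}$.

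There is no real obstacle here — the result is a straightforward composition of previously established facts. The only point requiring a moment of care is making sure the index shift in the telescoping sum is correct (so that the single-step bound of Proposition~\ref{prop:online_newton_one_step_diff} is applied at the right times $s-k, s-k+1, \dots, s-1$) and that $1 \le k \le s$ ensures all these indices are nonnegative so that the iterates are well-defined.
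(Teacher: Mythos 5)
Your proposal is correct and follows essentially the same route as the paper: telescope the parameter difference into single-step increments, invoke Proposition~\ref{prop:online_newton_one_step_diff} on each, and use $A_t \succeq \lambda I$ to relate norms. The only cosmetic difference is that the paper combines the two estimates $\norm{B_sY_s v} \leq M^2\norm{v}$ and $\norm{v} \leq \lambda^{-1/2}\norm{v}_{A_t}$ into the single semidefinite inequality $(B_sY_s)^\T(B_sY_s) \preceq \frac{M^4}{\lambda}A_t$, which yields the identical bound.
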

\begin{proof}
First, by definition of $A_t$, we have that
$A_t \succeq \lambda I$ for every $t \geq 0$.
Therefore, for any $t \geq 0$:
\begin{align*}
    (B_s Y_s)^\T (B_s Y_s) \preceq M^4 I \preceq \frac{M^4}{\lambda} A_t \:.
\end{align*}
Therefore by Proposition~\ref{prop:online_newton_one_step_diff}:
\begin{align*}
    \norm{B_s Y_s (\hat{\alpha}_s - \hat{\alpha}_{s-k})} &= \bignorm{ B_s Y_s \left( \sum_{\ell=0}^{k-1} \hat{\alpha}_{s-\ell} - \hat{\alpha}_{s-\ell-1} \right)} \leq \sum_{\ell=0}^{k-1} \norm{B_s Y_s (\hat{\alpha}_{s-\ell} - \hat{\alpha}_{s-\ell-1})} \\
    &= \sum_{\ell=0}^{k-1} \sqrt{ (\hat{\alpha}_{s-\ell} - \hat{\alpha}_{s-\ell-1})^\T (B_s Y_s)^\T (B_s Y_s) (\hat{\alpha}_{s-\ell} - \hat{\alpha}_{s-\ell-1}) } \\
    &\leq \sum_{\ell=0}^{k-1} \sqrt{ (\hat{\alpha}_{s-\ell} - \hat{\alpha}_{s-\ell-1})^\T \left(\frac{M^4}{\lambda} A_{s-\ell-1}\right) (\hat{\alpha}_{s-\ell} - \hat{\alpha}_{s-\ell-1}) } \\
    &= \frac{M^2}{\sqrt{\lambda}} \sum_{\ell=0}^{k-1} \norm{\hat{\alpha}_{s-\ell} - \hat{\alpha}_{s-\ell-1}}_{A_{s-\ell-1}} \leq \frac{M^2 \eta}{\sqrt{\lambda}} \sum_{\ell=0}^{k-1} \norm{\nabla f_{s-\ell-1}(\hat{\alpha}_{s-\ell-1})}_{A_{s-\ell-1}^{-1}} \:.
\end{align*}
\end{proof}

We now restate and prove Theorem~\ref{thm:regret_ls_delay_newton}.
\delayresultnewton*
\begin{proof}
By Proposition~\ref{prop:online_newton_k_step_diff}, we bound:
\begin{align*}
    \sum_{s=k}^{T-2} \norm{B_s Y_s (\hat{\alpha}_s - \hat{\alpha}_{s-k})} &\leq \frac{M^2 }{\sqrt{\lambda}} \sum_{s=k}^{T-2} \sum_{\ell=1}^{k} \norm{\nabla f_{s-\ell}(\hat{\alpha}_{s-\ell})}_{A_{s-\ell}^{-1}} \leq \frac{M^2  k}{\sqrt{\lambda}} \sum_{t=0}^{T-1} \norm{\nabla f_t(\hat{\alpha}_t)}_{A_t^{-1}} \\
    &\leq \frac{M^2  k}{\sqrt{\lambda}} \sqrt{T} \sqrt{\sum_{t=0}^{T-1} \norm{\nabla f_t(\hat{\alpha}_t)}^2_{A_t^{-1}}} \\
    &\stackrel{(a)}{\leq} \frac{M^2  k}{\sqrt{\lambda}} \sqrt{T} \sqrt{ (2 D M^2 + W)^2 \sum_{t=0}^{T-1} \log\frac{\det{A_t}}{\det{A_{t-1}}} } \\
    &\leq M^2 (2 D M^2 + W)  k \sqrt{\frac{p T}{\lambda} \log(1 + M^2 T/\lambda)} \\
    &= G k \sqrt{\frac{p T}{\lambda} \log(1 + M^2 T/\lambda)} \:.
\end{align*}
Above, (a)
follows from Lemma 4.6 of \cite{hazan16oco}
(cf.\ the analysis in Proposition~\ref{prop:online_newton_bounded_case}).
Therefore by Lemma~\ref{lemma:delayed_decomposition}:
\begin{align*}
    &\sum_{t=0}^{T-1} \norm{x_t^a}^2 - \norm{x_t^c}^2 \\
    &\leq k B_x^2 + \frac{2B_x M^2 D \gamma}{(1-\rho)^2} + \frac{2B_x\gamma}{1-\rho} \left(\sum_{t=0}^{T-1} \norm{B_t Y_t \tilde{\alpha}_t} + \sum_{s=k}^{T-2} \norm{B_s Y_s (\hat{\alpha}_s - \hat{\alpha}_{s-k})} \right) \\
    &\leq k B_x^2 + \frac{2B_x M^2 D \gamma}{(1-\rho)^2} + \frac{2B_x\gamma}{1-\rho} \left( \sqrt{T}\sqrt{\sum_{t=0}^{T-1} \norm{B_t Y_t \tilde{\alpha}_t}^2} + G k \sqrt{\frac{p T}{\lambda} \log(1 + M^2 T/\lambda)} \right) \:.
\end{align*}
Taking expectations and using Jensen's inequality
combined with Proposition~\ref{prop:online_newton_bounded_case}:
\begin{align*}
    &\E\left[\sum_{t=0}^{T-1} \norm{x_t^a}^2 - \norm{x_t^c}^2\right] \\
    &\leq k B_x^2 + \frac{2B_x M^2 D \gamma}{(1-\rho)^2} + \frac{2B_x\gamma}{1-\rho} \left( \sqrt{T}\sqrt{\sum_{t=0}^{T-1} \E\norm{B_t Y_t \tilde{\alpha}_t}^2} + G k \sqrt{\frac{p T}{\lambda} \log(1 + M^2 T/\lambda)} \right) \\
    &\leq k B_x^2 + \frac{2B_x M^2 D \gamma}{(1-\rho)^2} + \frac{2B_x\gamma}{1-\rho} \sqrt{T} \sqrt{4D^2(\lambda + M^4) + p G^2 \log(1 + M^4 T/\lambda) } \\
    &\qquad + \frac{2 B_x \gamma G k}{1-\rho} \sqrt{\frac{p T}{\lambda} \log(1 + M^2 T/\lambda)} \:.
\end{align*}

\end{proof}
\section{From Stability to Incremental Stability}
\label{sec:appendix:stability}

In this section, we study the relationship between
stability and incremental stability and the consequences of this relationship
for control regret bounds. We first start with the definition of stability
we will consider here.

\begin{definition}
\label{def:e_iss}
Let $\beta, \gamma$ be positive and $\rho \in (0, 1)$.
The discrete-time dynamical system $f(x, t)$ is
called $(\beta, \rho, \gamma)$-\emph{exponentially-input-to-state-stable} (E-ISS) for 
an initial condition $x_0$ and a signal $u_t$ (which is possibly adapted to the history $\{x_s\}_{s \leq t}$)
if the trajectory
$x_{t+1} = f(x_t, t) + u_t$
satisfies for
all $t \geq 0$:
\begin{align}
    \norm{x_t} \leq \beta \rho^{t} \norm{x_0} + \gamma \sum_{k=0}^{t-1} \rho^{t-1-k} \norm{u_{k}} \:. \label{eq:e_iss_ineq}
\end{align}
A system is called $(\beta,\rho,\gamma)$-E-ISS if it
is $(\beta,\rho,\gamma)$-E-ISS for all initial conditions $x_0$ and signals $u_t$.
\end{definition}

The following proposition shows that 
Definition~\ref{def:e_iss} is satisfied by an exponentially stable system
with a well-behaved Lyapunov function.
It is analogous to how Proposition~\ref{prop:contraction_implies_e_delta_iss}
demonstrates that contraction implies E-$\delta$ISS.
\begin{proposition}
Consider a dynamical system $f(x, t)$ with $f(0, t) = 0$ for all $t$.
Suppose $Q(x, t)$ is a Lyapunov function satisfying for some positive $\mu, L, L_Q$ and $\rho \in (0, 1)$:
\begin{enumerate}
    \item $\mu \norm{x}^2 \leq Q(x, t) \leq L \norm{x}^2$ for all $x, t$.
    \item $Q(f(x, t), t+1) \leq \rho Q(x, t)$ for all $x, t$.
    \item $x \mapsto \nabla Q(x, t)$ is $L_Q$-Lipschitz for all $t$. 
\end{enumerate}
Then the system $f(x, t)$ is $(\sqrt{L/\mu}, \sqrt{\rho}, L_Q/(2\mu))$-E-ISS.
\end{proposition}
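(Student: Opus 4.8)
The plan is to track the Lyapunov function $Q$ along the perturbed trajectory $x_{t+1} = f(x_t,t) + u_t$, derive a one-step recursion of the form $\sqrt{Q(x_{t+1},t+1)} \le \sqrt{\rho}\,\sqrt{Q(x_t,t)} + \tfrac{L_Q}{2\sqrt{\mu}}\norm{u_t}$, unroll it, and convert back to $\norm{x_t}$ using the quadratic sandwich $\mu\norm{x}^2 \le Q(x,t) \le L\norm{x}^2$.

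First I would record two elementary consequences of the hypotheses. Since $Q(0,t)=0$ and $Q(x,t)\ge \mu\norm{x}^2$, the origin globally minimizes $x\mapsto Q(x,t)$, so $\nabla Q(0,t)=0$; together with the $L_Q$-Lipschitzness of $\nabla Q(\cdot,t)$ this gives $\norm{\nabla Q(y,t)}\le L_Q\norm{y}$ for all $y,t$. Second — and this is the one non-obvious ingredient — I would show that the hypotheses force $L_Q \ge 2\mu$: restricting $Q(\cdot,t)$ to a ray $s\mapsto g(s):=Q(sv,t)$ with $\norm{v}=1$, one has $g(0)=g'(0)=0$ and $g'$ is $L_Q$-Lipschitz, hence $g(s)\le \tfrac{L_Q}{2}s^2$; but $g(s)\ge \mu s^2$, so $\mu \le L_Q/2$.

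Next I would carry out the one-step estimate. Writing $a := f(x_t,t)$ and $V_t := \sqrt{Q(x_t,t)}$, the descent lemma (a standard consequence of $L_Q$-smoothness, applied at time $t+1$) gives $Q(a+u_t,t+1) \le Q(a,t+1) + \norm{\nabla Q(a,t+1)}\,\norm{u_t} + \tfrac{L_Q}{2}\norm{u_t}^2$. Plugging in the contraction-type decay $Q(a,t+1)\le \rho Q(x_t,t)$, the bound $\norm{\nabla Q(a,t+1)}\le L_Q\norm{a}$ from the first step, and $\mu\norm{a}^2 \le Q(a,t+1) \le \rho Q(x_t,t)$ (so $\norm{a}\le \sqrt{\rho/\mu}\,V_t$), this becomes $Q(x_{t+1},t+1) \le \rho V_t^2 + \tfrac{L_Q\sqrt{\rho}}{\sqrt{\mu}}V_t\norm{u_t} + \tfrac{L_Q}{2}\norm{u_t}^2$. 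The linear-in-$V_t$ term matches exactly the cross term of $\big(\sqrt{\rho}\,V_t + \tfrac{L_Q}{2\sqrt{\mu}}\norm{u_t}\big)^2$, and since $L_Q\ge 2\mu$ we have $\tfrac{L_Q}{2}\le \tfrac{L_Q^2}{4\mu}$, so the entire right-hand side is at most that square; taking square roots yields $V_{t+1}\le \sqrt{\rho}\,V_t + \tfrac{L_Q}{2\sqrt{\mu}}\norm{u_t}$.

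Finally I would unroll this recursion to get $V_t \le (\sqrt{\rho})^t V_0 + \tfrac{L_Q}{2\sqrt{\mu}}\sum_{k=0}^{t-1}(\sqrt{\rho})^{t-1-k}\norm{u_k}$, then divide by $\sqrt{\mu}$: from $\mu\norm{x_t}^2\le Q(x_t,t)=V_t^2$ and $V_0=\sqrt{Q(x_0,0)}\le \sqrt{L}\,\norm{x_0}$ we obtain $\norm{x_t}\le \sqrt{L/\mu}\,(\sqrt{\rho})^t\norm{x_0} + \tfrac{L_Q}{2\mu}\sum_{k=0}^{t-1}(\sqrt{\rho})^{t-1-k}\norm{u_k}$, which is exactly \eqref{eq:e_iss_ineq} with constants $(\sqrt{L/\mu},\sqrt{\rho},L_Q/(2\mu))$. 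The main obstacle is making the constants line up precisely — in particular recognizing that the assumptions already imply $L_Q \ge 2\mu$, which is exactly what lets the $\norm{u_t}^2$ term be absorbed so that $\gamma$ comes out as $L_Q/(2\mu)$ rather than a looser quantity like $\sqrt{L_Q/(2\mu)}$.
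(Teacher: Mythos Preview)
Your proof is correct, but the route differs from the paper's. The paper works directly with $V(x,t):=\sqrt{Q(x,t)}$: since $\nabla Q(0,t)=0$ and $Q(x,t)\ge \mu\norm{x}^2$, one has $\norm{\nabla V(x,t)} = \tfrac{\norm{\nabla Q(x,t)}}{2\sqrt{Q(x,t)}} \le \tfrac{L_Q\norm{x}}{2\sqrt{\mu}\norm{x}} = \tfrac{L_Q}{2\sqrt{\mu}}$, so $V(\cdot,t+1)$ is globally $\tfrac{L_Q}{2\sqrt{\mu}}$-Lipschitz, and the one-step bound $V(f(x,t)+u,t+1)\le V(f(x,t),t+1)+\tfrac{L_Q}{2\sqrt{\mu}}\norm{u}\le \sqrt{\rho}\,V(x,t)+\tfrac{L_Q}{2\sqrt{\mu}}\norm{u}$ falls out immediately, with no need for the auxiliary inequality $L_Q\ge 2\mu$. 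You instead stay at the level of $Q$, invoke the descent lemma, and then observe that the hypotheses force $L_Q\ge 2\mu$, which is exactly what lets the $\tfrac{L_Q}{2}\norm{u_t}^2$ term be absorbed into the square $\bigl(\sqrt{\rho}\,V_t+\tfrac{L_Q}{2\sqrt{\mu}}\norm{u_t}\bigr)^2$. Your path is slightly longer but arguably more careful: the paper's gradient computation for $\sqrt{Q}$ tacitly excludes the origin (where $\sqrt{Q}$ need not be differentiable), whereas your argument never leaves the smooth function $Q$. Both land on the identical recursion and unroll it the same way.
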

\begin{proof}
Fix any $x, t$. We have:
\begin{align*}
    \nabla V(x, t) = \frac{1}{2 \sqrt{Q(x, t)}} \nabla Q(x, t) \:.
\end{align*}
Hence since zero is a local minima of the function $x \mapsto Q(x, t)$,
\begin{align*}
    \norm{ \nabla V(x, t) } = \frac{1}{2\sqrt{Q(x, t)}} \norm{\nabla Q(x, t)} \leq \frac{1}{2\sqrt{\mu}\norm{x}} L_Q \norm{x} = \frac{L_Q}{2\sqrt{\mu}} \:.
\end{align*}
Therefore by Taylor's theorem:
\begin{align*}
    \abs{V(f(x, t) + u, t+1) - V(f(x, t), t+1)} \leq \frac{L_Q}{2\sqrt{\mu}} \norm{u} \:.
\end{align*}
Hence:
\begin{align*}
    V(f(x, t) + u, t+1) &\leq V(f(x, t), t+1) + \frac{L_Q}{2\sqrt{\mu}} \norm{u} 
    \leq \sqrt{\rho} V(x, t) +  \frac{L_Q}{2\sqrt{\mu}} \norm{u} \:.
\end{align*}
Now consider the trajectory
\begin{align*}
    x_{t+1} = f(x_t, t) + u_t \:.
\end{align*}
By the inequality above, we have that:
\begin{align*}
    V(x_{t+1}, t+1) \leq \sqrt{\rho} V(x_t, t) + \frac{L_Q}{2\sqrt{\mu}}\norm{u_t} \:.
\end{align*}
Unrolling this recursion,
\begin{align*}
    \sqrt{\mu} \norm{x_t} &\leq V(x_t, t) \leq \rho^{t/2} V(x_0, 0) + \frac{L_Q}{2\sqrt{\mu}} \sum_{k=0}^{t-1} \rho^{(t-k-1)/2} \norm{u_k} \\
    &\leq \sqrt{L} \rho^{t/2} \norm{x_0} + \frac{L_Q}{2\sqrt{\mu}} \sum_{k=0}^{t-1} \rho^{(t-k-1)/2} \norm{u_k} \:.
\end{align*}
Therefore:
\begin{align*}
    \norm{x_t} \leq \sqrt{\frac{L}{\mu}} \rho^{t/2} \norm{x_0} + \frac{L_Q}{2\mu} \sum_{k=0}^{t-1} \rho^{(t-k-1)/2} \norm{u_k} \:.
\end{align*}
\end{proof}

\subsection{Incremental Stability over a Restricted Set}
\label{sec:app:stability:incremental}

In this section, we give a set of sufficient conditions
under which an E-ISS system can also be considered an E-$\delta$ISS
system, when we restrict both the set of initial conditions 
and the admissible inputs.
The results in this section are inspired from the work of \cite{ruffer13convergent}, 
who show that convergent systems can be considered incrementally stable
when restricted to a compact set of initial conditions. Their analysis, however,
does not preserve rates, which we aim to do in this section.

We start off with a basic definition that quantifies the rate
of stability of a discrete-time stable matrix.
\begin{definition}[cf.\ \cite{mania19CE}]
A matrix $A \in \R^{n \times n}$ is $(C,\rho)$ discrete-time stable 
for some $C \geq 1$ and $\rho \in (0, 1)$ if
$\norm{A^t} \leq C \rho^{t}$ for all $t \geq 0$.
\end{definition}

The next proposition shows how we can upper bound the operator norm
of the product of perturbed discrete-time stable matrices.
\begin{proposition}
\label{prop:product_perturbation}
Let $A$ be a $(C,\rho)$ discrete-time stable matrix.
Let $\Delta_1, ..., \Delta_{t}$ be arbitrary perturbations.
We have that for all $t \geq 1$:
\begin{align*}
    \bignorm{\prod_{i=1}^{t} (A + \Delta_i)} \leq C \prod_{i=1}^{t} (\rho + C \norm{\Delta_i}) \:.
\end{align*}
\end{proposition}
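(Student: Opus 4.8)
The plan is to prove the statement by induction on $t$, strengthening the inductive hypothesis to carry an extra power of $A$ in front of the product. Write $P_t := \prod_{i=1}^{t}(A+\Delta_i)$ with the convention $P_t = (A+\Delta_t)P_{t-1}$ and $P_0 = I$, and set $a_t := \prod_{i=1}^{t}(\rho + C\norm{\Delta_i})$ (with $a_0 := 1$). I will prove the stronger claim that $\norm{A^j P_t} \leq C\rho^j a_t$ for all integers $j \geq 0$ and $t \geq 0$; taking $j = 0$ yields the proposition.

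The base case $t = 0$ is exactly the discrete-time stability assumption $\norm{A^j} \leq C\rho^j$, which also holds at $j=0$ since $C \geq 1$. For the inductive step, use $P_t = (A+\Delta_t)P_{t-1}$ to write $A^j P_t = A^{j+1}P_{t-1} + A^j \Delta_t P_{t-1}$. The first term is bounded by $C\rho^{j+1}a_{t-1}$ by the inductive hypothesis applied with $j+1$ in place of $j$. For the second term, submultiplicativity of the operator norm gives $\norm{A^j \Delta_t P_{t-1}} \leq \norm{A^j}\,\norm{\Delta_t}\,\norm{P_{t-1}}$; bounding $\norm{A^j} \leq C\rho^j$ and $\norm{P_{t-1}} \leq C a_{t-1}$ (the latter being the $j=0$ instance of the inductive hypothesis) shows this is at most $C\rho^j \cdot C\norm{\Delta_t}\cdot a_{t-1}$. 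Adding the two estimates and factoring out $C\rho^j a_{t-1}$ yields
\[
\norm{A^j P_t} \leq C\rho^j a_{t-1}\bigl(\rho + C\norm{\Delta_t}\bigr) = C\rho^j a_t,
\]
which completes the induction.

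The one point that must be handled with care --- and the reason the naive induction that peels a single factor and bounds $\norm{A + \Delta_t} \leq \norm{A} + \norm{\Delta_t}$ fails --- is that $\norm{A}$ itself need not be small even when $A$ is power-stable. Carrying the prefactor $A^j$ resolves this, because every stray power of $A$ produced in the recursion is immediately absorbed into the geometric bound $\norm{A^{j+1}} \leq C\rho^{j+1}$ rather than costing a factor of $\norm{A}$. For completeness I would also note the alternative, non-inductive argument: expand $\prod_{i=1}^{t}(A+\Delta_i)$ into its $2^t$ terms indexed by subsets $S \subseteq \{1,\dots,t\}$ (pick $\Delta_i$ at position $i$ when $i \in S$, and $A$ otherwise); each such term is an alternating product of blocks $A^{k}$ (with $k \geq 0$) and factors $\Delta_i$ ($i \in S$), with the block exponents summing to $t - \abs{S}$, so bounding each block by $C\rho^{k}$ (valid even for $k = 0$ since $C \geq 1$) shows the term has norm at most $C^{\abs{S}+1}\rho^{t-\abs{S}}\prod_{i \in S}\norm{\Delta_i}$, which is exactly the term indexed by $S$ in the expansion of $C\prod_{i=1}^{t}(\rho + C\norm{\Delta_i})$; summing over $S$ gives the claim. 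I do not anticipate any genuine obstacle here; the only thing to be careful about is the bookkeeping of how many factors of $C$ appear.
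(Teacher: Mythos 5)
Your proposal is correct, and your primary argument takes a genuinely different route from the paper. The paper proves this by expanding $\prod_{i=1}^{t}(A+\Delta_i)$ into its $2^t$ terms, bounding each term by grouping the consecutive runs of $A$'s into at most $\ell+1$ blocks (where $\ell$ is the number of $\Delta$'s in the term) and using $\norm{A^k}\leq C\rho^k$ on each block, then identifying each bound with the corresponding term of the expanded product $C\prod_{i}(\rho+C\norm{\Delta_i})$ --- which is precisely the ``alternative, non-inductive argument'' you sketch at the end, down to the observation that $C\geq 1$ handles empty blocks. Your main argument instead strengthens the inductive hypothesis to $\norm{A^jP_t}\leq C\rho^j a_t$ for all $j\geq 0$, so that the stray power of $A$ generated when you peel off $(A+\Delta_t)$ is absorbed into the geometric bound rather than costing a factor of $\norm{A}$; the step $\norm{A^jP_t}\leq \norm{A^{j+1}P_{t-1}}+\norm{A^j}\norm{\Delta_t}\norm{P_{t-1}}\leq C\rho^ja_{t-1}(\rho+C\norm{\Delta_t})$ checks out, with the only (harmless) bookkeeping caveat that you must order the product so the new factor multiplies on the side where you carry the prefactor $A^j$. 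Both proofs rest on the same insight --- consecutive powers of $A$ must be bounded jointly via $(C,\rho)$-stability, never factor by factor --- but your induction trades the paper's combinatorial enumeration for a cleaner three-line recursion, at the cost of having to guess the right strengthened invariant.
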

\begin{proof}
This proof is inspired by Lemma 5 of \cite{mania19CE}.
The proof works by considering
all $2^t$ terms $\{T_k\}$ of the product on the left-hand side.
Suppose that a term $T_k$ has $\ell$ occurrences of $\Delta_i$ terms,
namely $\Delta_{i_1}, ..., \Delta_{i_\ell}$.
This means there are at most $\ell+1$ slots for the $t-\ell$ $A$'s to appear
consecutively. 
Then since $C \geq 1$, we can bound:
\begin{align*}
    \norm{T_k} \leq C^{\ell+1} \rho^{t-\ell} \norm{\Delta_{i_1}}\cdot ... \cdot \norm{\Delta_{i_\ell}} = C \cdot \rho^{t-\ell} (C\norm{\Delta_{i_1}}) \cdot ... \cdot (C\norm{\Delta_{i_\ell}}) \:.
\end{align*}
Now notice that each term of the form
$\rho^{t-\ell} (C\norm{\Delta_{i_1}}) \cdot ... \cdot (C\norm{\Delta_{i_\ell}})$
can be identified uniquely with a term
in the product $\prod_{i=1}^{t} (\rho + C \norm{\Delta_i})$.
The claim now follows.
\end{proof}

The next lemma is the main result of this section.

\begin{lemma}
\label{lemma:stability_to_incremental_stability}
Consider an autonomous system $f(x)$ with $f(0) = 0$.
Suppose that $f(x)$ is $(\beta,\rho,\gamma)$-E-ISS,
that the linearization $A_0 := \frac{\partial f}{\partial x}(0)$ is
a $(C, \zeta)$ discrete-time stable matrix, and that $\frac{\partial f}{\partial x}$ is
$L$-Lipschitz.
Define the system $g(x_t, t) := f(x_t) + w_t$,
which is the original dynamics $f(x)$ driven by the noise sequence $\{w_t\}$.
Choose any $\psi \in (0, 1-\zeta)$
and suppose that:
\begin{align*}
    \sup_{t \geq 0} \norm{w_t} \leq W := \frac{1-\rho}{CL\gamma}(1-\zeta-\psi) \:.
\end{align*}
Fix a $D > 0$.
Let $h(\psi, B) : (0, 1) \times \R_+ \rightarrow \R_+$ be a function which is monotonically 
increasing in its second argument.
Let $\calD_h(\psi, B)$ denote a family of admissible sequences defined as:
\begin{align}
    \calD_h(\psi, B) := \left\{ \{d_t\}_{t \geq 0} : \sup_{t \geq 0} \norm{d_t} \leq D \:, \:\: \sup_{t \geq 1} \max_{0 \leq k \leq t-1}  \left[ -(t-k) \psi + B \sum_{s=k}^{t-1} \norm{d_s} \right] \leq h(\psi, B) \right\} \:. \label{eq:admissible_inputs}
\end{align}
Then for any initial conditions $(x_0, y_0)$ satisfying $\norm{x_0} \leq B_0$, 
$\norm{y_0} \leq B_0$
and any sequence $\{d_t\} \in \calD_h(\psi/2, \frac{CL\gamma}{1-\rho})$,
we have that $g(x_t, t)$ is
$(\beta', \rho', \gamma')$-E-$\delta$ISS for $(x_0, y_0, \{d_t\})$ with:
\begin{align*}
    \beta' &= \gamma' = C \exp\left( \frac{CL\beta}{1-\rho} \left(\beta B_0 + \frac{\gamma(W+D)}{1-\rho}\right) + h\left(\psi/2, \frac{CL\gamma}{1-\rho}\right)\right) \:, \\
    \rho' &= e^{-\psi/2} \:.
\end{align*}
\end{lemma}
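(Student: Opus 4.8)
The plan is to pass to the error process $e_t := x_t - y_t$, linearize its dynamics along the two trajectories, and control the resulting transition matrix with Proposition~\ref{prop:product_perturbation}; the point is that $(\beta,\rho,\gamma)$-E-ISS keeps both trajectories small, so the linearizations stay close to the stable matrix $A_0$. Concretely, first I would bound the trajectories: applying \eqref{eq:e_iss_ineq} to $y$ (driven by the input $w_t$) and to $x$ (driven by $w_t + d_t$) gives, for all $t$, $\norm{y_t} \le \beta\rho^t B_0 + \gamma\sum_{j<t}\rho^{t-1-j}\norm{w_j}$ and $\norm{x_t} \le \beta\rho^t B_0 + \gamma\sum_{j<t}\rho^{t-1-j}\norm{w_j} + \gamma\sum_{j<t}\rho^{t-1-j}\norm{d_j}$; in particular $\max\{\norm{x_t},\norm{y_t}\} \le R := \beta B_0 + \gamma(W+D)/(1-\rho)$. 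I keep the refined (non-uniform) bounds for the next step.

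Next I set up the linearized error dynamics. The mean value theorem gives $e_{t+1} = f(x_t) - f(y_t) + d_t = J_t e_t + d_t$ with $J_t := \int_0^1 \tfrac{\partial f}{\partial x}(y_t + s e_t)\,ds$, and writing $J_t = A_0 + \Delta_t$ and using that $\tfrac{\partial f}{\partial x}$ is $L$-Lipschitz with $\tfrac{\partial f}{\partial x}(0) = A_0$ yields $\norm{\Delta_t} \le L\int_0^1\norm{(1-s)y_t + s x_t}\,ds \le L\max\{\norm{x_t},\norm{y_t}\}$. Plugging in the trajectory bound together with the hypothesis $W = \tfrac{1-\rho}{CL\gamma}(1-\zeta-\psi)$, so that $CL\gamma\cdot\tfrac{W}{1-\rho} = 1-\zeta-\psi$, gives $\zeta + C\norm{\Delta_t} \le 1 - \psi + a_t$, where $a_t := CL\beta\rho^t B_0 + CL\gamma\sum_{j<t}\rho^{t-1-j}\norm{d_j}$. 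I then invoke the elementary inequality $1 - \psi + a \le e^{a-\psi}$ valid for all $a \ge 0$ (the gap $e^{a-\psi} - (1-\psi+a)$ is convex in $a$, nonnegative at $a=0$, and vanishes at $a=\psi$), obtaining $\zeta + C\norm{\Delta_t} \le e^{-\psi + a_t}$.

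Unrolling, $e_t = \Phi(t,0)e_0 + \sum_{k=0}^{t-1}\Phi(t,k+1)d_k$ with $\Phi(t,k) := J_{t-1}\cdots J_k$. Since $A_0$ is $(C,\zeta)$ discrete-time stable, Proposition~\ref{prop:product_perturbation} gives $\norm{\Phi(t,k)} \le C\prod_{i=k}^{t-1}(\zeta + C\norm{\Delta_i}) \le C\exp\big(-\psi(t-k) + \sum_{i=k}^{t-1} a_i\big)$. The transient part of $\sum a_i$ is $CL\beta B_0\sum_{i\ge k}\rho^i \le CL\beta B_0/(1-\rho)$; the $d$-part, after summing the geometric factor over $i$, is at most $\tfrac{CL\gamma D}{(1-\rho)^2} + \tfrac{CL\gamma}{1-\rho}\sum_{s=\max(k-1,0)}^{t-2}\norm{d_s}$, and since $\{d_t\} \in \calD_h(\psi/2,\tfrac{CL\gamma}{1-\rho})$, the admissibility condition \eqref{eq:admissible_inputs} applied to the window ending at index $t-2$ bounds $\tfrac{CL\gamma}{1-\rho}\sum_s\norm{d_s} \le \tfrac{\psi}{2}(t-k) + h(\psi/2,\tfrac{CL\gamma}{1-\rho})$. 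Collecting terms, and using $\beta,C \ge 1$ to absorb $\tfrac{CL\beta B_0}{1-\rho} + \tfrac{CL\gamma D}{(1-\rho)^2} \le \tfrac{CL\beta}{1-\rho}R$, the exponent is at most $-\tfrac{\psi}{2}(t-k) + \tfrac{CL\beta}{1-\rho}R + h(\psi/2,\tfrac{CL\gamma}{1-\rho})$. Hence $\norm{\Phi(t,k)} \le \beta'(\rho')^{t-k}$ with $\rho' = e^{-\psi/2}$ and $\beta' = \gamma'$ exactly as stated; substituting into the variation-of-constants formula and using the triangle inequality gives \eqref{eq:e_delta_iss_ineq} for $g(x_t,t) = f(x_t)+w_t$ with signal $\{d_t\}$.

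The main obstacle is the exponent bookkeeping rather than any single hard estimate: one must (i) pick $W$ so the persistent noise exactly converts $\zeta$ into $1-\psi$; (ii) use the \emph{tight} inequality $1-\psi+a \le e^{a-\psi}$ (equality at $a=\psi$) so that the decay is not destroyed when $\norm{\Delta_t}$ is only moderately small; and (iii) split the remaining $\psi$-budget so that the admissibility condition — which is phrased with parameter $\psi/2$ — absorbs exactly half, leaving the advertised $e^{-\psi/2}$ rate. The secondary care point is the double-sum rearrangement that rewrites $\sum_{i=k}^{t-1}\sum_{j<i}\rho^{i-1-j}\norm{d_j}$ as a constant plus a single windowed sum $\sum_s \norm{d_s}$ of the form controlled by \eqref{eq:admissible_inputs}.
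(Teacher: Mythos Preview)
Your argument is correct and reaches exactly the stated constants, but it follows a genuinely different route from the paper's proof. The paper treats the flow map $\Phi_t(x_0,d_0,\dots,d_{t-1})$ as a function of the initial condition \emph{and all inputs}, computes the partial derivatives $\partial\Phi_t/\partial x_0$ and $\partial\Phi_t/\partial d_k$ (each a product of Jacobians $\tfrac{\partial f}{\partial x}(x_s)$ along a single trajectory), bounds these products with Proposition~\ref{prop:product_perturbation}, and then applies Taylor's theorem along the straight ray from $(x_0,d_0,\dots,d_{t-1})$ to $(y_0,0,\dots,0)$ in the product space; this last step forces an extra verification that every intermediate point $(\tilde z_0,\tilde d_0,\dots,\tilde d_{t-1})$ on the ray still satisfies $\norm{\tilde z_0}\le B_0$ and $\{\tilde d_t\}\in\calD_h$. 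Your approach instead writes the error dynamics $e_{t+1}=J_t e_t+d_t$ with the mean-value Jacobian $J_t=\int_0^1\tfrac{\partial f}{\partial x}((1-s)y_t+s x_t)\,ds$, bounds $\norm{J_t-A_0}\le L\max\{\norm{x_t},\norm{y_t}\}$ directly via the E-ISS bounds on the two actual trajectories, and applies Proposition~\ref{prop:product_perturbation} to the single transition product $\Phi(t,k)=J_{t-1}\cdots J_k$. Your route is more elementary (no ray-admissibility check, no separate $\partial/\partial d_k$ computation) and is the classical ``error system'' argument from nonlinear control; the paper's sensitivity-analysis route is slightly more modular in that it isolates the effect of the initial condition from that of each input, which can be convenient if one wants sharper constants for the $\beta'$ and $\gamma'$ terms separately. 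Both arguments share the same two core ingredients: the product-perturbation lemma and the elementary inequality $1-\psi+a\le e^{a-\psi}$ (equivalently $1+x\le e^x$) that converts the per-step factor into an exponential and makes the $\psi$-splitting work.
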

\begin{proof}
By E-ISS \eqref{eq:e_iss_ineq}, we have that for all $t \geq 0$,
for the dynamics $x_{t+1} = f(x_t) + w_t + d_t$:
\begin{align*}
    \norm{x_t} \leq \beta \rho^t \norm{x_0} + \gamma \sum_{s=0}^{t-1} \rho^{t-1-s} \norm{w_s + d_s} \:.
\end{align*}
In particular, this implies that for all $t \geq 0$:
\begin{align*}
    \norm{x_t} \leq \beta \norm{x_0} + \frac{\gamma(W+D)}{1-\rho} \:.
\end{align*}
Define $g_t(x) := f(x) + w_t + d_t$ and for $t \geq 1$:
\begin{align*}
    \Phi_t(x_0, d_0, ..., d_{t-1}) := (g_{t-1} \circ g_{t-2} \circ ... \circ g_0)(x_0) \:.
\end{align*}
Observe that $\frac{\partial g_t}{\partial x}(x) = \frac{\partial f}{\partial x}(x)$.
By the chain rule:
\begin{align*}
    \frac{\partial \Phi_t}{\partial x_0}(x_0, d_0, ..., d_{t-1}) &= \frac{\partial g_{t-1}}{\partial x}(x_{t-1}) \frac{\partial g_{t-2}}{\partial x}(x_{t-2}) \cdots \frac{\partial g_0}{\partial x}(x_0) \\
    &= \frac{\partial f}{\partial x}(x_{t-1}) \frac{\partial f}{\partial x}(x_{t-2}) \cdots \frac{\partial f}{\partial x}(x_0) \\
    &= \left(A_0 + \frac{\partial f}{\partial x}(x_{t-1}) - A_0\right) \left(A_0 + \frac{\partial f}{\partial x}(x_{t-2}) - A_0\right) \cdots \left(A_0 + \frac{\partial f}{\partial x}(x_0) - A_0\right) \:. 
\end{align*}
Define $\Delta_t := \frac{\partial f}{\partial x}(x_t) - A_0$.
By the assumption that $\frac{\partial f}{\partial x}$ is $L$-Lipschitz, we have that
$\norm{\Delta_t} \leq L \norm{x_t}$.
Therefore by Proposition~\ref{prop:product_perturbation}:
\begin{align*}
    \bignorm{ \frac{\partial \Phi_t}{\partial x_0}(x_0, d_0, ..., d_{t-1}) } &\leq C \exp\left( -t(1-\zeta) + CL\sum_{s=0}^{t-1} \norm{x_s}      \right) \\
    &\leq C \exp\left( -t(1-\zeta) + CL \sum_{s=0}^{t-1} \left(\beta \rho^s \norm{x_0} + \gamma \sum_{k=0}^{s-1} \rho^{s-1-k} (W + \norm{d_k}) \right) \right) \\
    &\leq C \exp\left( -t(1-\zeta) + \frac{CL\beta}{1-\rho}\norm{x_0} + \frac{CL \gamma W t}{1-\rho} + \frac{CL\gamma}{1-\rho} \sum_{s=0}^{t-1} \norm{d_s} \right) \\
    &\stackrel{(a)}{\leq} C \exp\left( - t \psi + \frac{CL\beta}{1-\rho}\norm{x_0} + \frac{CL\gamma}{1-\rho} \sum_{s=0}^{t-1} \norm{d_s} \right) \\
    &\stackrel{(b)}{\leq} C \exp\left( - t \psi/2 + \frac{CL\beta}{1-\rho}\norm{x_0} + h\left(\psi/2, \frac{CL\gamma}{1-\rho}\right) \right) \:,
\end{align*}
where (a) follows from our assumption on $W$
and (b) follows from the definition of $\calD_h$.
Now let us look at $\frac{\partial \Phi_t}{\partial d_k}(x_0, d_0, ..., d_{t-1})$
for some $0 \leq k \leq t-1$. Again by the chain rule:
\begin{align*}
    \frac{\partial \Phi_t}{\partial d_k}(x_0, d_0, ..., d_{t-1}) &=  \frac{\partial g_{t-1}}{\partial x}(x_{t-1}) \frac{\partial g_{t-2}}{\partial x}(x_{t-2}) \cdots \frac{\partial g_{k+1}}{\partial x}(x_{k+1}) \\
    &= \frac{\partial f}{\partial x}(x_{t-1}) \frac{\partial f}{\partial x}(x_{t-2}) \cdots \frac{\partial f}{\partial x}(x_{k+1}) \\
    &= \left(A_0 + \frac{\partial f}{\partial x}(x_{t-1}) - A_0\right) \left(A_0 + \frac{\partial f}{\partial x}(x_{t-2}) - A_0\right) \cdots \left(A_0 + \frac{\partial f}{\partial x}(x_{k+1}) - A_0\right) \:. 
\end{align*}
Using Proposition~\ref{prop:product_perturbation} again:
\begin{align*}
    &\bignorm{\frac{\partial \Phi_t}{\partial d_k}(x_0, d_0, ..., d_{t-1})} \leq C \exp\left( -(t-k-1)(1-\zeta) + CL \sum_{s=k+1}^{t-1} \norm{x_s} \right) \\
    &\leq C \exp\left( -(t-k-1)(1-\zeta) + CL \sum_{s=k+1}^{t-1} \left(\beta \rho^{s-(k+1)} \norm{x_{k+1}} + \gamma \sum_{\ell=0}^{s-(k+1)-1} \rho^{s-(k+1)-1-\ell} (W + \norm{d_{k+1+\ell}}) \right) \right) \\
    &\leq C \exp\left( -(t-k-1)(1-\zeta) + \frac{CL\beta}{1-\rho}\norm{x_{k+1}} + \frac{CL\gamma W(t-k-1)}{1-\rho} + \frac{CL\gamma}{1-\rho} \sum_{s=k+1}^{t-1} \norm{d_s} \right) \\
    &\leq C \exp\left( -(t-k-1)\psi + \frac{CL\beta}{1-\rho}\norm{x_{k+1}} + \frac{CL\gamma}{1-\rho} \sum_{s=k+1}^{t-1} \norm{d_s} \right) \\
    &\leq C \exp\left( -(t-k-1)\psi + \frac{CL\beta}{1-\rho}\left(\beta \norm{x_0} + \frac{\gamma(W+D)}{1-\rho}\right) + \frac{CL\gamma}{1-\rho} \sum_{s=k+1}^{t-1} \norm{d_s} \right) \\
    &\leq C \exp\left( -(t-k-1)\psi/2 + \frac{CL\beta}{1-\rho}\left(\beta \norm{x_0} + \frac{\gamma(W+D)}{1-\rho}\right) + h\left(\psi/2, \frac{CL\gamma}{1-\rho}\right)\right) \:.
\end{align*}
Now let $x_0, y_0$ be norm bounded by $B_0$.
Let $(\tilde{z}_0, \tilde{d}_0, ..., \tilde{d}_{t-1})$
be an element along the ray connecting
$(x_0, d_0, ..., d_{t-1})$ with $(y_0, 0, ..., 0)$.
Observe that $\norm{\tilde{z}_0} \leq B_0$
and furthermore $(\tilde{d}_0, ..., \tilde{d}_{t-1}, 0, 0, ...) \in \calD_{h}(\psi/2, \frac{CL\gamma}{1-\rho})$.
Therefore by Taylor's theorem,
\begin{align*}
    &\norm{\Phi_t(x_0, d_0, ..., d_{t-1}) - \Phi_t(y_0, 0, ..., 0)} \\
    &\leq \bignorm{\frac{\partial \Phi_t}{\partial x_0}(\tilde{z}_0, \tilde{d}_0, ..., \tilde{d}_{t-1})} \norm{x_0-y_0} + \sum_{s=0}^{t-1} \bignorm{\frac{\partial \Phi_t}{\partial d_s}(\tilde{z}_0, \tilde{d}_0, ..., \tilde{d}_{t-1})} \norm{d_s} \\
    &\leq C \exp\left( \frac{CL\beta}{1-\rho} \left(\beta B_0 + \frac{\gamma(W+D)}{1-\rho}\right) + g\left(\psi/2, \frac{CL\gamma}{1-\rho}\right)\right) \times \\
    &\qquad \left( e^{-(\psi/2) t} \norm{x_0 - y_0} + \sum_{s=0}^{t-1} e^{-(\psi/2) (t-s-1)} \norm{d_s} \right) \:.
\end{align*}
\end{proof}

\subsection{Admissibility Bounds for Least-Squares}
\label{sec:app:stability:admissibility}

In this section, we show that under a persistence of excitation assumption,
regularized least-squares for estimating the parameters admits an admissible
sequence \eqref{eq:admissible_inputs} with high probability.
The statistical model we consider is the following.
Let $\{M_t\}_{t \geq 1} \subseteq \R^{n \times p}$ be a sequence of matrix-valued covariates adapted to a filtration $\{\calF_t\}_{t \geq 1}$.
Let $\{w_t\}_{t \geq 1} \subseteq \R^n$ be a martingale difference sequence
adapted to $\{\calF_t\}_{t \geq 2}$.
Assume that for all $t$, $w_t$ is conditionally
a $\sigma$-sub-Gaussian random vector:
\begin{align*}
    \forall v \in \R^n \text{ s.t. } \norm{v} = 1, \:\: \E[ \exp(\lambda \ip{v}{w_t}) | \calF_t] \leq \exp\left(\frac{\lambda^2 \sigma^2}{2}\right) \text{ a.s.} \:.
\end{align*}
Let the vector-valued responses $\{y_t\}_{t \geq 1} \subseteq \R^n$ be
given by $y_t = M_t \alpha_\star + w_t$, for an
unknown $\alpha_\star \in \calC$ which we wish to recover.
Fix a $\lambda > 0$.
The estimator we will study is the projected regularized least-squares estimator:
\begin{align*}
    \overline{\alpha}_t &= \arg\min_{\alpha \in \R^p} \frac{1}{2} \sum_{k=1}^{t} \norm{M_t \alpha - y_t}^2 + \frac{\lambda}{2}\norm{\alpha}^2 \:, \\
    \hat{\alpha}_t &= \Pi_{\calC}[\overline{\alpha}_t] \:.
\end{align*}
The closed-form solution for $\overline{\alpha}_t$
is $\overline{\alpha}_t = \left( \sum_{k=1}^{t} M_k^\T M_k + \lambda I\right)^{-1} \sum_{k=1}^{t} M_k^\T y_t$.
The next lemma gives us a high probability bound on the estimation
error $\norm{\hat{\alpha}_t - \alpha_\star}$ under a persistence of excitation condition.
\begin{lemma}
\label{lemma:regression_error}
Let $\{M_t\}$, $\{\calF_t\}$, $\{w_t\}$, $\{y_t\}$, and $\{\hat{\alpha}_t\}$
be as defined previously.
Let $V_t := \sum_{k=1}^{t} M_k^\T M_k + V$,
with $V \in \R^{p \times p}$ a fixed positive definite matrix.
We have with probability at least $1-\delta$, for all $t \geq 1$:
\begin{align}
    \bignorm{\sum_{k=1}^{t} M_i^\T w_k}_{V_t}^2 \leq 2 \sigma^2 \log\left( \frac{1}{\delta} \frac{ \det(V_t)^{1/2}  }{\det(V)^{1/2} } \right) \:. \label{eq:self_normalized_inequality}
\end{align}
Now suppose furthermore that
almost surely for all $t \geq T_0$, the following
persistence of excitation condition holds for some $\mu > 0$:
\begin{align}
    \frac{1}{t} \sum_{k=1}^{t} M_k^\T M_k \succeq \mu I \:. \label{eq:persistence_excitation}
\end{align}
Suppose also that $\norm{M_t} \leq M$ a.s.\ for all $t \geq 1$.
Then with probability at least $1-\delta$, for all $t \geq T_0$:
\begin{align}
    \norm{\hat{\alpha}_t - \alpha_\star} \leq \frac{\sigma}{\sqrt{\lambda + \mu t}} \sqrt{3p\log\left( \frac{1}{\delta}\left(1 + \frac{tM^2}{\lambda}\right) \right)} + \frac{\lambda}{\lambda + \mu t} \norm{\alpha_\star} \:. \label{eq:parameter_recovery}
\end{align}
\end{lemma}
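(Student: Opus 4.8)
The plan is to prove the two displays in turn. Inequality~\eqref{eq:self_normalized_inequality} is a vector-valued self-normalized martingale tail bound in the style of \cite{abbasiyadkori11regret} (the subscript $V_t$ there is understood as the $V_t^{-1}$-norm), and the recovery bound~\eqref{eq:parameter_recovery} then follows from it by elementary linear algebra once the persistence-of-excitation condition is in force.

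For \eqref{eq:self_normalized_inequality}, write $S_t := \sum_{k=1}^{t} M_k^\T w_k \in \R^p$ and $U_t := \sum_{k=1}^{t} M_k^\T M_k$, so $V_t = U_t + V$. Fix a deterministic $v \in \R^p$. Since $M_k$ is $\calF_k$-measurable and $w_k$ is conditionally $\sigma$-sub-Gaussian, applying the sub-Gaussian bound to the unit vector along $M_k v$ gives $\E[\exp(\ip{v}{M_k^\T w_k}) \mid \calF_k] \leq \exp(\tfrac{\sigma^2}{2} v^\T M_k^\T M_k v)$. Hence, after rescaling $v \mapsto v/\sigma$, the process $D_t^v := \exp(\tfrac{1}{\sigma}\ip{v}{S_t} - \tfrac{1}{2} v^\T U_t v)$ is a nonnegative supermartingale with $\E[D_t^v] \leq 1$. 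I would then apply the method of mixtures: integrate $D_t^v$ against the Gaussian density $v \sim \Normal(0, V^{-1})$ to obtain a nonnegative supermartingale $\bar D_t$ with $\E[\bar D_t] \leq 1$, and a standard Gaussian integral identity evaluates it as $\bar D_t = (\det V / \det V_t)^{1/2} \exp(\tfrac{1}{2\sigma^2}\norm{S_t}_{V_t^{-1}}^2)$. Ville's maximal inequality for nonnegative supermartingales gives $\Pr[\sup_{t \geq 1} \bar D_t > 1/\delta] \leq \delta$, and unpacking the complementary event $\{\bar D_t \leq 1/\delta\ \text{for all } t\}$ yields \eqref{eq:self_normalized_inequality} simultaneously for all $t \geq 1$.

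For \eqref{eq:parameter_recovery}, apply \eqref{eq:self_normalized_inequality} with $V = \lambda I$. The closed form $\overline\alpha_t = V_t^{-1}\sum_{k=1}^{t} M_k^\T y_k$ together with $y_k = M_k \alpha_\star + w_k$ yields the decomposition $\overline\alpha_t - \alpha_\star = V_t^{-1} S_t - \lambda V_t^{-1}\alpha_\star$. Since $\alpha_\star \in \calC$ and $\Pi_\calC$ is non-expansive, $\norm{\hat\alpha_t - \alpha_\star} \leq \norm{\overline\alpha_t - \alpha_\star} \leq \norm{V_t^{-1} S_t} + \lambda\norm{V_t^{-1}\alpha_\star}$. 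For $t \geq T_0$, the persistence-of-excitation condition~\eqref{eq:persistence_excitation} gives $V_t \succeq (\lambda + \mu t) I$, hence $\norm{V_t^{-1}\alpha_\star} \leq \norm{\alpha_\star}/(\lambda + \mu t)$ and also $\norm{V_t^{-1} S_t}^2 = S_t^\T V_t^{-2} S_t \leq \tfrac{1}{\lambda + \mu t} S_t^\T V_t^{-1} S_t = \tfrac{1}{\lambda + \mu t}\norm{S_t}_{V_t^{-1}}^2$. Combining with \eqref{eq:self_normalized_inequality}, the bound $\det V_t \leq (\lambda + t M^2)^p$ (from $\norm{M_k} \leq M$), and $\det V = \lambda^p$, one gets $\norm{S_t}_{V_t^{-1}}^2 \leq 2\sigma^2 \log\!\big(\tfrac{1}{\delta}(1 + tM^2/\lambda)^{p/2}\big) \leq 3\sigma^2 p \log\!\big(\tfrac{1}{\delta}(1 + tM^2/\lambda)\big)$; assembling the two terms produces \eqref{eq:parameter_recovery}.

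The main obstacle is the self-normalized bound: making the method-of-mixtures argument rigorous requires checking that $\bar D_t$ is genuinely a supermartingale (a Tonelli-type interchange of conditional expectation with the Gaussian integral), carrying out the Gaussian integral identity with the correct normalization in $\sigma$ and $V$, and invoking the maximal inequality to upgrade a pointwise-in-$t$ statement to one uniform in $t$. Everything downstream is routine linear algebra plus the non-expansiveness of Euclidean projection onto $\calC$.
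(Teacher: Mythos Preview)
Your proposal is correct and follows essentially the same route as the paper. The paper obtains \eqref{eq:self_normalized_inequality} by invoking the self-normalized bound of \cite{abbasiyadkori11regret} (with the supermartingale modified for vector-valued noise), which is exactly the method-of-mixtures argument you spell out; for \eqref{eq:parameter_recovery} the paper uses the same decomposition $\overline{\alpha}_t-\alpha_\star = V_t^{-1}S_t - \lambda V_t^{-1}\alpha_\star$, projection non-expansiveness, the PE bound $V_t \succeq (\lambda+\mu t)I$, and the determinant estimate $\det V_t \leq (\lambda+tM^2)^p$, just as you do. Your observation that the subscript $V_t$ in the statement must be read as the $V_t^{-1}$-norm is correct and consistent with how the paper uses it downstream.
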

\begin{proof}
The inequality \eqref{eq:self_normalized_inequality} comes
from a straightforward modification of Theorem 3 and Corollary 1 in \cite{abbasiyadkori11regret}
for scalar-valued regression.
In particular, the super-martingale $P_t^\lambda$ in Lemma 1 is replaced with:
\begin{align*}
    P_t^\lambda = \exp\left( \sum_{k=1}^{t} \frac{\ip{\lambda}{M_k^\T w_k}}{\sigma^2} - \frac{1}{2} \norm{M_k \lambda}^2 \right) \:.
\end{align*}
The rest of the proof of Theorem 3 and Corollary 1 proceeds without modification.

Now we turn to \eqref{eq:parameter_recovery}.
We let $V = \lambda I$.
Then we have for any $t \geq 1$:
\begin{align*}
    \overline{\alpha}_t &= V_t^{-1} \sum_{k=1}^{t} M_k^\T (M_k \alpha_\star + w_k) = V_t^{-1} \sum_{k=1}^{t} M_k^\T w_k + V_t^{-1} \sum_{k=1}^{t} M_k^\T M_k \alpha_\star \\
    &= \alpha_\star + V_t^{-1} \sum_{k=1}^{t} M_k^\T w_k - \lambda V_t^{-1} \alpha_\star \:.
\end{align*}
Hence by the Pythagorean theorem:
\begin{align*}
    \norm{\hat{\alpha}_t - \alpha_\star} &\leq \norm{\overline{\alpha}_t - \alpha_\star} \leq \bignorm{V_t^{-1} \sum_{k=1}^{t} M_k^\T w_k} + \lambda \norm{V_t^{-1} \alpha_\star} \\
    &\leq \norm{V_t^{-1/2}} \bignorm{V_t^{-1/2} \sum_{k=1}^{t} M_k^\T w_k} + \lambda \norm{V_t^{-1} \alpha_\star} \\
    &= \norm{V_t^{-1/2}} \bignorm{\sum_{k=1}^{t} M_k^\T w_k}_{V_t^{-1}} + \lambda \norm{V_t^{-1} \alpha_\star} \:.
\end{align*}
Now for $t \geq T_0$, we know that by the persistence of excitation condition:
\begin{align*}
    V_t^{1/2} \succeq \sqrt{\lambda + \mu t} \cdot I \:.
\end{align*}
Hence we have $\norm{V_t^{-1/2}} \leq \frac{1}{\sqrt{\lambda + \mu t}}$. 
Now suppose we are on the event given by \eqref{eq:self_normalized_inequality}.
Then:
\begin{align*}
    \norm{\hat{\alpha}_t - \alpha_\star} &\leq \frac{1}{\sqrt{\lambda + \mu t}} \bignorm{  \sum_{k=1}^{t} M_k^\T w_k}_{V_t} + \frac{\lambda}{\lambda + \mu t} \norm{\alpha_\star} \\
    &\leq \frac{\sigma}{\sqrt{\lambda + \mu t}} \sqrt{3p\log\left( \frac{1}{\delta}\left(1 + \frac{tM^2}{\lambda}\right) \right)} + \frac{\lambda}{\lambda + \mu t} \norm{\alpha_\star} \:.
\end{align*}
\end{proof}

The next proposition is a technical result
which derives an upper bound on the
functional inverse of $t \mapsto \log(c_1 t)/t$.
\begin{proposition}[cf.\ Proposition F.4 of~\cite{krauth19lspi}]
\label{prop:log_t_over_t_inverse}
Fix positive constants $c_1, c_2$.
We have that for any 
\begin{align*}
    t \geq \max\left\{ e/c_1, 1.582 \frac{1}{c_2} \log(c_1/c_2) \right\} \:,
\end{align*}
the following inequality holds:
\begin{align*}
    \frac{\log(c_1 t)}{t} \leq c_2 \:.
\end{align*}
\end{proposition}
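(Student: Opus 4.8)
The plan is to study the scalar function $f(t) := \log(c_1 t)/t$ directly and exploit monotonicity. First I would differentiate: $f'(t) = \frac{1 - \log(c_1 t)}{t^2}$, which is non-positive exactly when $\log(c_1 t) \ge 1$, i.e.\ when $t \ge e/c_1$. Thus $f$ is non-increasing on $[e/c_1, \infty)$, which is precisely the regime in which the hypothesis places $t$. Consequently, to conclude it suffices to exhibit one point $t_0$ with $e/c_1 \le t_0 \le t$ at which $f(t_0) \le c_2$. I would also record the endpoint value $f(e/c_1) = \frac{\log e}{e/c_1} = c_1/e$.

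Next I would split into two cases according to the size of $c_1/e$ relative to $c_2$. If $c_2 \ge c_1/e$ (which in particular covers $c_1 \le c_2$, since $c_1/e < c_1 \le c_2$), then taking $t_0 = e/c_1 \le t$ immediately gives $f(t) \le f(e/c_1) = c_1/e \le c_2$, and the second term in the $\max$ is not even needed. Otherwise $c_2 < c_1/e$, so $c_1 > e c_2$ and hence $y := \log(c_1/c_2) > 1$; now I would take $t_0 := \frac{1.582}{c_2}\log(c_1/c_2)$, which appears in the $\max$. A short computation shows $t_0 \ge e/c_1$: using $c_1/c_2 = e^{y}$, this inequality is equivalent to $1.582\, y\, e^{y} \ge e$, which holds since $y > 1$ makes the left side at least $1.582\,e$. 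Therefore $e/c_1 \le t_0 \le t$ and monotonicity reduces the claim to checking $f(t_0) \le c_2$.

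Substituting $c_2 t_0 = 1.582\, y$ and $c_1 t_0 = 1.582\, y\, e^{y}$, the inequality $\log(c_1 t_0) \le c_2 t_0$ becomes $\log(1.582) + y + \log y \le 1.582\, y$, i.e.\ $\log y \le 0.582\, y - \log(1.582)$. To finish I would invoke the elementary bound $\log y \le \lambda y - 1 - \log\lambda$, valid for all $\lambda, y > 0$ (the function $y \mapsto \log y - \lambda y$ is maximized at $y = 1/\lambda$, where it equals $-1 - \log\lambda$). Choosing $\lambda = 0.582$ gives $\log y \le 0.582\, y - 1 - \log(0.582)$, which is at least as strong as what is required because $-1 - \log(0.582) \le -\log(1.582)$; this last step is where the precise constant enters, since it is equivalent to $1.582 \le 0.582\, e$, i.e.\ to $1.582 \ge e/(e-1) = 1.581976\ldots$, which is true by the margin built into the stated constant. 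I expect this tight numerical verification, together with the bookkeeping of which term of the $\max$ is binding, to be the only delicate point; the rest is routine calculus.
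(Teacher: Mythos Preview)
Your argument is correct. The monotonicity setup and the case split according to whether $c_2 \ge c_1/e$ coincide with the paper's, but the treatment of the hard case $c_2 < c_1/e$ is genuinely different. The paper identifies the exact solution of $\log(c_1 t)/t = c_2$ via the secondary branch of the Lambert $W$ function, $x = \exp(-W_{-1}(-c_2/c_1))$, and then imports an external bound on $W_{-1}$ (Theorem~3.2 of Alzahrani~2018) to obtain the factor $e^{\alpha-1} \le 1.582$ with $\alpha = 2 - \log(e-1)$. You instead verify $f(t_0) \le c_2$ directly by reducing to $\log y \le 0.582\,y - \log(1.582)$ and invoking the elementary tangent-line inequality $\log y \le \lambda y - 1 - \log\lambda$ at $\lambda = 0.582$; the numerical check $1.582 \ge e/(e-1)$ that closes your argument is in fact the same constant the paper arrives at, since $e^{\alpha-1} = e^{1-\log(e-1)} = e/(e-1)$. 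Your route is more self-contained (no Lambert $W$, no external citation) and makes transparent why $e/(e-1)$ is the natural threshold; the paper's route has the advantage of locating the exact root before bounding it, which would matter if one wanted a two-sided estimate.
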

\begin{proof}
First, we observe that:
\begin{align*}
    \frac{\log(c_1 t)}{t} \leq c_2 \Longleftrightarrow \frac{\log(c_1 t)}{(c_1 t)} \leq \frac{c_2}{c_1} \:.
\end{align*}
Now we change variables $x \gets c_1 t$, and hence we have the equivalent problem:
\begin{align*}
    \frac{\log{x}}{x} \leq \frac{c_2}{c_1} \:.
\end{align*}
Let $f(x) := \log{x}/x$.
It is straightforward to check that $f'(x) \leq 0$ for all $x \geq e$ and hence the 
function $f(x)$ is decreasing whenever $x \geq e$.

\paragraph{Case $c_2/c_1 > 1/e$.}
In this setting,
$f(e) = 1/e < c_2/c_1$, so for any $x' \geq e$ we have $f(x') \leq c_2/c_1$.
Undoing our change of variables, it suffices to take $t \geq e/c_1$.

\paragraph{Case $c_2/c_1 \leq 1/e$.}
Now we assume $c_2/c_1 \leq 1/e$.
Then $f(x') \leq c_2/c_1$ for any $x' \geq x$ where
$x$ is solution to $f(x) = c_2/c_1$.
Hence it suffices to upper bound the solution $x$.
To do this, we write $x$ in terms of the secondary branch $W_{-1}$ of the Lambert $W$ function.
We claim that $x = \exp(-W_{-1}(-c_2/c_1))$. 
First we note that $-c_2/c_1 \geq -1/e$ by assumption, so $W_{-1}(-c_2/c_1)$ is well-defined.
Next, observe that:
\begin{align*}
    \frac{\log{x}}{x} = \frac{-W_{-1}(-c_2/c_1)}{\exp(-W_{-1}(-c_2/c_1))} = - W_{-1}(-c_2/c_1) e^{W_{-1}(-c_2/c_1)} = c_2/c_1 \:.
\end{align*}
It remains to lower bound $W_{-1}(-c_2/c_1)$.
From Theorem 3.2 of \cite{alzahrani18lambert}, for any $t \geq 0$ we have:
\begin{align}
    W_{-1}(-e^{-t-1}) > - \log(t+1) - t - \alpha \:, \:\: \alpha = 2 - \log(e-1) \:. \label{eq:lambert_bound}
\end{align}
Hence:
\begin{align*}
    W_{-1}(-c_2/c_1) &= W_{-1}(-\exp(\log(c_2/c_1))) = W_{-1}(-\exp(-\log(c_1/c_2))) \\
    &= W_{-1}(-\exp(-(\log(c_1/c_2) - 1) - 1)) \:.
\end{align*}
Since $\log(c_1/c_2) - 1 \geq 0$, we can apply \eqref{eq:lambert_bound} to bound:
\begin{align*}
    W_{-1}(-c_2/c_1) \geq -\log\log(c_1/c_2) - \log(c_1/c_2) + 1 - \alpha \:.
\end{align*}
Therefore:
\begin{align*}
    x &= \exp(-W_{-1}(c_2/c_1)) \leq \exp( \log\log(c_1/c_2) + \log(c_1/c_2) + \alpha - 1) \\
    &= e^{\alpha-1} \frac{c_1}{c_2} \log(c_1/c_2) \leq 1.582 \frac{c_1}{c_2} \log(c_1/c_2) \:.
\end{align*}
Now we undo our change of variables to conclude that the solution to
$\log(c_1 t)/t = c_2$ is upper bounded by
$t \leq 1.582 \frac{1}{c_2} \log(c_1/c_2)$.
\end{proof}

\begin{proposition}
\label{prop:least_squares_admissible}
Let $\{\hat{\alpha}_t\}$ be as defined above.
Suppose the persistence of excitation condition \eqref{eq:persistence_excitation} holds.
Let $d_t := M_t(\hat{\alpha}_t - \alpha_\star)$
and suppose that $\norm{M_t} \leq M$ a.s.\ for all $t$.
Let $M_+ := \max\{M, \sqrt{\lambda}\}$.
With probability at least $1-\delta$,
for all positive $B, \psi$, we have:
\begin{align*}
  \sup_{t \geq 1} \max_{0 \leq k \leq t-1} \left[ -(t-k) \psi + B \sum_{s=k}^{t-1} \norm{d_s} \right] \leq 4BM_+D \max\left\{ T_0, \frac{2\lambda}{\mu\psi} BM_+D, \frac{38\sigma^2 p}{\psi^2 \mu} \log\left( \frac{96 M_+^2 \sigma^2 p}{\delta \lambda \psi^2 \mu}\right)   \right\} \:.
\end{align*}
\end{proposition}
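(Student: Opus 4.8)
The plan is to first pass to the high‑probability event of Lemma~\ref{lemma:regression_error}: with probability at least $1-\delta$, for every $s \ge T_0$ one has $\norm{\hat\alpha_s - \alpha_\star} \le \frac{\sigma}{\sqrt{\lambda+\mu s}}\sqrt{3p\log(\tfrac1\delta(1+sM^2/\lambda))} + \frac{\lambda D}{\lambda+\mu s}$ (using $\norm{\alpha_\star}\le D$), while for $s < T_0$ we only use the trivial bound $\norm{\hat\alpha_s-\alpha_\star}\le 2D$, valid since $\hat\alpha_s,\alpha_\star\in\calC$. Multiplying by $\norm{M_s}\le M$ yields a pointwise envelope $\norm{d_s}\le 2MD$ for $s<T_0$ and $\norm{d_s}\le \phi(s)$ for $s\ge T_0$, where $\phi(s)$ is the sum of a $\Theta(s^{-1/2}\sqrt{\log s})$ term and a $\Theta(s^{-1})$ term. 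Crucially, this event does not depend on $B$ or $\psi$, so the bound will hold simultaneously for all positive $B,\psi$ on it.

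Next I would rewrite the quantity of interest as a maximum‑subarray sum, $-(t-k)\psi + B\sum_{s=k}^{t-1}\norm{d_s} = \sum_{s=k}^{t-1}\big(B\norm{d_s}-\psi\big)$, and choose a deterministic threshold $T^\star \ge T_0$ large enough that $B\phi(s)\le\psi$ for all $s\ge T^\star$. Then the summand $B\norm{d_s}-\psi$ is nonpositive for $s\ge T^\star$, so a maximizing subarray $[k,t-1]$ either lies entirely in $[T^\star,\infty)$ and contributes at most $0$, or begins before $T^\star$, in which case discarding the (nonpositive) part with $s\ge T^\star$ leaves $\sum_{s=k}^{\min(t-1,T^\star-1)}(B\norm{d_s}-\psi)\le B\sum_{s=0}^{T^\star-1}\norm{d_s}$. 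Hence $\sup_{t\ge1}\max_{0\le k\le t-1}[\cdots] \le B\sum_{s=0}^{T^\star-1}\norm{d_s} \le 2BMD\,T_0 + B\sum_{s=T_0}^{T^\star-1}\phi(s)$, and I would control the last sum by the integral comparisons $\sum_{s=T_0}^{T^\star-1}\tfrac{1}{\sqrt{\lambda+\mu s}} = O(\sqrt{T^\star}/\sqrt\mu)$ and $\sum_{s=T_0}^{T^\star-1}\tfrac{1}{\lambda+\mu s}=O(\mu^{-1}\log(\mu T^\star/\lambda))$, pulling the slowly varying $\log(\tfrac1\delta(1+sM^2/\lambda))$ factor out of the first sum at its largest value $s=T^\star$.

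It remains to make $T^\star$ explicit. For the bias part of $\phi$, $\frac{M\lambda D}{\lambda+\mu s}\le\frac{M\lambda D}{\mu s}$, so any $s$ exceeding a constant multiple of $\frac{\lambda B MD}{\mu\psi}$ drives that term below $\psi/2B$; after bounding $M\le M_+$ this is the $\tfrac{2\lambda}{\mu\psi}BM_+D$ entry of the stated maximum. For the variance part I need $\frac{BM\sigma\sqrt{3p\log(\tfrac1\delta(1+sM^2/\lambda))}}{\sqrt{\mu s}}\le\psi/2$, and after the coarsening $1+sM^2/\lambda\le 2M_+^2 s/\lambda$ (valid because $M_+^2\ge\lambda$) this reduces to an inequality of the exact shape $\log(c_1 s)/s \le c_2$ with $c_1 = 2M_+^2/(\delta\lambda)$ and $c_2$ a constant multiple of $\mu\psi^2/(B^2M^2\sigma^2 p)$; Proposition~\ref{prop:log_t_over_t_inverse} then supplies an explicit threshold $\max\{e/c_1,\, 1.582\,c_2^{-1}\log(c_1/c_2)\}$. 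Substituting the maximum of these three thresholds into $B\sum_{s=0}^{T^\star-1}\norm{d_s}$, simplifying the logarithmic argument so that it is dominated by $\log\big(\tfrac{96M_+^2\sigma^2p}{\delta\lambda\psi^2\mu}\big)$, and absorbing numerical constants gives the claimed bound $4BM_+D\max\{T_0,\ \tfrac{2\lambda}{\mu\psi}BM_+D,\ \tfrac{38\sigma^2 p}{\psi^2\mu}\log(\tfrac{96M_+^2\sigma^2 p}{\delta\lambda\psi^2\mu})\}$.

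The main obstacle is precisely this last step: orchestrating the three regimes ($s<T_0$, bias‑dominated, variance‑dominated) so that they collapse into a single clean $4BM_+D\cdot\max\{\cdots\}$ expression requires carefully coarsening both the integral estimates of the partial sums and the output of Proposition~\ref{prop:log_t_over_t_inverse}, and in particular verifying that the various logarithmic and polynomial factors in $B, M, M_+, \sigma, p, \lambda, \mu, \psi$ can all be uniformly upper bounded by the stated form. Everything else — the reduction to a maximum‑subarray sum, the nonpositivity of the tail beyond $T^\star$, and the integral comparisons — is routine.
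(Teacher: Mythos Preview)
Your strategy is essentially the paper's: pass to the event of Lemma~\ref{lemma:regression_error}, rewrite the quantity as a maximum subarray sum $\sum_{s=k}^{t-1}(B\norm{d_s}-\psi)$, choose a threshold $t_0$ (your $T^\star$) beyond which each summand is nonpositive, and conclude that the supremum is controlled by the partial sum up to $t_0$. Your computation of the threshold via the bias term and Proposition~\ref{prop:log_t_over_t_inverse} is also exactly what the paper does.

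The one substantive difference is how you bound $B\sum_{s=0}^{T^\star-1}\norm{d_s}$. You split at $T_0$, keep the refined envelope $\phi(s)$ on $[T_0,T^\star-1]$, and then invoke integral comparisons; this is what creates the ``main obstacle'' you flag, since the resulting $\sqrt{T^\star}$ and $\log T^\star$ terms must then be coarsened back into the stated $4BM_+D\cdot\max\{\cdots\}$ shape. The paper sidesteps this entirely: since $\hat\alpha_s,\alpha_\star\in\calC$ one has $\norm{d_s}\le 2MD$ for \emph{every} $s$, not just $s<T_0$, so
\[
B\sum_{s=0}^{t_0-1}\norm{d_s}\le 2BMD\,t_0\le 2BM_+D\,t_0,
\]
and substituting the explicit $t_0=\max\{T_0,\,\tfrac{4\lambda}{\mu\psi}BMD,\,\tfrac{76\sigma^2 p}{\psi^2\mu}\log(\cdots)\}$ gives the stated bound immediately (after the harmless rewriting $2BM_+D\cdot\tfrac{4\lambda}{\mu\psi}BMD\le 4BM_+D\cdot\tfrac{2\lambda}{\mu\psi}BM_+D$, etc.). So your obstacle is self-imposed: dropping the integral refinement and using the crude bound throughout $[0,T^\star)$ both matches the paper and yields the constants with no further work.
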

\begin{proof}
Assume that $M^2/\lambda \geq 1$ w.l.o.g.\ (otherwise
take $M \gets \max\{M,\sqrt{\lambda}\}$.
We want to compute a $t_0 \geq T_0$ such that for all $t \geq t_0$,
\begin{align}
    \frac{BM \sigma}{\sqrt{\lambda + \mu t}} \sqrt{3p\log\left( \frac{1}{\delta}\left(1 + \frac{tM^2}{\lambda}\right) \right)} + \frac{\lambda BMD}{\lambda + \mu t} \leq \psi/2 \:. \label{eq:small_enough_d_s}
\end{align}
It suffices to find a $t_0$ such that for all $t \geq t_0$,
both inequalities hold:
\begin{align*}
    \frac{\sigma}{\sqrt{\lambda + \mu t}} \sqrt{3p\log\left( \frac{1}{\delta}\left(1 + \frac{tM^2}{\lambda}\right) \right)} &\leq \psi/4 \:, \:\: \frac{\lambda BMD}{\lambda + \mu t} \leq \psi/4 \:.
\end{align*}
The second inequality is satisfied for 
\begin{align*}
    t_0 \geq \frac{4\lambda}{\mu \psi} BMD \:.
\end{align*}
The first inequality is more involved. It is sufficient to require:
\begin{align*}
    \frac{1}{t} \log\left( \frac{1}{\delta} + \frac{tM^2}{\delta\lambda} \right) \leq \frac{\psi^2 \mu}{48 \sigma^2 p}
\end{align*}
By the assumption that $M^2/\lambda \geq 1$, it suffices to require:
\begin{align*}
        \frac{1}{t} \log\left( \frac{2M^2}{\delta\lambda} t \right) \leq \frac{\psi^2 \mu}{48 \sigma^2 p}
\end{align*}
We are now in a position to invoke Proposition~\ref{prop:log_t_over_t_inverse}
with $c_1 = \frac{2M^2}{\delta \lambda}$
and $c_2 = \frac{\psi^2 \mu}{48 \sigma^2 p}$
The conclusion is that we can take:
\begin{align*}
    t_0 \geq \max\left\{ T_0, \frac{e \delta \lambda}{2M^2}, 1.582 \cdot \frac{48 \sigma^2 p}{\psi^2 \mu} \log\left( \frac{96 M^2 \sigma^2 p}{\delta \lambda \psi^2 \mu}\right) \right\}
\end{align*}
Since $M^2/\lambda \geq 1$ and $\delta \in (0, 1)$,
we have $e\delta\lambda / (2M^2) \leq e/2 \leq 2$.
Hence the final requirement for $t_0$ is:
\begin{align*}
    t_0 \geq \max\left\{ T_0, 2, \frac{4\lambda}{\mu\psi} BMD, \frac{76\sigma^2 p}{\psi^2 \mu} \log\left( \frac{96 M^2 \sigma^2 p}{\delta \lambda \psi^2 \mu}\right)   \right\} \:.
\end{align*}
With these bounds in place, we look at:
\begin{align*}
\sup_{t \geq 1} \max_{0 \leq k \leq t-1} \left[ -(t-k) \psi + B \sum_{s=k}^{t-1} \norm{d_s} \right] \:.
\end{align*}
First suppose that $t \leq t_0$, then we have the trivial bound:
\begin{align*}
    -(t-k) \psi + B \sum_{s=k}^{t-1} \leq 2B M D t_0 \:.
\end{align*}
Now suppose that $t \geq t_0$ but $k \leq t_0$.
Then:
\begin{align*}
    -(t-k)\psi + B \sum_{s=k}^{t-1} \norm{d_s} &= -(t_0-k)\psi + B \sum_{s=k}^{t_0-1} \norm{d_s} + \left[ -(t - t_0)\psi + B \sum_{s=t_0}^{t-1} \norm{d_s} \right]  \\
    &\leq 2 B M D t_0 + \max_{t_0 \leq k \leq t-1} \left[  -(t-k) \psi + B \sum_{s=k}^{t-1} \norm{d_s} \right] \:.
\end{align*}
Hence we can assume that $t_0 \leq k \leq t-1$.
Now assume the event described by \eqref{eq:parameter_recovery} holds.
Then for each $s \geq t_0$,
$B \norm{d_s} \leq \psi/2$ by \eqref{eq:small_enough_d_s},
and hence 
\begin{align*}
    \max_{t_0 \leq k \leq t-1} \left[  -(t-k) \psi + B \sum_{s=k}^{t-1} \norm{d_s} \right] \leq 0 \:.
\end{align*}
\end{proof}

\subsection{Regret Bounds from Stability}

We are now ready to combine the results from
Section~\ref{sec:app:stability:incremental}
and Section~\ref{sec:app:stability:admissibility} into a regret bound.
As is done in Section~\ref{sec:online_ls:results},
we focus on the system \eqref{eq:system}.
Unlike Section~\ref{sec:online_ls:results} however, 
the online parameter estimator we consider is based on regularized least-squares.
We will discuss the issues of using online convex optimization algorithms at 
the end of this section.

We consider the following estimator,
which starts with a fixed $\lambda > 0$ and an arbitrary $\hat{\alpha}_0 \in \calC$ and iterates:
\begin{subequations}
\label{eq:regularized_ls_update}
\begin{align}
    \varphi_t &= f(x_t, t) + B_t u_t - x_{t+1} \:, \\
    \hat{\alpha}_{t+1} &= \Pi_{\calC}\left[ \left( \sum_{k=0}^{t} Y_t^\T B_t^\T B_t Y_t + \lambda I \right)^{-1} \sum_{k=0}^{t} Y_t^\T B_t^\T \varphi_t \right] \:.
\end{align}
\end{subequations}
Observe that $\varphi_t = B_t Y_t \alpha - w_t$, which fits the statistical model
setup of Section~\ref{sec:app:stability:admissibility}.
Letting $V_t :=  \left( \sum_{k=0}^{t} Y_t^\T B_t^\T B_t Y_t + \lambda I \right)^{-1} \sum_{k=0}^{t}$
and $M_t := B_t Y_t$,
we note that by the Woodbury matrix identity
\begin{align*}
    V_{t+1}^{-1} = V_t^{-1} - V_t^{-1} M_{t+1}^\T (I + M_{t+1} V_t^{-1} M_{t+1}^\T )^{-1} M_{t+1} V_t^{-1} \:,
\end{align*}
and hence if $n \ll p$, the quantity $V_t^{-1}$ can be computed efficiently in an online manner.

The next proposition is a simple technical result which will allow us to estimate
the growth of admissible sequences.
\begin{proposition}
\label{prop:log_tail_sum}
Let $c_0, c_1$ be positive constants.
Fix any integers $s, t$ satisfying
$\max\{4, c_0/c_1\} \leq s \leq t$.
We have that:
\begin{align*}
    \sum_{i=s}^{t} \frac{\log(c_0 + c_1 i)}{i} \leq \log(2c_1)(\log(t) - \log(s-1)) + \frac{1}{2} (\log^2(t) - \log^2(s-1)) \:.
\end{align*}
\end{proposition}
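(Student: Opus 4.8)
The plan is a standard sum-to-integral comparison, with the key being to first linearize the logarithm. Since $s \ge c_0/c_1$, for every index $i \ge s$ we have $c_1 i \ge c_0$, hence $c_0 + c_1 i \le 2 c_1 i$, and monotonicity of $\log$ gives $\log(c_0 + c_1 i) \le \log(2 c_1 i)$. Thus it suffices to bound
\begin{align*}
\sum_{i=s}^{t} \frac{\log(c_0 + c_1 i)}{i} \le \sum_{i=s}^{t} \frac{\log(2 c_1 i)}{i} \:.
\end{align*}

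Next I would compare this sum with the integral of $\phi(x) := \frac{\log(2 c_1 x)}{x}$, whose antiderivative is $\frac{1}{2}\log^2(2 c_1 x)$ (seen by substituting $u = \log(2 c_1 x)$, $du = dx/x$, valid regardless of the sign of $\log(2 c_1 x)$). A direct computation gives $\phi'(x) = \frac{1 - \log(2 c_1 x)}{x^2}$, so $\phi$ is non-increasing wherever $2 c_1 x \ge e$; in the regime where $2 c_1 (s-1) \ge e$ (which is where the statement is applied, and which is consistent with $s \ge 4$), $\phi$ is non-increasing on all of $[s-1, t]$. Since a non-increasing function satisfies $\phi(i) \le \int_{i-1}^{i} \phi(x)\,dx$ for each $i$, summing over $i \in \{s, \dots, t\}$ yields
\begin{align*}
\sum_{i=s}^{t} \frac{\log(2 c_1 i)}{i} \le \int_{s-1}^{t} \frac{\log(2 c_1 x)}{x}\,dx = \frac{1}{2}\Big(\log^2(2 c_1 t) - \log^2(2 c_1 (s-1))\Big) \:.
\end{align*}

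To finish, I would expand the squares: writing $\log(2 c_1 y) = \log(2 c_1) + \log y$ for $y \in \{t,\, s-1\}$ and subtracting, the $\log^2(2 c_1)$ terms cancel, giving
\begin{align*}
\frac{1}{2}\Big(\log^2(2 c_1 t) - \log^2(2 c_1 (s-1))\Big) = \log(2 c_1)\big(\log t - \log(s-1)\big) + \frac{1}{2}\big(\log^2 t - \log^2(s-1)\big) \:,
\end{align*}
which is exactly the claimed bound; chaining the three displays completes the proof.

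The one step that needs genuine care is the integral comparison: it requires $\phi$ to be non-increasing over the entire interval $[s-1, t]$, i.e.\ $2 c_1(s-1) \ge e$. When $c_1$ is not small this is automatic from $s \ge 4$; when $c_1$ is small, $\phi$ may still be increasing on an initial stretch (where $2 c_1 i \le e$, so $\log(2 c_1 i) \le 1$ and the corresponding summands are at most $1/i$), and those few terms must be peeled off and controlled separately via $\sum 1/i \le \log t - \log(s-1)$ before invoking the comparison on the remaining decreasing tail. An equivalent route for the bookkeeping is to split $\frac{\log(2 c_1 i)}{i} = \frac{\log(2c_1)}{i} + \frac{\log i}{i}$ and bound $\sum_{i=s}^{t} 1/i$ and $\sum_{i=s}^{t} \log i / i$ each by their integrals over $[s-1,t]$ (using $s - 1 > e$ for the monotonicity of $\log x / x$), being mindful of the sign of $\log(2 c_1)$ since for $c_1 < 1/2$ this factor is negative and the bound on $\sum 1/i$ must then be applied in the reverse direction. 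Everything else is the mechanical antiderivative computation and the algebraic expansion above.
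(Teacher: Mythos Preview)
Your argument is essentially the paper's. The paper does precisely what you call the ``equivalent route'' at the end: after the same linearization $c_0 + c_1 i \le 2c_1 i$, it splits $\frac{\log(2c_1 i)}{i} = \frac{\log(2c_1)}{i} + \frac{\log i}{i}$ and bounds each sum by the corresponding integral over $[s-1,t]$, using $s \ge 4$ so that $s-1 > e$ and $x \mapsto (\log x)/x$ is decreasing. Your direct handling of $\phi(x)=\log(2c_1 x)/x$ with antiderivative $\tfrac{1}{2}\log^2(2c_1 x)$ is just a repackaging of the same computation.

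Your caveat about the sign of $\log(2c_1)$ is sharper than the paper: the paper's proof silently multiplies the upper bound $\sum_{i=s}^{t} 1/i \le \log t - \log(s-1)$ by $\log(2c_1)$, which preserves the inequality only when $2c_1 \ge 1$. For $c_1 < 1/2$ the stated bound can actually fail as written (the inequality on $\sum 1/i$ goes the wrong way after multiplication by a negative constant), so the implicit hypothesis $2c_1 \ge 1$ is genuinely needed; in the paper's single application this is satisfied. Your observation that the monotonicity of $\phi$ on $[s-1,t]$ likewise requires $2c_1(s-1) \ge e$ is the same issue seen from your packaging.
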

\begin{proof}
Whenever $i \geq c_0/c_1$, we have that $c_0 + c_1 i \leq 2 c_i i$.
Hence:
\begin{align*}
    \sum_{i=s}^{t} \frac{\log(c_0 + c_1 i)}{i} \leq \sum_{i=s}^{t} \frac{\log(2c_1 i)}{i} = \log(2c_1) \sum_{i=s}^{t} \frac{1}{i} + \sum_{i=s}^{t} \frac{\log{i}}{i} \:.
\end{align*}
The function $x \mapsto \log{x}/x$ is monotonically decreasing whenever $x \geq e$.
Hence:
\begin{align*}
    \sum_{i=s}^{t} \frac{\log{i}}{i} \leq \int_{s-1}^{t} \frac{\log{x}}{x} \: dx = \frac{1}{2} (\log^2(t) - \log^2(s-1)) \:.
\end{align*}
Similarly:
\begin{align*}
    \sum_{i=s}^{t} \frac{1}{i} \leq \int_{s-1}^{t} \frac{1}{x} \: dx = \log(t) - \log(s-1) \:.
\end{align*}
\end{proof}

We are now in a position to state our main regret bound for E-ISS systems.
\begin{theorem}
\label{thm:stability_regret_bound}
Fix a constant $B_0 > 0$.
Consider the dynamics $f(x)$ with $f(0) = 0$, and suppose that
$f(x)$ is $(\beta,\rho,\gamma)$-E-ISS, 
that the linearization $\frac{\partial f}{\partial x}(0)$ is a $(C,\zeta)$ 
discrete-time stable matrix, and that $\frac{\partial f}{\partial x}$ is $L$-Lipschitz.
Choose any $\psi \in (0, 1-\zeta)$ and define
$W := \frac{1-\rho}{CL\gamma}(1-\zeta-\psi)$.
Consider the regularized least-squares parameter update rule \eqref{eq:regularized_ls_update}.
Suppose that $\sup_{x,t} \norm{B(x,t)} \leq M$ and
$\sup_{x,t} \norm{Y(x,t)} \leq M$.
With constant probability (say $9/10$),
for any initial condition $x_0$ satisfying $\norm{x_0} \leq B_0$
and noise sequence $\{w_t\}$ satisfying $\sup_{t} \norm{w_t} \leq W$,
we have that for all $T \geq 1$:
\begin{align*}
    \sum_{t=0}^{T-1} \norm{x_t^a}^2 - \norm{x_t^c}^2 \leq \exp\left(\mathrm{poly}\left(\frac{1}{1-\rho},\frac{1}{\psi}, \frac{1}{\mu}, \beta, \gamma, B_0, D, M, W, \lambda, \log(1/\lambda), p \right)\right) \sqrt{T}\log{T} \:.
\end{align*}
The explicit form of the leading constant is given in the proof.
\end{theorem}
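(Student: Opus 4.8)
The plan is to combine the incremental-stability reduction of Section~\ref{sec:app:stability:incremental} with the admissibility bound of Section~\ref{sec:app:stability:admissibility}, following the structure of the proof of Theorem~\ref{thm:regret_ls_reduction} but now restricted to the set of admissible disturbance sequences on which E-$\delta$ISS is available. First I would record that, as observed after \eqref{eq:regularized_ls_update}, the residual $\varphi_t = B_t Y_t \alpha - w_t$, so with $M_t := B_t Y_t$ (hence $\norm{M_t} \leq M^2$) and $\alpha_\star := \alpha$ the update \eqref{eq:regularized_ls_update} is exactly the projected regularized least-squares estimator of Section~\ref{sec:app:stability:admissibility}; since $\norm{w_t}\leq W$ a.s., $\E[w_t]=0$, and the $w_t$ are independent, $\{w_t\}$ is a martingale difference sequence that is conditionally sub-Gaussian with parameter $\sigma \leq W$. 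I would then fix $\psi\in(0,1-\zeta)$ as in the statement, set $B := \frac{CL\gamma}{1-\rho}$, and condition on the intersection of the event of \eqref{eq:parameter_recovery} (Lemma~\ref{lemma:regression_error}, using the persistence-of-excitation condition \eqref{eq:persistence_excitation} with constant $\mu$) and the event of Proposition~\ref{prop:least_squares_admissible} (applied with this $B$, with $\psi$ there taken to be $\psi/2$, with the covariate bound $M^2$ so that $M_+=\max\{M^2,\sqrt{\lambda}\}$, and $\sigma \leq W$); assigning each a failure probability $1/20$, a union bound gives probability at least $9/10$.

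On this event I would first check admissibility: $\sup_t\norm{d_t}\leq 2M^2 D$ since $\hat{\alpha}_t,\alpha\in\calC$, and Proposition~\ref{prop:least_squares_admissible} bounds $\sup_{t\geq1}\max_{0\leq k\leq t-1}\left[ -(t-k)\tfrac{\psi}{2}+B\sum_{s=k}^{t-1}\norm{d_s}\right]$ by an explicit quantity $h_\star$, so the realized sequence satisfies the defining inequality \eqref{eq:admissible_inputs} with $h(\psi/2,B)=h_\star$. Since $\norm{x_0}\leq B_0$ and the adaptive and comparator trajectories share the initial condition $x_0$, I would apply Lemma~\ref{lemma:stability_to_incremental_stability} to $g(x,t):=f(x)+w_t$ with the pair $(x_0,x_0)$ and signal $\{d_t\}$, obtaining that $g$ is $(\beta',\rho',\gamma')$-E-$\delta$ISS for this data with
\begin{align*}
\beta' = \gamma' = C\exp\!\left( \frac{CL\beta}{1-\rho}\Bigl(\beta B_0 + \tfrac{\gamma(W+2M^2 D)}{1-\rho}\Bigr) + h_\star \right) \:, \qquad \rho' = e^{-\psi/2} \:.
\end{align*}
Writing $x_{t+1}^a = g(x_t^a,t)+d_t$ and $x_{t+1}^c = g(x_t^c,t)$, \eqref{eq:e_delta_iss_ineq} gives $\norm{x_t^a-x_t^c}\leq \gamma'\sum_{k=0}^{t-1}(\rho')^{t-1-k}\norm{d_k}$, while E-ISS of $f$ with input $w_t$ gives $\norm{x_t^c}\leq \beta B_0 + \tfrac{\gamma W}{1-\rho}$; hence $\max\{\norm{x_t^a},\norm{x_t^c}\}\leq B_x' := \beta B_0 + \tfrac{\gamma W}{1-\rho} + \tfrac{2\gamma' M^2 D}{1-\rho'}$.

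Then, exactly as in the proof of Theorem~\ref{thm:regret_ls_reduction}, I would factor $\norm{x_t^a}^2-\norm{x_t^c}^2\leq(\norm{x_t^a}+\norm{x_t^c})\norm{x_t^a-x_t^c}$, sum the geometric series, and apply Cauchy--Schwarz to get
\begin{align*}
\sum_{t=0}^{T-1}\norm{x_t^a}^2 - \norm{x_t^c}^2 \leq \frac{2B_x'\gamma'}{1-\rho'}\sqrt{T}\sqrt{\sum_{t=0}^{T-1}\norm{B_t Y_t\tilde{\alpha}_t}^2} \:.
\end{align*}
Finally I would bound $\sum_t\norm{B_t Y_t\tilde{\alpha}_t}^2\leq M^4\sum_t\norm{\tilde{\alpha}_t}^2$, using \eqref{eq:parameter_recovery} and $(a+b)^2\leq 2a^2+2b^2$ to control $\norm{\tilde{\alpha}_t}^2$ by a constant times $\tfrac{\sigma^2 p\log(c_0+c_1 t)}{\lambda+\mu t}+\tfrac{\lambda^2 D^2}{(\lambda+\mu t)^2}$, summing the first term with Proposition~\ref{prop:log_tail_sum} (giving $O(\tfrac{\sigma^2 p}{\mu}\log^2 T)$) and the second trivially ($O(\lambda D^2/\mu)$). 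This yields $\sqrt{\sum_t\norm{B_t Y_t\tilde{\alpha}_t}^2}=O(M^2 W\sqrt{p/\mu}\,\log T)$ and hence the $\sqrt{T}\log T$ rate, with leading constant $\tfrac{2B_x'\gamma'}{1-\rho'}$ times this $O(M^2 W\sqrt{p/\mu})$ factor; since $\gamma'$ and $B_x'$ are exponentials of polynomials in all the problem data (including $C,L,\zeta$ and the admissibility ingredients $T_0,\mu$ hidden in $h_\star$), the leading constant has the stated form, and unwinding $h_\star$ and $\beta'=\gamma'$ gives its explicit value. The hard part is not any single estimate but the bookkeeping: carrying the admissibility constant $h_\star$ of Proposition~\ref{prop:least_squares_admissible} — which itself aggregates the persistence-of-excitation onset $T_0$, the PoE constant $\mu$, the sub-Gaussian parameter, and the self-normalized martingale tail — through the exponential appearing in Lemma~\ref{lemma:stability_to_incremental_stability}, and checking that the requirement $W=\tfrac{1-\rho}{CL\gamma}(1-\zeta-\psi)$ in the theorem is precisely the noise-magnitude hypothesis Lemma~\ref{lemma:stability_to_incremental_stability} needs; the exponential dependence of $(\beta',\rho',\gamma')$ on the problem constants is exactly what forces the constant-probability rather than high-probability conclusion.
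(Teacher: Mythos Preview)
Your proposal is correct and follows essentially the same route as the paper: condition on the parameter-recovery event \eqref{eq:parameter_recovery}, invoke Proposition~\ref{prop:least_squares_admissible} to certify admissibility, feed this into Lemma~\ref{lemma:stability_to_incremental_stability} to obtain E-$\delta$ISS constants $(\beta',\rho',\gamma')$, and then run the Cauchy--Schwarz reduction of Theorem~\ref{thm:regret_ls_reduction} together with Proposition~\ref{prop:log_tail_sum} to sum the $\norm{\tilde{\alpha}_t}^2$. Two minor inefficiencies relative to the paper: (i) the event of Proposition~\ref{prop:least_squares_admissible} is not a second random event but is implied deterministically by \eqref{eq:parameter_recovery}, so no union bound is needed; (ii) the paper bounds $\norm{x_t^a}$ directly from E-ISS of $f$ with input $w_t + B_tY_t\tilde{\alpha}_t$, yielding the polynomial constant $B_x = \beta B_0 + \tfrac{\gamma(W+2DM^2)}{1-\rho}$, whereas your $B_x'$ routes through E-$\delta$ISS and inherits the exponential $\gamma'$ --- harmless here since the leading constant is already exponential, but worth noting.
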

\begin{proof}
First we establish state bounds on the algorithm $x_t^a$ 
and the comparator $x_t^c$.
Define $B_x := \beta B_0 + \frac{\gamma(W + 2DM^2)}{1-\rho}$.
By E-ISS \eqref{eq:e_iss_ineq},
\begin{align*}
    \norm{x_t^c} \leq \beta \rho^t \norm{x_0} + \gamma \sum_{k=0}^{t-1} \rho^{t-1-k} \norm{w_k} \leq \beta \norm{x_0} + \frac{\gamma W}{1-\rho} \leq B_x \:.
\end{align*}
Similarly:
\begin{align*}
    \norm{x_t^a} \leq \beta \rho^t \norm{x_0} + \gamma \sum_{k=0}^{t-1} \rho^{t-1-k} \norm{w_k + B_k Y_k \tilde{\alpha}_k} \leq \beta \norm{x_0} + \frac{\gamma(W + 2 D M^2)}{1-\rho} \leq B_x \:.
\end{align*}
Hence:
\begin{align*}
    \sum_{t=0}^{T-1} \norm{x_t^a}^2 - \norm{x_t^c}^2 \leq \sum_{t=0}^{T-1} (\norm{x_t^a} + \norm{x_t^c}) \norm{x_t^a - x_t^c} \leq 2 B_x \sum_{t=0}^{T-1} \norm{x_t^a - x_t^c} \:.
\end{align*}
We suppose that the event prescribed by \eqref{eq:parameter_recovery}
holds.
Since the noise $w_t$ is bounded by $W$ a.s., it is a $W$-sub-Gaussian random vector~(see e.g.,\ Chapter 2 of \cite{wainwright19book}).
Put $M_+ = \max\{M^2, \sqrt{\lambda}\}$ and define $h(\psi, B)$ as:
\begin{align*}
    h(\psi, B) := 4B M_+ D \max\left\{ T_0, \frac{2\lambda}{\mu\psi} BM_+D, \frac{38W p}{\psi^2 \mu} \log\left( \frac{96 M_+^2 W p}{\delta \lambda \psi^2 \mu}\right)   \right\} \:.
\end{align*}
Combining
Lemma~\ref{lemma:stability_to_incremental_stability} and
Proposition~\ref{prop:least_squares_admissible},
we have that 
$g(x_t, t) := f(x_t) + w_t$ is $(\beta', \rho', \gamma')$-E-$\delta$ISS for
initial conditions $(x_0, y_0)$ and signal $\{ B_t Y_t \tilde{\alpha}_t \})$
with constants:
\begin{align*}
    \beta' &= \gamma' = C \exp\left( \frac{CL\beta}{1-\rho} \left(\beta B_0 + \frac{\gamma(W+2DM^2)}{1-\rho}\right) + h\left(\psi/2, \frac{CL\gamma}{1-\rho}\right)\right) \:, \\
    \rho' &= e^{-\psi/2} \:.
\end{align*}
By E-$\delta$ISS \eqref{eq:e_delta_iss_ineq}:
\begin{align*}
    \sum_{t=0}^{T-1} \norm{x_t^a}^2 - \norm{x_t^c}^2 
    &\leq 2B_x\sum_{t=0}^{T-1} \norm{x_t^a - x_t^c} 
    \leq 2B_x\gamma' \sum_{t=0}^{T-1} \sum_{k=0}^{t-1} {\rho'}^{t-1-k} \norm{B_k Y_k \tilde{\alpha}_k} \\
    &\leq \frac{2 B_x \gamma'}{1-\rho'} \sum_{t=0}^{T-1} \norm{B_t Y_t \tilde{\alpha}_t} 
    \leq \frac{2 B_x \gamma'}{1-\rho'} \sqrt{T} \sqrt{ \sum_{t=0}^{T-1} \norm{B_t Y_t \tilde{\alpha}_t}^2 } \:.
\end{align*}
We now bound using Proposition~\ref{prop:log_tail_sum}:
\begin{align*}
    &\sum_{t=0}^{T-1} \norm{ B_t Y_t \tilde{\alpha}_t}^2 \leq M^4 \sum_{t=0}^{T-1} \norm{\tilde{\alpha}_t}^2 \leq 4 M^4 D^2 T_0 + M^4\sum_{t=T_0}^{T-1} \norm{\tilde{\alpha}_t}^2 \\
    &\leq 4M^4 D^2 T_0 + \frac{6M^4W p}{\mu} \sum_{t=T_0}^{T-1} \frac{1}{t} \log\left(\frac{1}{\delta} + \frac{tM^4}{\delta\lambda}\right) + 2\lambda^2 M^4 D^2 \sum_{t=T_0}^{T-1} \frac{1}{(\lambda + \mu t)^2} \\
    &\leq 4M^4 D^2 T_0^2 + \frac{6M^4W p}{\mu} \left( \log\left(\frac{2M^4}{\delta\lambda}\right) (\log(T-1) - \log(T_0 - 1)) + \frac{1}{2}(\log^2(T-1) - \log^2(T_0-1)) \right) \\
    &\qquad + \frac{2\lambda^2 M^4 D^2}{\mu} \left( \frac{1}{\lambda + \mu (T_0 - 1)} - \frac{1}{\lambda + \mu (T - 1)} \right)  \\
    &\leq 4 M^4 D^2 T_0^2 + \frac{6M^4W p}{\mu} \log\left(\frac{2M^4}{\delta\lambda}\right) \log{T} + \frac{3 M^4 W p}{\mu} \log^2{T} + \frac{2 \lambda M^4 D^2}{\mu} \:.
\end{align*}
The claim now follows by combining the previous inequalities.
\end{proof}

We conclude this section on a discussion regarding the 
admissibility of online convex optimization algorithms
with respect to \eqref{eq:admissible_inputs}.
In the context of adaptive control,
the sequence $\{d_t\}$ is given by
$d_t = \norm{B_t Y_t \tilde{\alpha}_t}$.
By Cauchy-Schwarz, we can bound:
\begin{align}
 -(t-k)\psi + B \sum_{s=k}^{t-1} \norm{d_s} \leq -(t-k)\psi + B \sqrt{t-k} \sqrt{\sum_{s=k}^{t-1} \norm{B_t Y_t \tilde{\alpha}_t}^2} \:. \label{eq:admissible_eq}
\end{align}
The term 
$\sum_{s=k}^{t-1} \norm{B_t Y_t \tilde{\alpha}_t}^2$
is closely related to the prediction regret of the online
convex optimization algorithm; in particular, we have
$\sum_{s=0}^{t-1} \E\norm{B_t Y_t \tilde{\alpha}_t}^2 \leq \mathsf{PredictionRegret}(T) = o(T)$.
The key difference, however, is that in order for \eqref{eq:admissible_inputs} to be controlled,
we need the \emph{tail regret}
$\sum_{s=k}^{t-1} \E\norm{B_t Y_t \tilde{\alpha}_t}^2 \leq o(T-k)$
for $k = o(T)$.
To the best of our knowledge, such a guarantee is not achieved by
the online algorithms we consider in this paper.
The tail regret is related to a stronger notion of regret
in the literature known as \emph{strongly adaptive regret} (SA-Regret)~\citep{jun17saregret}.
However, the best known bounds for SA-Regret
scale as $\sqrt{(T-k)\log{T}}$~\citep{jun17saregret}, which is not strong enough 
to ensure that \eqref{eq:admissible_eq} remains finite 
when $k = o(T)$ due to the presence of the $\log{T}$ term.
It remains open whether or not an online algorithm is
capable of producing admissible sequences with respect to
\eqref{eq:admissible_inputs} without requiring parameter convergence.

\end{document}